%% file: main.tex
\documentclass{article} 
\usepackage{iclr2027_conference,times}

\input{math_commands.tex}

\usepackage{hyperref}
\usepackage{url}
\usepackage[utf8]{inputenc}
\usepackage[T1]{fontenc}
\usepackage{xcolor}
\usepackage{booktabs}
\usepackage{amsfonts,amssymb}
\usepackage{microtype}
\usepackage{graphicx}
\usepackage{subcaption}
\usepackage{float}
\usepackage{amsthm,mathtools,bm}
\usepackage{array,multirow}
\usepackage{colortbl}
\usepackage{enumitem}
\usepackage{tcolorbox}
\usepackage{makecell}
\usepackage{placeins}
\usepackage{titletoc}
\definecolor{LinkBlue}{RGB}{20,83,145}
\hypersetup{colorlinks=true,allcolors=LinkBlue}

\theoremstyle{plain}
\newtheorem{theorem}{Theorem}[section]
\newtheorem{proposition}{Proposition}[section]
\newtheorem{lemma}{Lemma}[section]

\theoremstyle{definition}
\newtheorem{assumption}{Assumption}[section]
\newtheorem{definition}{Definition}[section]
\theoremstyle{remark}

\providecommand{\E}{\mathbb{E}}
\newcommand{\tr}{\operatorname{Tr}}
\providecommand{\Cov}{\operatorname{Cov}}
\newcommand{\LNWA}{\textnormal{\textsc{LNWA}}}
\newcommand{\LAWA}{\textnormal{\textsc{LAWA}}}
\newcommand{\WMA}{\textnormal{\textsc{WMA}}}
\newcommand{\EMA}{\textnormal{\textsc{EMA}}}

\title{Understanding the Weight Averaging Mechanism in LLM Training for Post-Training Quantization}

\author{%
  {\bfseries
    Hanzhang Wang\textsuperscript{1}\quad
    Tianqi Shen\textsuperscript{1}\quad
    Zonglin Liu\textsuperscript{1}\quad
    Junze He\textsuperscript{1}\quad
    Difan Zou\textsuperscript{2}\quad
    Ziye Ma\textsuperscript{1}%
    \thanks{\normalfont Corresponding author.
    Difan Zou and Ziye Ma supervised this work.}
  }\\[5pt]
  {\normalfont
    \textsuperscript{1}City University of Hong Kong \quad \textsuperscript{2}The University of Hong Kong
    }\\[4pt]
  {\normalfont\footnotesize
    \texttt{hanzwang6-c@my.cityu.edu.hk}\quad
    \texttt{dzou@cs.hku.hk}\quad
    \texttt{ziyema@cityu.edu.hk}
  }
}

\iclrfinalcopy 
\begin{document}

\maketitle
\lhead{Preprint}

\begin{abstract}
Large language models (LLMs) are typically pretrained in high precision but increasingly deployed with low-precision post-training quantization (PTQ). Recent studies have shown that using weight averaging during pretraining can improve PTQ performance compared with learning-rate decay, suggesting that it might provide a simple way to improve the pretraining-to-quantization transition. But the mechanism behind weight averaging remains insufficiently explained. This leads to inconsistent and fragile performance gains, thereby preventing practitioners from applying such a technique confidently. As a response, we formulate weight averaging as a trade-off between retaining training progress and improving robustness under perturbation. We further derive a continuous family of averaging kernels that unifies conventional strategies and achieves the Pareto frontier between the two competing goals. Critically, a theoretical framework for performing weight averaging under PTQ is developed. It can be shown that coarser quantization is more susceptible to perturbations, whereas finer quantization could be less affected. Thus, our results could provide unified theoretical guidance for performing weight averaging under different PTQ conditions. Experiments validate both the predicted behavior and the proposed averaging strategy. Code is available at
\href{https://github.com/MOFA-LAB/weight-averaging-for-ptq}{https://github.com/MOFA-LAB/weight-averaging-for-ptq}.
\end{abstract}

\section{Introduction}
\label{sec:introduction}
Large language models (LLMs) are typically pretrained at relatively high
numerical precision \citep{workshop2022bloom, team2025gemma, team2025kimi}. As model scale continues to grow, post-training
quantization (PTQ) has become increasingly important for reducing
deployment costs by converting pretrained models to lower precision \citep{xiao2023smoothquant, shao2026dartquant,li2025gptaq}.
This mismatch between training and deployment raises a key challenge at the training-to-quantization
transition: better performance before quantization does not necessarily
translate into better performance afterward \citep{ouyang2025low, kumar2025scaling, chen2026coverage, catalan2026training}.

Prior work has shown that training dynamics affect both training performance
and sensitivity to quantization \citep{ahmadian2023intriguing, wang2025learning, catalan2026training, luo2026learning}. Late-stage learning-rate
(LR) decay can further reduce training loss, but may move the model
into sharper regions that are more sensitive \citep{watts2026sharpness, microsoft-ai-team-mai-thinking-1-building-a-hill-climbing-machine-2024, yano2026pre}. By contrast, under a stable LR, after completing $T$ training steps,
one can retain the last $M$ weights,
$\bm\theta_T,\ldots,\bm\theta_{T-M+1}$, where $M$ denotes the number
of retained weights. We index them by their age
$d\in\{0,\ldots,M-1\}$, with $d=0$ corresponding to the most recent
weight, and average them as
\begin{equation}
    \bm\theta_T^{\bm w}
    =
    \sum_{d=0}^{M-1}w_d\bm\theta_{T-d},
    \qquad
    w_d\geq0,\quad
    \sum_{d=0}^{M-1}w_d=1,
    \label{eq:intro1}
\end{equation}
where $\bm w=(w_0,\ldots,w_{M-1})$ is the averaging kernel, with different
choices corresponding to different averaging strategies.
Recent work shows that such weight averaging can achieve high-precision
performance comparable to LR decay while improving PTQ performance
\citep{sanyal2023early, sandler2023training, tian2026wsm}. However, existing evidence for the benefits of weight averaging
at the pretraining-to-quantization transition remains largely empirical
\citep{ajroldi2025and,catalan2026training,li2026model},
and a theoretical understanding of how the averaging strategy
affects PTQ is still lacking. This gap leaves unclear why weight averaging benefits PTQ
and when a particular averaging kernel is effective. As a consequence, blindly applying any weight averaging strategy now cannot guarantee improved downstream PTQ performance; and in some cases, it might be even worse. This is exemplified by our discovery that the same
averaging strategy could outperform another at higher precision
but instead fall behind at lower precision. Specifically, Fig.~\ref{fig:motivation}(a) compares linearly weighted \WMA{} with uniformly weighted \LAWA{} on OPT-125M, showing that their performance ranking reverses between 4-bit and 2-bit PTQ. In short, favorable empirical findings may not generalize across settings, and extensive trial-and-error is usually required. This strongly motivates a unified theoretical framework to understand
how weight averaging in pretraining affects later quantization, thereby providing
principled guidance for averaging strategy selection across settings.

\begin{figure}[t]
  \centering
  \includegraphics[width=\linewidth]{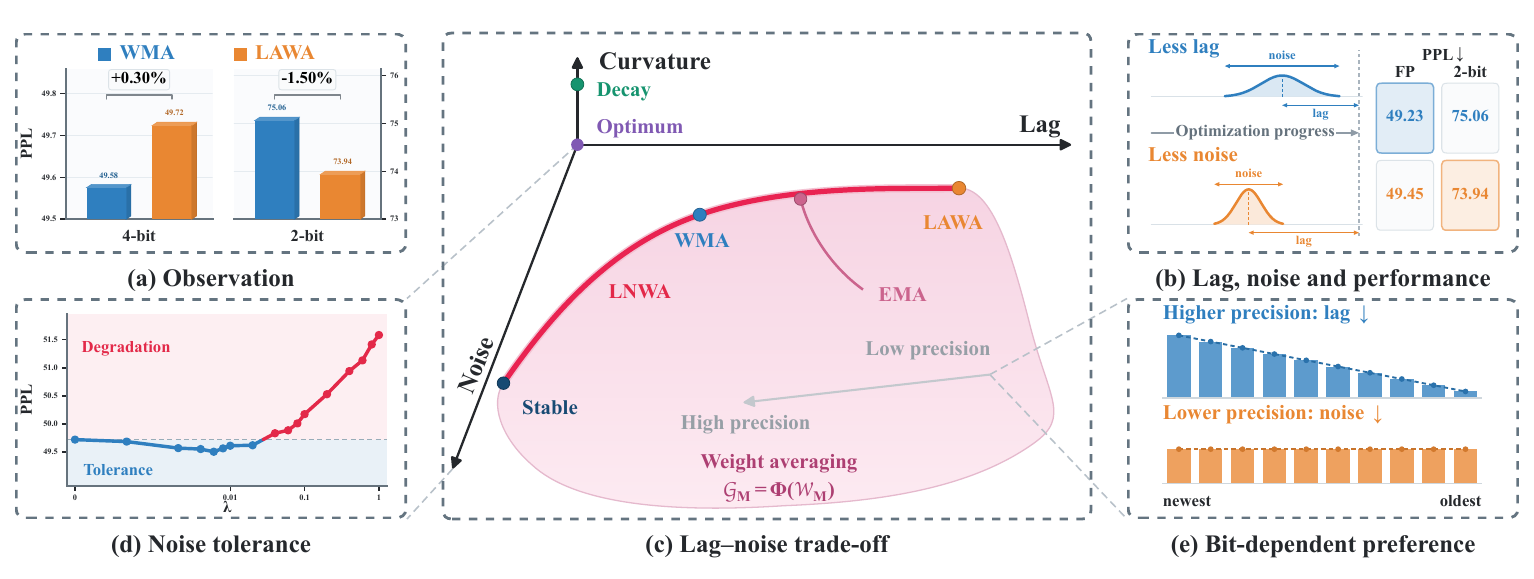}
  \caption{Overview of our work.
  (a) Different averaging strategies' rankings reverse between 4-bit and 2-bit PTQ.
  (b) Less lag favors better current performance (FP), while less noise favors greater robustness to quantization (2-bit).
  (c) We formulate weight averaging as a lag--noise trade-off and derive its
  Pareto frontier, \LNWA{}. 
  (d) Reduced lag can offset additional noise within a finite tolerance interval; beyond it, further noise leads to degradation.
  (e) The preferred balance reverses with precision, with lower precision
  favoring less noise and higher precision favoring less lag.}
  \label{fig:motivation}
\end{figure}

Building upon prior art, we believe it is useful to evaluate a training trajectory through its genuine progress towards improved model abilities and its robustness to different kinds of noise and perturbations \citep{morales2024exponential, block2025ema}.
Specifically,
we can decompose each retained weight as $\bm\theta_{t}=\bm\mu_t+\bm\epsilon_t$ at training step $t$, where the latent mean state $\bm\mu_t$ denotes the expectation of
$\bm\theta_t$ across repeated training runs over training randomness,
with $\bm\mu_T$ serving as an idealized noise-free final model, and
$\bm\epsilon_t$ denotes the corresponding zero-mean fluctuation arising
from mini-batch sampling, data ordering, and other sources of training
randomness.
Applying weight averaging in Eq.~(\ref{eq:intro1}) to the decomposition yields
\begin{equation}
\begin{aligned}
  \bm\theta_T^{\bm w}-\bm\mu_T
  &= -\underbrace{\left(\bm\mu_T-
      \sum_{d=0}^{M-1}w_d\bm\mu_{T-d}\right)}_{\text{temporal lag}}
     +\underbrace{\sum_{d=0}^{M-1}w_d\bm\epsilon_{T-d}}_{\text{spatial noise}}.
\end{aligned}
\label{eq:intro2}
\end{equation} 
Thus, the deviation of the averaged model from the idealized final model \(\bm\mu_T\) decomposes into temporal lag and spatial noise. Temporal lag captures the genuine progress sacrificed by weight averaging, which can impair the model's performance at the transition to PTQ. At the same time, weight averaging suppresses spatial noise by reducing the variance of stochastic fluctuations, thereby improving robustness to PTQ, as illustrated
in Fig.~\ref{fig:motivation}(b). The averaging
kernel thus determines the balance between these two effects.
We then derive
Lag--Noise Trade-off Weight Averaging (\LNWA{}), a closed-form kernel
family that parameterizes the Pareto frontier of the
lag--noise trade-off and includes classical averaging kernels as special cases, as shown in Fig.~\ref{fig:motivation}(c).

Building on this formulation, we develop a theoretical framework for weight averaging strategy selection under PTQ. We first study how moving along the lag--noise frontier affects PTQ performance at a fixed precision, quantifying how strongly the temporal lag could be reduced before PTQ performance degrades (Fig.~\ref{fig:motivation}(d)). Within this tolerance region, we then vary the quantization precision and show that lower precision favors less spatial noise, whereas higher precision favors less temporal lag (Fig.~\ref{fig:motivation}(e)); we further characterize the quantization precision threshold at which this preference reverses. Together, we extend conventional method-specific weight averaging selection into a unified theoretical framework under different PTQ conditions. Finally, experiments on LLMs from 125M to 1.3B parameters validate our theory.
Our contributions can be summarized as follows:
\begin{enumerate}

\item We establish a theoretical framework for understanding weight averaging through a two-way decomposition in Eq.~(\ref{eq:intro2}). Specifically, we separate its effects into temporal lag, which captures inherent training progress, and spatial noise, which captures the solution's robustness to perturbations. We show that these two objectives are in competitive tension, and a careful trade-off is necessary. 

\item A new class of weight averaging strategies called \LNWA{} is proposed based on the aforementioned framework. It operates on the Pareto frontier between temporal lag and spatial noise (Theorem~\ref{thm:main-frontier}) and smoothly interpolates between them. This class generalizes existing strategies (see Table~\ref{tab:app-classical-kernels}) and can achieve any desired Pareto-optimal trade-off between the two objectives.

\item We further provide a theoretical analysis of how the lag--noise balance affects PTQ performance (Sections~\ref{sec:noise-tolerance} and~\ref{sec:bit-preference}). We show that this balance depends critically on quantization precision (Theorems~\ref{thm:ptq-turnover} and~\ref{thm:bit-ordering}) and therefore should not be selected universally. These results provide principled guidance for performing weight averaging across PTQ settings.

\end{enumerate}

\section{Related Work}
\label{sec:related-work}

\textbf{Learning-rate decay.}
Late-stage LR decay is widely used in LLM pretraining to further improve final performance
\citep{hu2024minicpm,hagele2024scaling}.
Prior work has mainly focused on understanding why LR decay benefits pretraining \citep{wen2025understanding,li2026functional,li2026optimal,liu2026effective}. However, recent studies show that LR decay can also affect how well the resulting model transfers to subsequent stages
\citep{chen2026coverage}.
Similar effects have been observed across various downstream tasks
\citep{wang2025learning,yano2026pre,microsoft-ai-team-mai-thinking-1-building-a-hill-climbing-machine-2024,catalan2026training},
suggesting that lower final pretraining loss does not necessarily imply a better transition to downstream adaptation.

\textbf{Post-training quantization. }
PTQ converts a model to lower precision, reducing
storage and inference costs \citep{chee2023quip}. Various PTQ methods have been proposed \citep{nagel2020up,lin2024awq, ashkboos2024quarot,dettmers2022gpt3,egiazarian2024extreme}, and error compensation is one of the most well-established approaches \citep{yao2024exploring, zhang2025qera}, with GPTQ as a widely used representative \citep{frantar2022gptq}. GPTQ reduces layerwise reconstruction error through second-order updates to the remaining unquantized weights \citep{frantar2022gptq}. Recent work has further improved PTQ through a range of transformation, clipping, and reconstruction techniques \citep{liu2025spinquant,sun2024flatquant,shao2024omniquant,tseng2024quip,zhang2024lqer}.

\textbf{Weight averaging. }
Classical Polyak--Ruppert averaging established the variance-reduction benefit of averaging iterates \citep{polyak1992acceleration}. Motivated by it, stochastic weight averaging (SWA) showed that averaging weights along the training trajectory can lead to wider optima and better generalization \citep{izmailov2018averaging}. Moreover, \LAWA{} \citep{sanyal2023early} uniformly averages recent weights sparsely, while \EMA{} \citep{morales2024exponential} and \WMA{} \citep{li2026model} assign larger coefficients to more recent weights through exponential and linear weighting, respectively. Recent work has applied \LAWA{} to the pretraining-to-quantization transition, showing that the averaged models can match or outperform LR decay after PTQ \citep{catalan2026training}.

\section{Preliminaries}
\label{sec:preliminaries}

\paragraph{Notation.}
Scalars are denoted by italic letters (e.g., $f$), vectors by bold lowercase
letters (e.g., $\bm{f}$), matrices by bold uppercase letters
(e.g., $\bm{F}$), and sets by calligraphic uppercase letters
(e.g., $\mathcal F$).

Consider a fixed window of retained model weights
$\bm\theta_T,\ldots,\bm\theta_{T-M+1}\in\mathbb R^P$ ($M\geq2$), where $T$ is the training steps, $P$ is the
number of parameters and age $d$ counts backward from the newest retained weight at $d=0$. Let
$\bm w=(w_0,\ldots,w_{M-1})$ denote the averaging kernel, where $w_d$ is
the averaging coefficient assigned to the weight
$\bm\theta_{T-d}$. Its feasible set $\mathcal W_M$ and the resulting
averaged model $\bm\theta_T^{\bm w}$ are defined as
\begin{equation}
  \mathcal W_M=\left\{\bm w\in\mathbb R^M:
    w_d\geq0,\ \sum_{d=0}^{M-1}w_d=1\right\},
  \qquad
  \bm\theta_T^{\bm w}=\sum_{d=0}^{M-1}w_d\bm\theta_{T-d}.
  \label{eq:main-averaging}
\end{equation}
Thus $\bm\theta_T^{\bm w}$ lies in the convex hull of the retained weights. 
We first define the latent mean $\bm\mu_t$, which provides an idealized noise-free reference to separate training progress from spatial noise.

\begin{definition}[Latent-mean Trajectory]
\label{def:main-latent-mean}
Let $\mathcal F_0$ be the sigma-field generated by the information shared
across stochastic training runs, such as the model configuration, common
initialization, and training setup. Let $\xi$ index the run-specific
stochasticity, such as data ordering and dropout. For the aligned state
$\bm{\theta}_{T-d}(\xi)$, assume $
\E[
\|\bm{\theta}_{T-d}(\xi)\|_2^2
\mid\mathcal F_0
]
<\infty
$. The conditional mean state $\bm\mu_t$ and the zero-mean spatial noise
$\bm\epsilon_t$ at the training step $t$ of an individual run are
\begin{equation}
  \bm\mu_t:=\E_\xi[\bm\theta_{t}(\xi)\mid\mathcal F_0],
  \qquad
  \bm\epsilon_t(\xi):=\bm\theta_{t}(\xi)-\bm\mu_t.
 \label{eq:main-latent-mean}
\end{equation}
For an averaging kernel $\bm w$, write
$\bm\mu_T^{\bm w}:=\sum_{d=0}^{M-1}w_d\bm\mu_{T-d}$ for the mean of
$\bm\theta_T^{\bm w}$, and write $\bm\epsilon_T^{\bm w}:=\sum_d w_d\bm\epsilon_{T-d}$ for the
averaged spatial noise. 
\end{definition}

Besides, we model how PTQ reduces the reconstruction error. Let $\bm\theta^{(\ell,i)}\in\mathbb R^{d_{\mathrm{in}}}$ denote the parameter of output channel $i$ in layer $\ell$, extracted from $\bm\theta$. Quantization changes $\bm\theta^{(\ell,i)}$ by
$\Delta\bm\theta^{(\ell,i)}$, so the output error is $\bigl(\Delta\bm\theta^{(\ell,i)}\bigr)^\top\bm X$ for input activations $\bm X\in\mathbb R^{d_{\mathrm{in}}\times n}$. Its mean squared reconstruction error is
\begin{equation}
 \frac1n\|\bigl(\Delta\bm\theta^{(\ell,i)}\bigr)^\top\bm X\|_2^2
 =\bigl(\Delta\bm\theta^{(\ell,i)}\bigr)^\top\bm G^{(\ell,i)}\Delta\bm\theta^{(\ell,i)}
 =\|\Delta\bm\theta^{(\ell,i)}\|_{\bm G^{(\ell,i)}}^2,
 \label{eq:main-layer-reconstruction}
\end{equation}
where $\bm G^{(\ell,i)}:=\bm X\bm X^\top/n$. GPTQ \citep{frantar2022gptq} reduces errors by adjusting parameters that
remain unquantized. We represent this procedure with two parameter groups:
quantize the first group, use the second group to reduce the resulting
output error, and then quantize the second group. For a parameter vector $\bm z$, $\operatorname{quant}_b(\bm z)$ denotes
scale-and-zero-point quantization at precision $b$ using
$\operatorname{round}$. 

\begin{definition}[Simplified GPTQ Model]
\label{def:main-calibration}
Partition $\bm\theta^{(\ell,i)}$ into
$\bm\theta_{\mathrm{f}}^{(\ell,i)}\in\mathbb R^{d_{\mathrm{f}}}$ and
$\bm\theta_{\mathrm{r}}^{(\ell,i)}\in\mathbb R^{d_{\mathrm{r}}}$,
where $d_{\mathrm{f}}+d_{\mathrm{r}}=d_{\mathrm{in}}$ and
$d_{\mathrm{f}},d_{\mathrm{r}}\geq1$.
Let $\bm X_{\mathrm{f}},\bm X_{\mathrm{r}}$ be the corresponding row
blocks of $\bm X$, and partition $\bm G^{(\ell,i)}$ accordingly.
First quantize
$\bm q_{\mathrm{f}}^{(\ell,i)}
=\operatorname{quant}_b(\bm\theta_{\mathrm{f}}^{(\ell,i)})$,
with quantization error $\bm e_{\mathrm{f}}^{(\ell,i)}
:=\bm q_{\mathrm{f}}^{(\ell,i)}-\bm\theta_{\mathrm{f}}^{(\ell,i)}$.
Then use the remaining group to minimize the reconstruction error caused by $\bm e_{\mathrm{f}}^{(\ell, i)}$:
\begin{equation}
 \bm c_{\mathrm{r}}^{(\ell,i),\star}
 =\operatorname*{argmin}_{\bm c_{\mathrm r}\in\mathbb R^{d_{\mathrm{r}}}}
  \frac1n\left\|
   \bigl(\bm e_{\mathrm{f}}^{(\ell,i)}\bigr)^\top\bm X_{\mathrm{f}}
       +\bm c_{\mathrm r}^\top\bm X_{\mathrm{r}}\right\|_2^2.
 \label{eq:main-calibration-objective}
\end{equation}
Then quantize the corrected remaining group:
\begin{equation*}
 \bm q_{\mathrm{r}}^{(\ell,i)}
 =\operatorname{quant}_b\!\left(\bm\theta_{\mathrm{r}}^{(\ell,i)}
                         +\bm c_{\mathrm{r}}^{(\ell,i),\star}\right).
\end{equation*}
The quantized $\bm\theta^{(\ell,i)}$ is
$\bm q^{(\ell,i)}=
 ((\bm q_{\mathrm{f}}^{(\ell,i)})^\top,
  (\bm q_{\mathrm{r}}^{(\ell,i)})^\top)^\top$.
At precision $b$, applying these steps to every output channel of every linear
layer gives the full-model simplified GPTQ $Q_b(\bm\theta)$ (see Appendix~\ref{app:ptq}). 
\end{definition}
\begin{figure}[t]
  \centering
  \includegraphics[width=\textwidth]{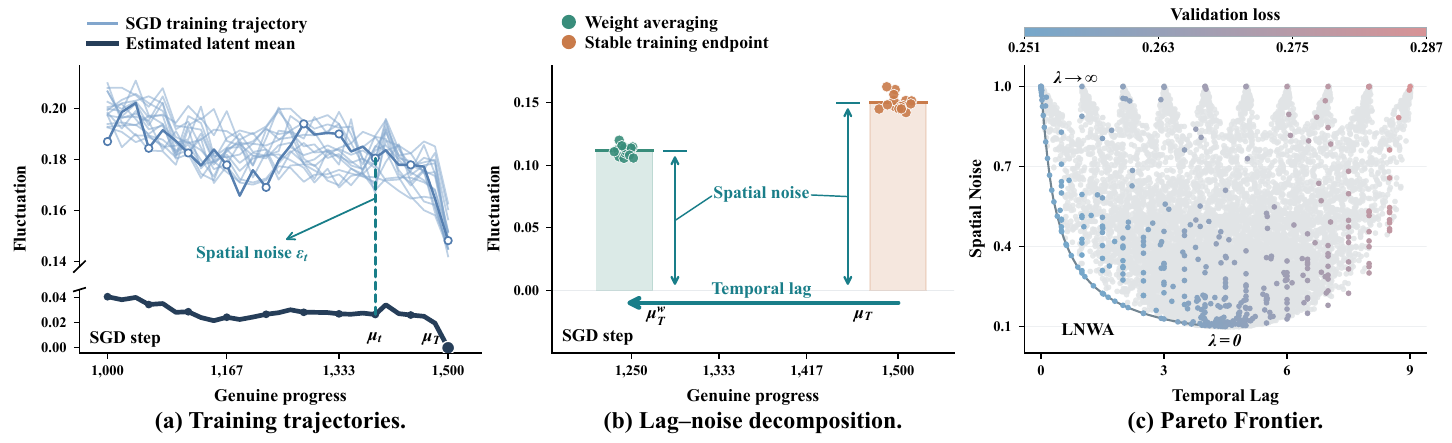}
  \caption{Illustrative visualizations with a two-layer MLP on MNIST.
  (a) SGD training trajectories with different data orders and their estimated latent mean.
  (b) Weight averaging reduces spatial noise but introduces temporal lag
  compared with the stable training endpoint.
  (c) \LNWA{} characterizes the Pareto frontier between temporal lag and
  spatial noise. Colors indicate validation loss.}
  \label{fig:lag-noise-mechanism}
\end{figure}

\section{Lag--Noise Trade-off}
\label{sec:method}

In this section, we formulate weight averaging as a trade-off between temporal lag and spatial noise and derive its Pareto frontier (Section~\ref{sec:lag-noise-geometry}). We then connect lag and noise to PTQ and further characterize the tolerance-to-degradation transition at a fixed precision (Section~\ref{sec:noise-tolerance}) and the precision-dependent shift in the preferred lag--noise balance (Section~\ref{sec:bit-preference}).

\subsection{Lag--noise decomposition and the \LNWA{} frontier}
\label{sec:lag-noise-geometry}

Using the latent-mean trajectory in
Definition~\ref{def:main-latent-mean}, a retained weight can be written as
\(
\bm\theta_{T-d}=\bm\mu_{T-d}+\bm\epsilon_{T-d}
\). As illustrated in Fig.~\ref{fig:lag-noise-mechanism}(a), across different data orders, the individual training trajectories fluctuate around a shared latent-mean trajectory. The latent mean captures the progress that is shared across runs, while the difference between an individual trajectory and the latent mean captures the fluctuation caused by run-specific randomness.
Weight averaging therefore acts on both components. Let
\(
\bm v_t:=\bm\mu_t-\bm\mu_{t-1},
\quad T-M+2\leq t\leq T,
\)
denote the latent progress between adjacent retained weights.
To quantify the effects of temporal lag  and  spatial noise in Eq.~(\ref{eq:intro2}), define
\[
    m(\bm w):=\sum_{d=0}^{M-1} d w_d,
    \qquad
    R(\bm w):=\sum_{d=0}^{M-1} w_d^2.
\]
Here, $m(\bm w)$ is the mean age of the averaged weights, which measures temporal lag; $R(\bm w)$ is the squared $\ell_2$ norm of the averaging kernel, which determines the amount of spatial-noise variance retained after averaging.
Lemma~\ref{lem:app-scalar} then gives
\begin{equation}
    \bm\theta_T^{\bm w}-\bm\mu_T
    =
    -m(\bm w)\bm v_T
    +\bm\epsilon_T^{\bm w}
    +\bm e_{\bm w}^{\mathrm{traj}},
    \label{eq:main-decomposition}
\end{equation}
where $\bm e_{\bm w}^{\mathrm{traj}}$ is a bounded trajectory remainder, and
\(
    \Cov(\bm\epsilon_T^{\bm w}\mid\mathcal F_0)
    =R(\bm w)\bm\Sigma.
\)
Fig.~\ref{fig:lag-noise-mechanism}(b) visualizes how averaging acts on
spatial noise and temporal lag. Weight averaging reduces the fluctuation across
training runs (spatial noise), while shifting the averaged latent mean
$\bm\mu_T^{\bm w}$ away from the latest latent mean $\bm\mu_T$ along the
direction of genuine progress (temporal lag). These two effects therefore
form a fundamental trade-off in weight averaging. We formalize this trade-off
by minimizing retained spatial noise under a fixed temporal-lag budget
(Lemma~\ref{lem:app-budget}). As the budget varies, the resulting solutions
trace the lag--noise Pareto frontier. Thus, we can parameterize it with a one-parameter
family of averaging kernels termed Lag--Noise Trade-off Weight Averaging
(\LNWA{}).

\begin{theorem}[\LNWA{} Kernel]
\label{thm:main-frontier}
For $0\leq\lambda<\infty$, let
\begin{equation}
    \bm w_\lambda
    :=
    \arg\min_{\bm w\in\mathcal W_M}
    \left\{
        R(\bm w)+\lambda m(\bm w)^2
    \right\}.
    \label{eq:main-lnwa-objective}
\end{equation}
For each finite $\lambda$, the unique minimizer has an active prefix
$S_q=\{0,\ldots,q-1\}$ with $2\leq q\leq M$, and its coefficients are
\begin{equation}
  (\bm w_\lambda)_d=
  \begin{cases}
    q^{-1}+\lambda m_q(\lambda)(\bar d_q-d),&0\leq d<q,\\
    0,&q\leq d<M,
  \end{cases}
  \label{eq:main-lnwa-kernel}
\end{equation}
where $m_q(\lambda)=\bar d_q/(1+\lambda D_q)$,
$\bar d_q=(q-1)/2$, and $D_q=q(q^2-1)/12$.
Define $\bm w_\infty:=\lim_{\lambda\to\infty}\bm w_\lambda
=\bm e_0=(1,0,\ldots,0)^\top$, the Stable endpoint.
The extended family $\{\bm w_\lambda\}_{0\leq\lambda\leq\infty}$
parameterizes the complete Pareto frontier between temporal lag and
spatial noise.
\end{theorem}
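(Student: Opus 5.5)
The objective $F_\lambda(\bm w)=R(\bm w)+\lambda m(\bm w)^2$ is a strictly convex quadratic, since $R$ is strictly convex and $m^2$ is convex. The feasible set $\mathcal W_M$ is a compact convex polytope. Hence a minimizer exists and is unique for every finite $\lambda\ge0$. I would characterize it through the KKT conditions, which are necessary and sufficient here. With multiplier $\nu$ for $\sum_d w_d=1$ and $\eta_d\ge0$ for $w_d\ge0$, stationarity reads
\begin{equation*}
2w_d+2\lambda m(\bm w)\,d=\nu+\eta_d,\qquad \eta_d w_d=0 .
\end{equation*}
On the support, $w_d=\tfrac{\nu}{2}-\lambda m d$ is affine and nonincreasing in $d$, because $m\ge0$. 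Off the support, $\lambda m d\ge \nu/2$. Since the right-hand side grows with $d$, the support is automatically a prefix $S_q=\{0,\dots,q-1\}$. So $w_d=\max\{0,\nu/2-\lambda m d\}$, a water-filling structure.

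Next I would solve on a prefix of length $q$. Normalization gives $\nu/2=q^{-1}+\lambda m\bar d_q$, which yields $w_d=q^{-1}+\lambda m(\bar d_q-d)$. Consistency requires $m=\sum_{d<q}d\,w_d$. Using $\sum_{d<q}d(\bar d_q-d)=-D_q$, this becomes $m=\bar d_q-\lambda m D_q$, so $m=m_q(\lambda)=\bar d_q/(1+\lambda D_q)$ as claimed. To show $q\ge2$: if $q=1$, then $\bm w=\bm e_0$ with $m=0$, and the KKT inequality for $d=1$ would need $0\ge\nu/2=1$, which is false. The index $q$ is the largest one for which $w_{q-1}>0$ (or $q=M$). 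Equivalently, $q$ is the largest $q$ for which $q^{-1}-\lambda m_q(\lambda)\bar d_q>0$. Uniqueness of the minimizer makes this selection well defined, and I would verify that the dual inequality for $d\ge q$ holds automatically: $\lambda m d\ge \lambda m q\ge\nu/2$, since $w_q$ extrapolated is $\le0$. As $\lambda\to\infty$, $q$ remains at least 2, but $w_0\to 1$. Indeed $m_q(\lambda)\to0$, and the positivity constraint forces $q$ down to 2 with $w_1=\tfrac12-\lambda m_2\cdot\tfrac12=\tfrac{1}{2+\lambda}\to 0$. This gives $\bm w_\infty=\bm e_0$.

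For the Pareto claim I would invoke Lemma~\ref{lem:app-budget}: the frontier is $\phi(\mu)=\min\{R(\bm w): \bm w\in\mathcal W_M,\ m(\bm w)\le\mu\}$ for $\mu\in[0,\bar d_M]$. Its endpoints are $\bm e_0$ at $\mu=0$ and the uniform kernel at $\mu=\bar d_M$ (the unconstrained minimizer of $R$, which is $\bm w_0$). The function $\phi$ is convex and strictly decreasing on this interval, and the budget constraint is active. By Lagrangian duality for a convex program with Slater's condition (any interior $\mu>0$), each frontier point minimizes $R+\alpha m$ for some $\alpha\ge0$. I then need to convert this linear scalarization into the quadratic one $R+\lambda m^2$. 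The KKT systems coincide when $\alpha=2\lambda m(\bm w)$. Therefore the map $\lambda\mapsto m(\bm w_\lambda)$ must be shown continuous, nonincreasing, and onto $(0,\bar d_M]$. Continuity follows from uniqueness plus Berge's maximum theorem; the value at $\lambda=0$ is $\bar d_M$, and the limit at $\infty$ is $0$.

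The main obstacle is this surjectivity and monotonicity, i.e. showing that every lag budget is attained by exactly one $\lambda$. I would prove monotonicity by the standard exchange argument. Comparing the optimality of $\bm w_\lambda$ and $\bm w_{\lambda'}$ and adding the two inequalities gives $(\lambda-\lambda')(m_\lambda^2-m_{\lambda'}^2)\le0$. Then continuity and the intermediate value theorem give surjectivity. Finally, each $\bm w_\lambda$ is Pareto optimal. If some $\bm w$ had $R(\bm w)\le R(\bm w_\lambda)$ and $m(\bm w)\le m(\bm w_\lambda)$ with one inequality strict, then $\bm w$ would strictly reduce $F_\lambda$ (or tie it, contradicting uniqueness). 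Adding the endpoint $\lambda=\infty$ covers $\mu=0$.
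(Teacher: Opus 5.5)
Your proposal is correct and follows essentially the paper's route. Both arguments use KKT water-filling to get a prefix support, then normalization plus the self-consistency equation $m=\bar d_q-\lambda mD_q$ for the closed form, then the lag-budget minimizers matched to the \LNWA{} path through $\alpha=2\lambda m$ (the paper's $\lambda=\gamma_q/\ell$), and the same exchange argument for monotonicity. The differences are minor: you rule out $q=1$ with the dual inequality at $d=1$ rather than the explicit perturbation toward $\bm e_1$, and your Berge/intermediate-value surjectivity step is redundant, since the multiplier match already gives the required $\lambda=\alpha/(2\mu)$ for every budget $\mu\in(0,\bar d_M]$.
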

\begin{table}[t]
  \centering
  \caption{Relation between classical averaging kernels and \LNWA{}.
  \LAWA{} and Stable are the two endpoints, and \WMA{} is an exact interior
  point. \EMA{} is exact when $M=2$. For $M>2$, $\lambda_{\EMA}$ gives a local approximation with
  error $\mathcal O(\eta^2M)$, where $\eta=-\log\beta$.}
  \label{tab:app-classical-kernels}
  \scriptsize
  \setlength{\tabcolsep}{3.5pt}
  \renewcommand{\arraystretch}{1.22}
  \resizebox{\textwidth}{!}{%
  \begin{tabular}{lcccc}
    \toprule
    Methods & Kernel $w_d$ & Lag $m(w)$ & Noise $R(w)$ & Exact match \\
    \midrule
    \LAWA{}
      & $\displaystyle 1/M$
      & $\displaystyle (M-1)/2$
      & $\displaystyle 1/M$
      & $\checkmark$ \\[2mm]
    \WMA{}
      & $\displaystyle 2(M-d)/[M(M+1)]$
      & $\displaystyle (M-1)/3$
      & $\displaystyle 2(2M+1)/[3M(M+1)]$
      & $\checkmark$ \\[2mm]
    \EMA{}
      & $\displaystyle (1-\beta)\beta^d/(1-\beta^M)$
      & $\displaystyle
        \frac{\beta-M\beta^M+(M-1)\beta^{M+1}}
             {(1-\beta)(1-\beta^M)}$
      & $\displaystyle
        \frac{(1-\beta)(1+\beta^M)}
             {(1+\beta)(1-\beta^M)}$
      & \makecell{$M=2$: $\checkmark$\\$M>2$: $\mathcal O(\eta^2M)$} \\[2mm]
    Stable
      & $\displaystyle \bm e_0=(1,0,\ldots,0)$
      & $0$
      & $1$
      & $\checkmark$ \\
    \bottomrule
  \end{tabular}%
  }
\end{table}

In particular, for $0\leq\lambda_1<\lambda_2\leq\infty$,
\begin{equation}
    m(\bm w_{\lambda_1})
    >
    m(\bm w_{\lambda_2}),
    \qquad
    R(\bm w_{\lambda_1})
    <
    R(\bm w_{\lambda_2}).
    \label{eq:main-lnwa-ordering}
\end{equation}
Thus, increasing $\lambda$ moves the averaging kernel along the frontier
toward larger spatial noise, while decreasing
$\lambda$ moves it toward larger temporal lag. Lemma~\ref{lem:app-endpoints-support} characterizes how the number of
retained weights assigned nonzero averaging coefficients changes with
$\lambda$, while Lemma~\ref{lem:app-frontier-coordinates} gives explicit
formulas for the corresponding temporal lag and spatial noise. Fig.~\ref{fig:lag-noise-mechanism}(c) visualizes the feasible lag--noise
region induced by different averaging kernels, where \LNWA{} traces its
Pareto boundary. The distribution of validation loss across this region
shows that the optimal averaging strategy depends on
how these two factors are balanced. Several classical averaging kernels can also be viewed as special cases of
the \LNWA{} family: \LAWA{} and Stable form its two endpoints
(Proposition~\ref{prop:app-endpoints}); \WMA{} is an exact interior kernel
(Proposition~\ref{prop:app-wma}); and \EMA{} admits an exact representation
for $M=2$ and a local approximation for $M>2$
(Proposition~\ref{prop:app-ema}). Table~\ref{tab:app-classical-kernels} summarizes the lag--noise coordinates of
these classical kernels.


\subsection{Noise tolerance and quantitative degradation}
\label{sec:noise-tolerance}

As discussed above, increasing $\lambda$ along the \LNWA{} frontier reduces temporal lag while increasing spatial noise. In this section, we quantify how much lag reduction can compensate for the accompanying increase in noise and thereby improve PTQ performance at a fixed precision.

We measure the effect of an averaging kernel by the expected output reconstruction error (defined in Eq.~(\ref{eq:main-layer-reconstruction})) of the quantized model (defined in Definition~\ref{def:main-calibration}) relative to the latest noise-free state $\bm\mu_T$:
\begin{equation}
\mathcal E_b(\bm w)
:=
\frac12\E\!\left[
\bigl\|Q_b(\bm\theta_T^{\bm w})-\bm\mu_T\bigr\|_{\bm G}^2
\right]
=\mathcal P_0(\bm w)+\gamma_b\mathcal Q(\bm w).
\label{eq:main-loss-definition}
\end{equation}
Here $\mathcal P_0$ is the error before quantization,
$\gamma_b\mathcal Q$ is the reconstruction error introduced by quantization,
and $\gamma_b$ is a precision-dependent coefficient. Lemma~\ref{lem:app-quantization-expansion} proves this decomposition. Next, we need to show that both $\mathcal P_0$ and $\mathcal Q$ depend on temporal lag and spatial noise. Before quantization, these two effects are readily separated:
\begin{equation}
\mathcal P_0(\bm w)
:=
\frac12\E\!\left[
\|\bm\theta_T^{\bm w}-\bm\mu_T\|_{\bm G}^2
\right]
=
\frac12\|\bm\mu_T^{\bm w}-\bm\mu_T\|_{\bm G}^2
+
N_G R(\bm w),
\qquad
N_G:=\frac{\sigma^2}{2}\tr(\bm G).
\label{eq:main-unquantized-costs}
\end{equation}
The first term measures the error associated with temporal lag, while the second is proportional to the spatial-noise variance retained by the averaging kernel. However, the contributions of temporal lag and spatial noise are less direct after quantization. To make these contributions explicit, we decompose the quantization error change into separate terms and evaluate their expected contributions using Lemma~\ref{lem:app-grid-moment-bound}. This yields the quantization term in Lemma~\ref{lem:app-quantization-expansion}, which depends on spatial noise and indirectly on temporal lag through the averaged latent mean:
\begin{equation}
\mathcal Q(\bm w)
=
\|\bm\mu_T^{\bm w}\|_2^2
+
P\sigma^2R(\bm w),
\qquad
\gamma_b
=
\frac{c_G}{(2^b-1)^2},
\label{eq:main-ptq-decomposition}
\end{equation}
where $c_G>0$ collects the constants associated with the simplified GPTQ model. Thus, we can analyze how the total reconstruction error $\mathcal E_b(\bm w)$ changes along the \LNWA{} frontier as temporal lag is reduced and more spatial noise is retained. Let $m_0:=(M-1)/2$ be the mean age of \LAWA{}, and set
$x:=1-m(\bm w_\lambda)/m_0\in[0,1]$ to measure the fraction of this age removed. Thus $x=0$ gives \LAWA{}, while $x=1$ selects only the most recent saved weight, corresponding to $\lambda=\infty$. Thus, increasing $x$ reduces temporal lag and retains more spatial noise. Each $x$ specifies a frontier kernel $\bm w_\lambda$; we write its averaged latent mean and retained noise factor as $\bm\mu(x):=\bm\mu_T^{\bm w_\lambda}$ and $R(x):=R(\bm w_\lambda)$. To identify the tolerance region, we compare $\mathcal E_b(x)$ with its value at \LAWA{}, $\mathcal E_b(0)$. We first examine the error on a fixed-support segment, where the frontier formulas in Lemma~\ref{lem:app-frontier-coordinates} show that the averaged latent mean is affine in $x$ and $R(x)$ is quadratic. For $q\geq2$ active weights, differentiating the reconstruction error gives
\begin{equation}
\frac{\mathrm d^2}{\mathrm dx^2}\mathcal E_b(x)
=
m_0^2
\left[
\bm a_q^\top\bm G\bm a_q
+
2\gamma_b\|\bm a_q\|_2^2
+
\frac{2N_G+2\gamma_bP\sigma^2}{D_q}
\right]
>0,
\label{eq:main-segment-curvature}
\end{equation}
where $\bm a_q:=m_0^{-1}\mathrm d\bm\mu(x)/\mathrm dx$ is constant on the segment and $D_q:=q(q^2-1)/12$. The positive curvature increases the error slope within each fixed-support segment. Although the slope may jump at support boundaries, Lemma~\ref{lem:app-turnover-support-jumps} controls these jumps. The comparison with the starting error $\mathcal E_b(0)$ therefore depends on their cumulative effect along $[0,x]$. To track the sign of this comparison, we normalize the error difference by $x$:
\begin{equation}
F_b(x)
:=
\frac{\mathcal E_b(x)-\mathcal E_b(0)}{x},
\qquad
x>0,
\qquad
F_b(0):=\mathcal E_b'(0^+).
\label{eq:main-average-change}
\end{equation}
For $x>0$, $F_b(x)$ has the same sign as $\mathcal E_b(x)-\mathcal E_b(0)$. Its derivative accounts for the curvature accumulated over $[0,x]$ and the slope jumps at boundaries crossed along the way. Lemma~\ref{lem:app-short-window-main} shows that, for sufficiently large $T$, the positive curvature dominates any adverse cumulative contribution from these jumps. Eq.~(\ref{eq:app-turnover-quotient-bounds}) therefore gives constants $0<k_b\leq K_b$ such that
\begin{equation}
\frac{k_b}{2}
\leq
F_b'(x)
\leq
\frac{K_b}{2}.
\label{eq:main-average-change-slope}
\end{equation}
Since $F_b$ is continuous, the positive lower bound makes it strictly increasing over the entire frontier. Lemma~\ref{lem:app-short-window-main} also gives $F_b(1)=\mathcal E_b(1)-\mathcal E_b(0)>0$. Integrating the derivative bounds gives the quantitative comparison in the following theorem.

\begin{theorem}[Finite Noise Tolerance and Degradation]
\label{thm:ptq-turnover}
Under Assumptions~\ref{ass:app-bounded-noise}--\ref{ass:app-short-window},
for all sufficiently large $T$ and every evaluated precision
$b\geq2$, there exist constants $0<k_b\leq K_b$ and a unique
threshold $\tau_b\in[0,1)$. With $\ell=(M-1)/T$,
$g=\|\bm G\|_2$, and $A_*=\|\bm\mu_T\|_2+V\ell$, where $V$ is the
increment bound in Assumption~\ref{ass:app-window-regularity},
the threshold satisfies
\[
 0\leq\tau_b\leq\frac{D_MV}{m_0T}
 \left(\frac{gV\ell}{N_G}+\frac{2A_*}{P\sigma^2}\right).
\]
\textbf{(i) Tolerance:} for $0<x\leq\tau_b$,
\begin{equation}
-\frac{K_b}{2}x(\tau_b-x)
\leq
\mathcal E_b(x)-\mathcal E_b(0)
\leq
-\frac{k_b}{2}x(\tau_b-x),
\label{eq:main-tolerance-bound}
\end{equation}
\textbf{(ii) Degradation:} for $\tau_b<x\leq1$,
\begin{equation}
\mathcal E_b(x)-\mathcal E_b(0)
\geq
\frac{k_b}{2}x(x-\tau_b)
>0.
\label{eq:main-degradation-bound}
\end{equation}
\end{theorem}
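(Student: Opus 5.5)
The argument works with the normalized quotient $F_b$ of Eq.~(\ref{eq:main-average-change}) and takes as given the two facts stated before the theorem. First, by Lemma~\ref{lem:app-short-window-main} and Eq.~(\ref{eq:app-turnover-quotient-bounds}), for all sufficiently large $T$ the function $F_b$ is continuous on $[0,1]$ and piecewise differentiable, with $k_b/2\le F_b'\le K_b/2$ wherever the derivative exists. Second, $F_b(1)>0$. Since $F_b$ is strictly increasing, it has at most one zero. I define $\tau_b:=\sup\{x\in[0,1]: F_b(x)\le 0\}$ when $F_b(0)\le0$, and $\tau_b:=0$ otherwise. Because $F_b(1)>0$, this gives $\tau_b\in[0,1)$. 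Uniqueness follows from strict monotonicity, and when $\tau_b>0$ continuity gives $F_b(\tau_b)=0$.

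\textbf{Two-sided bounds.} Next I integrate the derivative bounds from $\tau_b$. On each fixed-support segment the slope bounds hold pointwise, and $F_b$ is continuous across support boundaries. Hence for $x>\tau_b$,
\[
\tfrac{k_b}{2}(x-\tau_b)\le F_b(x)-F_b(\tau_b)\le \tfrac{K_b}{2}(x-\tau_b),
\]
and the reversed inequalities hold for $x<\tau_b$. In case (i), $\tau_b>0$ and $F_b(\tau_b)=0$. Multiplying by $x>0$ then turns $\mathcal E_b(x)-\mathcal E_b(0)=xF_b(x)$ into Eq.~(\ref{eq:main-tolerance-bound}), since $x-\tau_b\le 0$ flips which constant appears on which side. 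In case (ii), when $\tau_b>0$ the same identity gives Eq.~(\ref{eq:main-degradation-bound}). When $\tau_b=0$ we have $F_b(0)>0$, so $F_b(x)\ge F_b(0)+\tfrac{k_b}{2}x\ge\tfrac{k_b}{2}x$, which again gives the lower bound. Case (i) is then vacuous.

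\textbf{Upper bound on $\tau_b$.} This is where the actual work lies. Integrating from $0$ gives $F_b(x)\ge F_b(0)+\tfrac{k_b}{2}x$, so $\tau_b\le \max\{0,-2F_b(0)/k_b\}$. The job is therefore to bound the negative part of $F_b(0)=\mathcal E_b'(0^+)$ and to bound $k_b$ from below.

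\emph{Slope at $x=0$.} On the initial segment $q=M$, Lemma~\ref{lem:app-frontier-coordinates} gives $d\bm\mu/dx=m_0\bm a_M$ and $R'(0)=0$, because $R$ is minimized at \LAWA{} with $R(x)-1/M\propto x^2$. The only first-order terms are therefore
\[
m_0\bm a_M^\top\bm G(\bm\mu(0)-\bm\mu_T)\quad\text{and}\quad 2\gamma_b m_0\bm a_M^\top\bm\mu(0).
\]
Using Assumption~\ref{ass:app-window-regularity}, $\|\bm a_M\|\lesssim V/T$ per unit lag and $\|\bm\mu(0)-\bm\mu_T\|\le m_0V/T\sim V\ell$. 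This bounds the first term by $g m_0 V^2\ell/T$ (up to normalization). It bounds the second by $2\gamma_b m_0(V/T)A_*$, where $\|\bm\mu(0)\|\le\|\bm\mu_T\|+V\ell=A_*$.

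\emph{Curvature.} For $k_b$, I keep only the noise part of the curvature in Eq.~(\ref{eq:main-segment-curvature}), namely $m_0^2(2N_G+2\gamma_bP\sigma^2)/D_M$ on the worst segment, taking the largest $D_q$. I assume Lemma~\ref{lem:app-short-window-main} allows $k_b$ to be taken comparable to this quantity.

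\emph{Combining.} Dividing the lag term by $N_G$ and the quantization term by $\gamma_bP\sigma^2$, and using $(a+c)/(b+d)\le a/b+c/d$, gives a bound of the form $\frac{D_MV}{m_0T}\bigl(\frac{gV\ell}{N_G}+\frac{2A_*}{P\sigma^2}\bigr)$. The main obstacle is the constant bookkeeping needed to make the factors of $m_0$ and $D_M$ come out exactly as stated. In particular, one must check that the $k_b$ provided by the appendix is at least the noise curvature up to the factor absorbed here. I would first try matching the pieces of Eq.~(\ref{eq:app-turnover-quotient-bounds}) directly, then fall back to absorbing slack into the ``sufficiently large $T$'' condition.
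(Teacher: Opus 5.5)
Your treatment of the threshold and of parts (i) and (ii) is exactly the paper's argument. You use continuity and strict monotonicity of $F_b$ together with $F_b(1)>0$, then integrate $k_b/2\le F_b'\le K_b/2$ from $\tau_b$ and multiply by $x$. The gap is in the explicit upper bound on $\tau_b$, which you correctly identify as the real work but do not close.

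The constant you need, $k_b\ge m_0^2(2N_G+2\gamma_bP\sigma^2)/D_M$, is not available globally. Lemma~\ref{lem:app-short-window-main} gives only $c_b\ge(N_G+\gamma_bP\sigma^2)/D_M$, that is, $k_b\ge m_0^2(N_G+\gamma_bP\sigma^2)/D_M$. Half of the noise curvature is spent dominating the slope jumps $H_MG_{\mathcal E,b}\kappa$ at support boundaries, and $k_b$ must hold over the whole frontier for part (ii). Inserting this into your inequality $\tau_b\le-2F_b(0)/k_b$, with the same numerator estimates, gives
\[
\tau_b\le\frac{2D_MV}{m_0T}\left(\frac{gV\ell}{N_G}+\frac{2A_*}{P\sigma^2}\right),
\]
which is exactly twice the claimed bound. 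This bound and the target have identical dependence on $T$, so a factor of $2$ cannot be ``absorbed into sufficiently large $T$.'' Such an absorption could only work if you dropped the lemma's halved bound and returned to $c_b\ge a_{\min,b}-H_MG_{\mathcal E,b}\kappa$, whose deficit is $O(T^{-2})$. You would then also have to exploit the unused slack $\|\bar{\bm\mu}-\bm\mu_T\|_2\le V\ell/2$ in the numerator. Your proposal does neither.

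The missing idea is \emph{localization to the first support segment}. Treat the factor-two bound above as a preliminary estimate. It is independent of $b$ and tends to $0$, so for all sufficiently large $T$, uniformly in $b$, it lies below $x_{\mathrm F}=(M+1)/[3(M-1)]$, where the full-support segment ends. On $[0,x_{\mathrm F}]$ the support never changes, so there are no jump terms and $\mathcal E_b$ is an exact quadratic. In your $x$-normalization this gives $F_b(x)=F_b(0)+\tfrac12m_0^2a_{M,b}x$.

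Hence, when $F_b(0)<0$, $\tau_b=-2F_b(0)/(m_0^2a_{M,b})$ exactly. The relevant curvature is now the full first-segment value $a_{M,b}\ge2(N_G+\gamma_bP\sigma^2)/D_M$. Combine this with $|F_b(0)|\le m_0\|\bm a_M\|_2(gD_\mu+2\gamma_bA_\mu)$, $\|\bm a_M\|_2\le V/T$, $D_\mu\le V\ell$, $A_\mu\le A_*$, and your mediant inequality. This yields precisely the stated constant $\frac{D_MV}{m_0T}\bigl(\frac{gV\ell}{N_G}+\frac{2A_*}{P\sigma^2}\bigr)$.
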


Theorem~\ref{thm:ptq-turnover} characterizes noise tolerance and degradation at a fixed precision. When $\tau_b>0$, the recovery of recent progress compensates for the additional spatial noise and quantization error over $0<x<\tau_b$; beyond $\tau_b$, the reconstruction error exceeds its value at $x=0$ and increases strictly. If $\tau_b=0$, any positive movement causes degradation. This is because the model may be unable to tolerate any additional spatial noise at sufficiently low precision. The remaining question is: at what precision does the $\tau_b=0$ regime arise and favor less spatial noise?

\subsection{Precision determines the preferred balance}
\label{sec:bit-preference}

By Eqs.~(\ref{eq:main-loss-definition}) and~(\ref{eq:main-ptq-decomposition}),
lower precision increases the weight on the quantization term $\mathcal Q$.
Lemma~\ref{lem:app-signal-accumulation}
shows that the retained latent-mean norm cannot decrease as $x$ increases.
The retained noise factor also increases, so on every open support segment,
\begin{equation}
\mathcal Q'(x)
=2\bm\mu(x)^\top\bm\mu'(x)+P\sigma^2R'(x)>0,
\qquad x>0.
\label{eq:main-quantization-monotonicity}
\end{equation}
Thus, $\mathcal Q$ increases as the kernel shifts toward more
recent weights. Lemma~\ref{lem:app-local-progress} gives
$\mathcal P_0'(0^+)<0$, so $\mathcal P_0$ initially decreases.
As precision varies, whether this decrease offsets the increase in
$\gamma_b\mathcal Q$ determines which tolerance regime in
Theorem~\ref{thm:ptq-turnover} applies: $\tau_b>0$ when $\mathcal E_b$
initially decreases, and $\tau_b=0$ otherwise. To characterize where
this change occurs, Lemma~\ref{lem:app-preference-interior} places the
unquantized and evaluated-precision optima in the first support
segment. Write $\bar{\bm\mu}=\bm\mu(0)$ and
$\bm u=\bm\mu'(0^+)=m_0\bm a_M$. On this segment, both error
components are quadratic:
\begin{equation}
\begin{aligned}
\mathcal P_0(x)-\mathcal P_0(0)
=\mathcal P_0'(0^+)x+\frac{\mathcal P_0''(0^+)}{2}x^2,\quad
\mathcal Q(x)-\mathcal Q(0)
=\mathcal Q'(0^+)x+\frac{\mathcal Q''(0^+)}{2}x^2.
\end{aligned}
\label{eq:main-first-segment-quadratics}
\end{equation}
Appendix~\ref{app:bit-selection} gives
$\mathcal Q'(0^+)\geq0$, $a:=\mathcal P_0''(0^+)>0$, and
$c:=\mathcal Q''(0^+)>0$. The unquantized optimum and the optimum
at precision $b$ are
\begin{equation}
x_0=\frac{\bm u^\top\bm G(\bm\mu_T-\bar{\bm\mu})}{a},
\label{eq:main-unquantized-optimum}
\end{equation}
and
\begin{equation}
x_b=\max\left\{0,
\frac{\bm u^\top\bm G(\bm\mu_T-\bar{\bm\mu})
-2\gamma_b\bar{\bm\mu}^\top\bm u}
{a+c\gamma_b}\right\}\in[0,x_0).
\label{eq:main-precision-minimizer}
\end{equation}
Eq.~(\ref{eq:main-precision-minimizer}) shows that quantization
can make $x_b=0$, corresponding to $\tau_b=0$ in
Theorem~\ref{thm:ptq-turnover}. Lemma~\ref{lem:app-preference-interior}
establishes that this $x_b$ is the global optimum across all support
segments, so we can quantify the precision threshold in the next
theorem.

\begin{theorem}[Precision Threshold for Lag--Noise Preference]
\label{thm:bit-ordering}
Under Assumptions~\ref{ass:app-bounded-noise}--\ref{ass:app-nonvanishing-progress},
for sufficiently large $T$, define the precision threshold
\begin{equation}
b_{\mathrm{crit}}
:=\log_2\!\left(
1+\sqrt{\frac{2c_G\bar{\bm\mu}^\top\bm u}
{\bm u^\top\bm G(\bm\mu_T-\bar{\bm\mu})}}
\right).
\label{eq:main-critical-precision}
\end{equation}
For every evaluated precision $b\geq b_{\min}$,
\begin{equation}
\begin{aligned}
b\leq b_{\mathrm{crit}}
&\quad\Longrightarrow\quad x_b=0,\quad\tau_b=0
 \qquad\text{\textbf{(less spatial noise)}},\\
b>b_{\mathrm{crit}}
&\quad\Longrightarrow\quad x_b>0,\quad\tau_b>0
 \qquad\text{\textbf{(less temporal lag)}}.
\end{aligned}
\label{eq:main-preference-reversal}
\end{equation}
Moreover, $x_b$ is nondecreasing in $b$ and strictly increasing whenever
$b>b_{\mathrm{crit}}$.
If $b_{\mathrm{crit}}<b_{\min}$, all evaluated precisions lie in the
second regime.
\end{theorem}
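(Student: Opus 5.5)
The plan is to reduce everything to the closed form of $x_b$ in Eq.~(\ref{eq:main-precision-minimizer}), which by Lemma~\ref{lem:app-preference-interior} is the global minimizer of $\mathcal E_b$ over the whole frontier for sufficiently large $T$. I will write $N:=\bm u^\top\bm G(\bm\mu_T-\bar{\bm\mu})=a x_0$ and $S:=\bar{\bm\mu}^\top\bm u$. From the first-segment quadratics in Eq.~(\ref{eq:main-first-segment-quadratics}), $\mathcal P_0'(0^+)=-N$ and $\mathcal Q'(0^+)=2S$. Lemma~\ref{lem:app-local-progress} gives $\mathcal P_0'(0^+)<0$, hence $N>0$. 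Appendix~\ref{app:bit-selection} gives $\mathcal Q'(0^+)\geq0$, hence $S\geq0$, so the square root in $b_{\mathrm{crit}}$ is well defined.

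\textbf{Step 1: the sign of the numerator.} The sign of $x_b$ is governed by $N-2\gamma_bS$, since $a+c\gamma_b>0$. Substituting $\gamma_b=c_G/(2^b-1)^2$ from Eq.~(\ref{eq:main-ptq-decomposition}), the condition $N-2\gamma_bS\leq0$ is equivalent to $(2^b-1)^2\leq 2c_GS/N$. Because $2^b-1>0$, this is in turn equivalent to $b\leq\log_2(1+\sqrt{2c_GS/N})=b_{\mathrm{crit}}$. This inequality chain needs no further estimates.

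\textbf{Step 2: the two regimes.} If $b\leq b_{\mathrm{crit}}$, then $\mathcal E_b'(0^+)=-N+2\gamma_bS\geq0$, so $F_b(0)\geq0$. Since $F_b$ is strictly increasing by Eq.~(\ref{eq:main-average-change-slope}), $F_b(x)>0$ for every $x>0$. The uniqueness characterization of Theorem~\ref{thm:ptq-turnover} then forces $\tau_b=0$, and Eq.~(\ref{eq:main-precision-minimizer}) gives $x_b=0$.

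If $b>b_{\mathrm{crit}}$, then $F_b(0)=\mathcal E_b'(0^+)<0$, while $F_b(1)>0$ by Lemma~\ref{lem:app-short-window-main}. Continuity and strict monotonicity of $F_b$ give a unique zero in $(0,1)$, and this zero is $\tau_b>0$. On the first segment $\mathcal E_b$ is a strictly convex quadratic with negative initial slope, so its minimizer $x_b$ is strictly positive. Lemma~\ref{lem:app-preference-interior} confirms that $x_b$ lies in the first segment and is the global minimizer.

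\textbf{Step 3: monotonicity in $b$.} I will treat $\gamma=\gamma_b$ as a variable that is strictly decreasing in $b$, and study $h(\gamma)=(N-2\gamma S)/(a+c\gamma)$. Its derivative is
\[
h'(\gamma)=\frac{-2S(a+c\gamma)-c(N-2\gamma S)}{(a+c\gamma)^2}=\frac{-2Sa-cN}{(a+c\gamma)^2}<0,
\]
which is negative because $cN>0$. So $h$ is strictly decreasing in $\gamma$, hence strictly increasing in $b$. Taking $\max\{0,h\}$ preserves nondecreasing monotonicity, and the monotonicity is strict on the region $h>0$, which by Step 1 is exactly $b>b_{\mathrm{crit}}$.

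\textbf{Final clause and main obstacle.} If $b_{\mathrm{crit}}<b_{\min}$, then every evaluated $b\geq b_{\min}$ satisfies $b>b_{\mathrm{crit}}$, so all evaluated precisions fall in the second case. The main obstacle is the identification $\tau_b=0\iff\mathcal E_b'(0^+)\geq0$. It requires the uniform positive lower bound on $F_b'$ and the positivity of $F_b(1)$ to hold simultaneously for all evaluated $b$, with a single large-$T$ threshold. I would take this uniformity from Lemma~\ref{lem:app-short-window-main} together with Assumption~\ref{ass:app-nonvanishing-progress}, taking the maximum of the $T$-thresholds over the finitely many evaluated precisions.
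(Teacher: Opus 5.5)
Your proposal is correct and follows essentially the same route as the paper's proof. In both, the sign of $\mathcal E_b'(0^+)=-N+2\gamma_bS$, read through $\gamma_b=c_G/(2^b-1)^2$, decides the regime; Theorem~\ref{thm:ptq-turnover} (equivalently, the strict monotonicity of $F_b$ together with $F_b(1)>0$) turns that sign into $\tau_b=0$ or $\tau_b>0$; and differentiating $(N-2\gamma S)/(a+c\gamma)$ in $\gamma$ gives the monotonicity of $x_b$.

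The uniformity you flag as the main obstacle is already provided by the paper. The large-$T$ conditions in Lemma~\ref{lem:app-short-window-main} and Lemma~\ref{lem:app-preference-interior} do not involve $b$. So you need no maximum of $T$-thresholds over evaluated precisions, and such a maximum might not exist if every integer $b\geq b_{\min}$ counts as evaluated.
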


Theorem~\ref{thm:bit-ordering} answers the question left at the end of
Section~\ref{sec:noise-tolerance}: $b_{\mathrm{crit}}$ marks where zero
tolerance gives way to a positive tolerance interval. This
threshold characterizes the reversal in the preferred lag--noise
balance, from denoising at lower precision to reducing temporal lag at
higher precision.

\section{Experiments}
\label{sec:experiments}
\textbf{Experimental setup.}
We pretrain OPT-series models (125M, 350M, and 1.3B) \citep{zhang2022opt} from scratch on FineWeb-Edu \citep{penedo2024fineweb} with AdamW \citep{loshchilov2017decoupled}. Following prior work \citep{catalan2026training}, we save 10 weights at 500-step intervals near the end of training and average them. We then apply GPTQ \citep{frantar2022gptq} and AWQ \citep{lin2024awq} at 4, 3, and 2 bits (denoted by W4, W3, W2) using the same C4 calibration set \citep{raffel2020exploring}, then evaluate the PTQ performance by perplexity (PPL; lower is better) on a held-out RefinedWeb set \citep{penedo2023refinedweb}.
Complete settings and results are in Appendix~\ref{app:experiments}.

\textbf{From noise tolerance to degradation.}
We evaluate \LNWA{} across coefficients ranging from $\lambda=0$ (\LAWA{}) to $\lambda=1$. We report the performance change \(\Delta\mathrm{NLL}(\lambda)\) relative to \LAWA{}, where \(\Delta\mathrm{NLL}(\lambda)
=\mathrm{NLL}(\lambda)-\mathrm{NLL}(0)
=\log\frac{\mathrm{PPL}(\lambda)}{\mathrm{PPL}(0)}\), and negative values indicate an improvement.

\begin{figure}
  \centering
  \includegraphics[width=\linewidth]{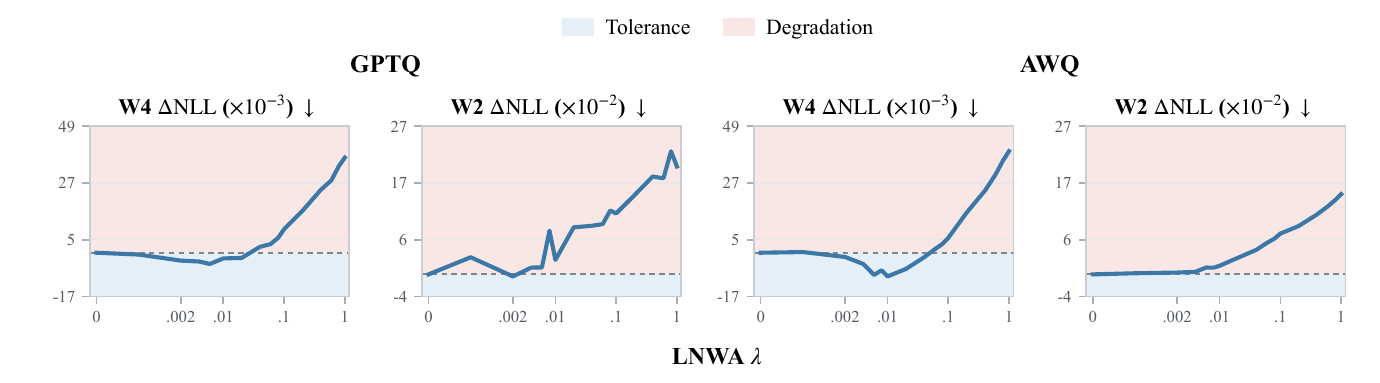}
  \caption{Comparison of $\Delta$NLL across \LNWA{} coefficients for OPT-125M under GPTQ and AWQ at 4-bit (W4) and 2-bit (W2).}
  \label{fig:lnwa-turnover-main}
\end{figure}

Figure~\ref{fig:lnwa-turnover-main} shows that NLL can fall below the
\LAWA{} baseline at small $\lambda$, then rises above it at larger
$\lambda$. This indicates that the benefit of reducing temporal lag
can offset or outweigh the cost of retaining additional spatial noise.
Moving further along the frontier retains more noise and leads to
performance degradation, consistent with the tolerance-to-degradation pattern characterized in Theorem~\ref{thm:ptq-turnover}.

\textbf{Precision changes the preferred lag-noise balance.}
We compare \LAWA{}, \WMA{}, and \LNWA{} in the small-$\lambda$
range by fitting PPL against the \LNWA{}-equivalent averaging coefficient.
Kernel choices and regression details are given in
Appendix~\ref{app:averaging-settings}.

\begin{figure}
  \centering
  \includegraphics[width=\linewidth]{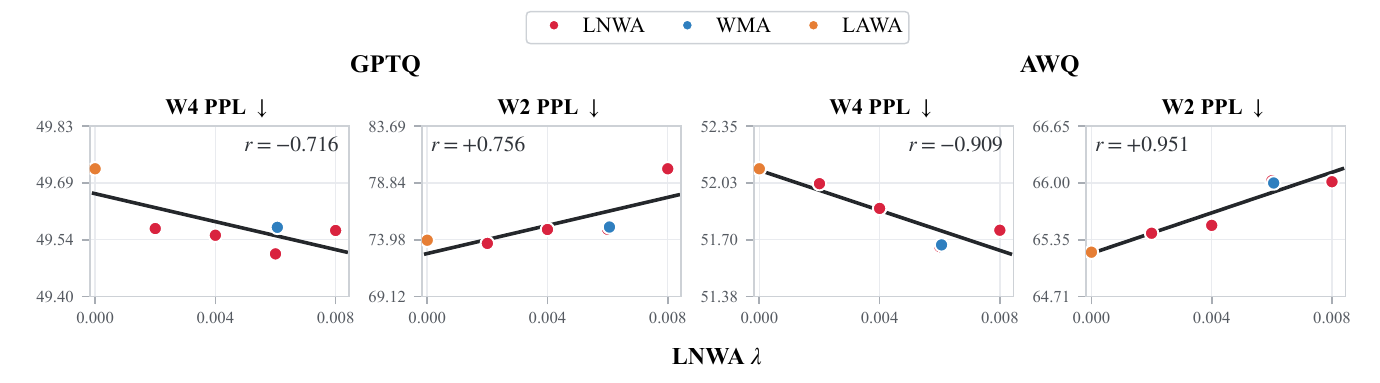}
  \caption{Comparison of PPL across \LNWA{}-equivalent averaging coefficients for OPT-125M under GPTQ and AWQ at 4-bit (W4) and 2-bit (W2).}
  \label{fig:lambda-precision-reversal}
\end{figure}

Figure~\ref{fig:lambda-precision-reversal}
shows a positive correlation between PPL and the averaging coefficient
at W2 and a negative correlation at W4. Smaller coefficients retain
less spatial noise, whereas larger coefficients reduce temporal lag.
These correlations therefore indicate that lower precision favors less
noise and higher precision favors less lag, supporting a precision-dependent reversal in preference in Theorem~\ref{thm:bit-ordering}.

\textbf{Comparison with learning-rate decay.}
We finally compare \LNWA{} with two
LR decay schedules on OPT-1.3B. As shown in Figure~\ref{fig:lnwa-vs-lr-decay}, \LNWA{} achieves lower average PPL (Avg.) than both decay schedules under GPTQ and AWQ, with the largest improvement at W2.
These results support weight averaging as an effective strategy
for the pretraining-to-quantization handoff. 

\begin{figure}
  \centering
  \includegraphics[width=\linewidth]{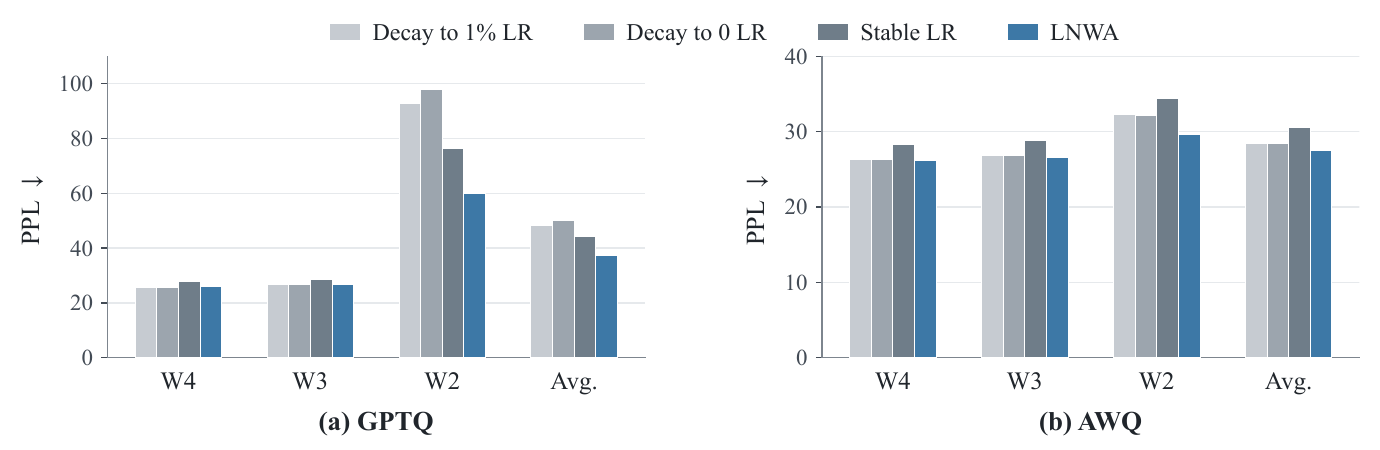}
  \caption{Comparison of PPL between LR decay, Stable LR, and \LNWA{} for OPT-1.3B under GPTQ and AWQ at 4-bit (W4), 3-bit (W3), 2-bit (W2), and their average.}
  \label{fig:lnwa-vs-lr-decay}
\end{figure}

\section{Conclusion}
\label{sec:conclusion}

We developed a theoretical framework for weight averaging at the
pretraining-to-quantization transition. Averaging suppresses spatial
noise but introduces temporal lag by sacrificing recent optimization
progress. We quantify this trade-off and derive \LNWA{}, a closed-form
family of averaging kernels that parameterizes its Pareto frontier. Through a simplified GPTQ model, we connect this trade-off to output
reconstruction error and establish two main results. At a fixed
precision, reduced lag can compensate for additional noise within a
finite tolerance interval; beyond it, the error increases. Within this
interval, we further show that lower precision favors less spatial
noise, whereas higher precision favors less temporal lag.
Experiments on LLMs support
both results.

\subsection*{AI use statement}
Generative AI tools were used only to assist with literature search, coding, writing refinement, and consistency checking. The research motivation, theoretical development, experiments, and conclusions were independently developed and completed by the authors. All AI-assisted outputs were carefully reviewed and verified, and the authors take full responsibility for the contents of this paper.


\bibliography{iclr2027_conference}
\bibliographystyle{iclr2027_conference}

\newpage
\appendix
\startcontents[appendix]

\pdfbookmark[1]{Appendix}{appendix-contents}
\section*{Appendix}
\begingroup
\small
\setcounter{tocdepth}{2} 
\printcontents[appendix]{}{1}{}
\endgroup

\section{Theoretical Proofs}
\label{app:proofs}

This section presents the detailed proofs of all theoretical results in Section~\ref{sec:method}. 
\subsection{Setup}
\label{app:setup}
\begin{definition}[Finite Convex Weight Averaging]
\label{def:app-window}

Let
$\bm{\theta}_T,\bm{\theta}_{T-1},\ldots,
\bm{\theta}_{T-M+1}\in\mathbb R^P$
be a window of $M\geq2$ retained model weights.
Age $d=0$ denotes the newest weights; $T$ denotes the training steps. Define
\begin{equation*}
\mathcal W_M
=
\left\{
\bm{w}\in\mathbb R^M:
w_d\geq0,\ 
\sum_{d=0}^{M-1}w_d=1
\right\},
\qquad
\bm{\theta}_T^{\bm{w}}
=
\sum_{d=0}^{M-1}w_d\bm{\theta}_{T-d}.
\end{equation*}

Here, $\mathcal W_M$ is the set of feasible averaging kernels,
$\bm{w}\in\mathcal W_M$ is an averaging kernel, and
$\bm{\theta}_T^{\bm{w}}$ lies in the finite convex hull of the retained weights.
\end{definition}

\begin{definition}[Latent-mean Trajectory]
\label{def:app-latent-mean}
Let $\mathcal F_0$ be the sigma-field generated by the information shared
across stochastic training runs, such as the model configuration, common
initialization, and training setup. Let $\xi$ index the run-specific
stochasticity, such as data ordering and dropout. For the aligned state
$\bm{\theta}_{T-d}(\xi)$, assume $
\E[
\|\bm{\theta}_{T-d}(\xi)\|_2^2
\mid\mathcal F_0
]
<\infty
$. The conditional mean state $\bm\mu_t$ and the zero-mean spatial noise
$\bm\epsilon_t$ at the training step $t$ of an individual run are
\begin{equation*}
\bm{\mu}_{T-d}
:=
\E_{\xi}
[
\bm{\theta}_{T-d}(\xi)
\mid\mathcal F_0
],
\qquad
\bm{\epsilon}_{T-d}(\xi)
:=
\bm{\theta}_{T-d}(\xi)-\bm{\mu}_{T-d}.
\end{equation*}

A stochastic training run therefore satisfies
$\bm{\theta}_{T-d}=\bm{\mu}_{T-d}+\bm{\epsilon}_{T-d}$ and
$\E[\bm{\epsilon}_{T-d}\mid\mathcal F_0]=\bm{0}$.
\end{definition}

Throughout Appendix~\ref{app:proofs}, all expectations and covariances
condition on $\mathcal F_0$, and averaging kernels are fixed given
$\mathcal F_0$.

For ages $d,e\in\{0,\ldots,M-1\}$, define the conditional covariance
blocks of the fluctuations around the latent-mean trajectory:
\begin{equation}
\bm{\Gamma}_{de}
:=
\Cov(
\bm{\epsilon}_{T-d},
\bm{\epsilon}_{T-e}
\mid\mathcal F_0
).
\label{eq:app-covariance-blocks}
\end{equation}

Within the retained stable-LR window, we use a common covariance for
spatial noise and assume that fluctuations at different retained positions
are uncorrelated.

\begin{assumption}[Common Covariance and Uncorrelated Spatial Noise]
\label{ass:app-noise}
Conditional on $\mathcal F_0$, the spatial-noise vectors have finite
second moments and satisfy, for $d,e\in\{0,\ldots,M-1\}$,
\begin{equation*}
 \E[\bm\epsilon_{T-d}\mid\mathcal F_0]=\bm0,\qquad
 \E[\bm\epsilon_{T-d}\bm\epsilon_{T-e}^{\top}\mid\mathcal F_0]
 =\begin{cases}
   \bm\Sigma,&d=e,\\
   \bm0,&d\ne e,
  \end{cases}
 \qquad \bm\Sigma\succeq0.
\end{equation*}
\end{assumption}

Thus $\bm\Gamma_{dd}=\bm\Sigma$ and
$\bm\Gamma_{de}=\bm0$ for $d\ne e$.

Define the forward latent increment $
\bm{v}_{T-j+1}
=
\bm{\mu}_{T-j+1}-\bm{\mu}_{T-j}, 1\leq j<M,
$ and the cumulative old-weight mass $
s_j(\bm{w})
=
\sum_{d=j}^{M-1}w_d$, 
we have
\begin{equation*}
\bm{b}(\bm{w})
=
\sum_{j=1}^{M-1}s_j(\bm{w})\bm{v}_{T-j+1},
\qquad
\bm{\epsilon}_T^{\bm{w}}
=
\sum_{d=0}^{M-1}w_d\bm{\epsilon}_{T-d},
\qquad
R(\bm{w})
=
\sum_{d=0}^{M-1}w_d^2.
\end{equation*}

Generally, we have
\begin{equation*}
\Cov(
\bm{\epsilon}_T^{\bm{w}}
\mid\mathcal F_0
)
=
\sum_{d,e}
w_dw_e\bm{\Gamma}_{de}.
\end{equation*}

Under Assumption~\ref{ass:app-noise}, this reduces to
\begin{equation*}
\Cov(
\bm{\epsilon}_T^{\bm{w}}
\mid\mathcal F_0
)
=
R(\bm{w})\bm{\Sigma}.
\end{equation*}

Accordingly, $R(\bm{w})^{-1}$ is a variance-equivalent
effective sample size.

\subsection{Temporal lag and spatial noise}
\label{app:decomposition}

\begin{lemma}[Lag-noise Decomposition]
\label{lem:app-exact}
Every $\bm{w}\in\mathcal W_M$ satisfies
\begin{equation}
  \bm{\theta}_T^{\bm{w}}
  =
  \bm{\mu}_T-\bm{b}(\bm{w})+\bm{\epsilon}_T^{\bm{w}}.
  \label{eq:app-exact}
\end{equation}
Under Assumption~\ref{ass:app-noise}, its residual covariance is
\begin{equation*}
  \Cov(\bm{\epsilon}_T^{\bm{w}}\mid\mathcal F_0)
  =
  R(\bm{w})\bm{\Sigma}.
\end{equation*}
\end{lemma}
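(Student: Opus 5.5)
The identity \eqref{eq:app-exact} is a deterministic rearrangement, and the covariance claim is a bilinearity computation; I will prove the two parts separately.

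\textbf{Decomposition.} Substitute $\bm\theta_{T-d}=\bm\mu_{T-d}+\bm\epsilon_{T-d}$ from Definition~\ref{def:app-latent-mean} into $\bm\theta_T^{\bm w}=\sum_d w_d\bm\theta_{T-d}$. This gives $\bm\theta_T^{\bm w}=\sum_d w_d\bm\mu_{T-d}+\bm\epsilon_T^{\bm w}$, so it remains to show
\[
\sum_{d=0}^{M-1}w_d\bm\mu_{T-d}=\bm\mu_T-\bm b(\bm w).
\]

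\emph{Telescoping each mean.} For each age $d$, write $\bm\mu_{T-d}=\bm\mu_T-\sum_{j=1}^{d}(\bm\mu_{T-j+1}-\bm\mu_{T-j})=\bm\mu_T-\sum_{j=1}^{d}\bm v_{T-j+1}$. The sum is empty for $d=0$.

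\emph{Averaging.} Multiply by $w_d$ and sum over $d$. Since $\sum_d w_d=1$, the $\bm\mu_T$ terms contribute exactly $\bm\mu_T$. The remaining double sum is $\sum_{d=0}^{M-1}\sum_{j=1}^{d}w_d\bm v_{T-j+1}$.

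\emph{Swapping the order.} The index range $\{1\le j\le d\le M-1\}$ is equivalently $j\in\{1,\dots,M-1\}$ with $d\in\{j,\dots,M-1\}$. Hence the double sum equals $\sum_{j=1}^{M-1}\bigl(\sum_{d\ge j}w_d\bigr)\bm v_{T-j+1}=\sum_{j=1}^{M-1}s_j(\bm w)\bm v_{T-j+1}=\bm b(\bm w)$. This establishes \eqref{eq:app-exact}.

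\textbf{Covariance.} The kernel $\bm w$ is fixed given $\mathcal F_0$, and each $\bm\epsilon_{T-d}$ has conditional mean zero, so $\E[\bm\epsilon_T^{\bm w}\mid\mathcal F_0]=\bm0$. Bilinearity of conditional covariance then gives
\[
\Cov(\bm\epsilon_T^{\bm w}\mid\mathcal F_0)=\sum_{d,e}w_dw_e\bm\Gamma_{de}.
\]
The finite second moments assumed in Definition~\ref{def:app-latent-mean} make every term well defined. Under Assumption~\ref{ass:app-noise}, $\bm\Gamma_{de}=\bm0$ for $d\neq e$ and $\bm\Gamma_{dd}=\bm\Sigma$. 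The sum therefore collapses to $\sum_d w_d^2\bm\Sigma=R(\bm w)\bm\Sigma$.

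Neither step is hard. The only point needing care is the index swap in the summation by parts, in particular that $s_j(\bm w)$ counts mass from age $j$ onward and pairs with the increment $\bm v_{T-j+1}$ rather than $\bm v_{T-j}$.
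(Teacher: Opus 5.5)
Your proposal is correct and matches the paper's proof essentially step for step. You telescope each $\bm\mu_{T-d}$ into the increments $\bm v_{T-j+1}$, swap the order of summation to obtain the cumulative masses $s_j(\bm w)$, and then collapse the bilinear covariance expansion using the vanishing cross-covariances of Assumption~\ref{ass:app-noise}; your remarks on the conditional zero mean and finite second moments only make explicit what the paper leaves implicit.
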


\begin{proof}
The increments can be written as
$\bm{\mu}_{T-d}=\bm{\mu}_T-\sum_{j=1}^{d}\bm{v}_{T-j+1}$.  Substituting this identity into
$\sum_dw_d(\bm{\mu}_{T-d}+\bm{\epsilon}_{T-d})$ gives
\begin{equation*}
  \sum_{d=0}^{M-1}w_d\sum_{j=1}^{d}\bm{v}_{T-j+1}
  =
  \sum_{j=1}^{M-1}
  \left(\sum_{d=j}^{M-1}w_d\right)\bm{v}_{T-j+1}.
\end{equation*}
This proves Eq.~(\ref{eq:app-exact}).  The conditional
cross-covariances of distinct $\bm{\epsilon}_{T-d}$ vanish by
Assumption~\ref{ass:app-noise}, hence
$\Cov(\bm{\epsilon}_T^{\bm{w}}\mid\mathcal F_0)
=\sum_dw_d^2\bm{\Sigma}
=R(\bm{w})\bm{\Sigma}$.
\end{proof}

\begin{assumption}[Bounded Adjacent Progress Variation]
\label{ass:app-progress}
Adjacent latent increments in the retained discrete window obey
\begin{equation*}
  \|\bm{v}_{T-j+1}-\bm{v}_{T-j+2}\|_2\leq\kappa,\qquad 2\leq j<M,
\end{equation*}
for a finite $\kappa\geq0$.
\end{assumption}


Under Assumption~\ref{ass:app-progress}, the next lemma reduces the trajectory displacement to a scalar lag coordinate with a controlled remainder.

\begin{lemma}[Scalar Lag with Controlled Remainder]
\label{lem:app-scalar}
Under Assumption~\ref{ass:app-progress}, define
$m(\bm{w})=\sum_{d=0}^{M-1}dw_d$.  Then
\begin{equation}
  \bm{\theta}_T^{\bm{w}}
  =
  \bm{\mu}_T-m(\bm{w})\bm{v}_T
  +\bm{\epsilon}_T^{\bm{w}}
  +\bm{e}_{\bm{w}}^{\mathrm{traj}},
  \qquad
  \|\bm{e}_{\bm{w}}^{\mathrm{traj}}\|_2\leq
  \frac{\kappa}{2}\sum_{d=0}^{M-1}w_dd(d-1).
  \label{eq:app-remainder}
\end{equation}
When $\bm{v}_{T-j+1}=\bm{v}_T$ throughout the sampled window,
$\bm{e}_{\bm{w}}^{\mathrm{traj}}=\bm{0}$.
\end{lemma}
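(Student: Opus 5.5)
Our plan is to start from the exact decomposition in Lemma~\ref{lem:app-exact}, $\bm\theta_T^{\bm w}=\bm\mu_T-\bm b(\bm w)+\bm\epsilon_T^{\bm w}$ with $\bm b(\bm w)=\sum_{j=1}^{M-1}s_j(\bm w)\bm v_{T-j+1}$. We then compare $\bm b(\bm w)$ with $m(\bm w)\bm v_T$, so we define
\[
\bm e_{\bm w}^{\mathrm{traj}}:=m(\bm w)\bm v_T-\bm b(\bm w).
\]
With this choice the displayed identity in Eq.~(\ref{eq:app-remainder}) holds tautologically, and all the work lies in bounding this remainder.

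The first step rewrites $m(\bm w)$ in the same cumulative-mass form: by the standard tail-sum identity,
\[
m(\bm w)=\sum_{d=0}^{M-1}d\,w_d=\sum_{d}w_d\sum_{j=1}^{d}1=\sum_{j=1}^{M-1}s_j(\bm w).
\]
Hence
\[
\bm e_{\bm w}^{\mathrm{traj}}=\sum_{j=1}^{M-1}s_j(\bm w)\bigl(\bm v_T-\bm v_{T-j+1}\bigr).
\]
Next we telescope each difference along adjacent increments,
\[
\bm v_T-\bm v_{T-j+1}=\sum_{i=2}^{j}\bigl(\bm v_{T-i+2}-\bm v_{T-i+1}\bigr).
\]
Each summand has norm at most $\kappa$ by Assumption~\ref{ass:app-progress}. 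This gives $\|\bm v_T-\bm v_{T-j+1}\|_2\le\kappa(j-1)$ for $1\le j<M$.

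Then, by the triangle inequality and $s_j(\bm w)\ge0$,
\[
\|\bm e_{\bm w}^{\mathrm{traj}}\|_2\le\kappa\sum_{j=1}^{M-1}(j-1)s_j(\bm w).
\]
We expand $s_j$ and swap the sums:
\[
\sum_{j=1}^{M-1}(j-1)\sum_{d\ge j}w_d=\sum_{d}w_d\sum_{j=1}^{d}(j-1)=\sum_d w_d\,\frac{d(d-1)}{2}.
\]
This is exactly the claimed bound $\frac{\kappa}{2}\sum_d w_d d(d-1)$. Finally, when all increments in the window equal $\bm v_T$, each difference $\bm v_T-\bm v_{T-j+1}$ vanishes, so $\bm e_{\bm w}^{\mathrm{traj}}=\bm0$ (equivalently, $\kappa=0$ may be taken).

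There is no genuine obstacle. The only step needing care is index bookkeeping: the telescoping range must be written so that it uses only the pairs $2\le j<M$ covered by Assumption~\ref{ass:app-progress}, and the double-sum interchange must be checked to produce $d(d-1)/2$ rather than $d(d+1)/2$. We will verify these on the boundary cases $d=0,1$, both of which should contribute zero.
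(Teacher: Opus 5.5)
Your proposal is correct and follows essentially the same route as the paper's proof. You start from the exact decomposition in Lemma~\ref{lem:app-exact} and use $m(\bm w)=\sum_j s_j(\bm w)$ to write the remainder as $-\sum_j s_j(\bm w)(\bm v_{T-j+1}-\bm v_T)$. You then bound each difference by $(j-1)\kappa$ via the adjacent-increment assumption and apply the identity $\sum_j (j-1)s_j(\bm w)=\tfrac12\sum_d w_d\,d(d-1)$. Your index bookkeeping for the telescoping and for the sum interchange is also right.
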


\begin{proof}
The cumulative weights satisfy
\begin{equation*}
  \sum_{j=1}^{M-1}s_j(\bm{w})
  =
  \sum_{d=0}^{M-1}d\,w_d
  =
  m(\bm{w}).
\end{equation*}
Define the trajectory deviation
$\bm{\delta}_j^{\mathrm{traj}}:=\bm{v}_{T-j+1}-\bm{v}_T$ for
$1\leq j<M$.  Then
\begin{equation*}
  \bm{b}(\bm{w})
  =
  m(\bm{w})\bm{v}_T
  +
  \sum_{j=1}^{M-1}
  s_j(\bm{w})\bm{\delta}_j^{\mathrm{traj}}.
\end{equation*}
Adjacent variation implies
$\|\bm{\delta}_j^{\mathrm{traj}}\|_2\leq(j-1)\kappa$.
Non-negativity of the weights and
\begin{equation*}
  \sum_{j=1}^{M-1}(j-1)s_j(\bm{w})
  =\frac12\sum_{d=0}^{M-1}w_dd(d-1)
\end{equation*}
give Eq.~(\ref{eq:app-remainder}) after setting
\begin{equation*}
  \bm{e}_{\bm{w}}^{\mathrm{traj}}
  =
  -\sum_{j=1}^{M-1}
  s_j(\bm{w})\bm{\delta}_j^{\mathrm{traj}}.
\end{equation*}
\end{proof}

\subsection{Lag--noise Pareto frontier and closed-form \LNWA{} kernel}
\label{app:pareto}

The decomposition above motivates the kernel coordinates.
\begin{definition}[Lag--noise Coordinates]
\label{def:main-lag-noise-coordinates2}
For any $\bm w\in\mathcal W_M$, define
\begin{equation*}
\Phi(\bm w)
=
\bigl(m(\bm w),R(\bm w)\bigr),
\qquad
\mathcal G_M
=
\left\{
\Phi(\bm w):\bm w\in\mathcal W_M
\right\}.
\label{eq:main-lag-noise-image2}
\end{equation*}
Here, $\Phi(\bm w)$ denotes the lag--noise coordinates of the averaging
kernel $\bm w$, and $\mathcal G_M$ is the set of all feasible lag--noise
coordinates.
\end{definition}
Write $\operatorname{Eff}(\mathcal G_M)$ for the set of Pareto-efficient points
in $\mathcal G_M$: a point belongs to this set if no feasible kernel has
both a smaller or equal temporal lag and a smaller or equal spatial noise,
with at least one strict inequality.

To identify the efficient set, first fix a lag budget $\ell$:
\begin{equation}
  \bm{w}^\star(\ell)
  =\arg\min_{\bm{w}\in\mathcal W_M}R(\bm{w})
  \quad\text{s.t.}\quad m(\bm{w})\leq\ell,
  \qquad 0\leq\ell\leq\frac{M-1}{2}.
  \label{eq:app-lag-budget}
\end{equation}

\begin{lemma}[Lag-noise Pareto Frontier]
\label{lem:app-budget}
For $0<\ell<(M-1)/2$, Eq.~(\ref{eq:app-lag-budget}) has a unique
solution supported on the $q$ most recent saved weights, where
\begin{equation}
  q=q(\ell)=\min\{M,\lceil3\ell+1\rceil\}\in\{2,\ldots,M\}.
  \label{eq:app-budget-support}
\end{equation}
Its coordinates are
\begin{equation}
  \bigl(\bm{w}^\star(\ell)\bigr)_d=
  \begin{cases}
    q^{-1}-\gamma_q(d-\bar d_q),&0\leq d<q,\\
    0,&q\leq d<M,
  \end{cases}
  \label{eq:app-budget-kernel}
\end{equation}
where
\begin{equation}
  \bar d_q=\frac{q-1}{2},\quad
  D_q=\sum_{d=0}^{q-1}(d-\bar d_q)^2=\frac{q(q^2-1)}{12},
  \quad
  \gamma_q=\frac{\bar d_q-\ell}{D_q},
  \label{eq:app-support-moments}
\end{equation}
and
\begin{equation*}
  R^\star(\ell)=\frac1q+
  \frac{(\bar d_q-\ell)^2}{D_q}.
\end{equation*}
\end{lemma}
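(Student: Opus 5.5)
Since $R$ is strictly convex on the simplex and the feasible set $\{\bm w\in\mathcal W_M: m(\bm w)\leq\ell\}$ is a nonempty compact convex polytope (it contains $\bm e_0$), the minimizer in Eq.~(\ref{eq:app-lag-budget}) exists and is unique. The plan is to write down KKT conditions, guess the structure of the solution, and then verify that the guess satisfies them. The KKT system is sufficient because the problem is convex with affine constraints.

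First, the budget constraint must be active. The unconstrained minimizer of $R$ on $\mathcal W_M$ is the uniform kernel, with $m=(M-1)/2>\ell$. If the optimum had $m(\bm w^\star)<\ell$, moving a small step toward the uniform kernel would keep feasibility and strictly decrease $R$, which is a contradiction. Hence $m(\bm w^\star)=\ell$.

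The Lagrangian $\sum_d w_d^2-\alpha(\sum_d w_d-1)+\beta(\sum_d dw_d-\ell)-\sum_d\nu_dw_d$ with $\beta\geq0$ and $\nu_d\geq0$ gives $w_d=\max\{0,(\alpha-\beta d)/2\}$. This is an affine, nonincreasing function of $d$ clipped at zero, so the support is a prefix $S_q=\{0,\ldots,q-1\}$. On the prefix write $w_d=q^{-1}-\gamma_q(d-\bar d_q)$. The normalization $\sum_d w_d=1$ holds automatically, because $\sum_{d<q}(d-\bar d_q)=0$. The lag condition gives $\sum_{d<q}d\,w_d=\bar d_q-\gamma_qD_q=\ell$, hence $\gamma_q=(\bar d_q-\ell)/D_q$. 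Then $R=\sum_d(q^{-1}-\gamma_q(d-\bar d_q))^2=1/q+\gamma_q^2D_q$, where the cross term again vanishes. This yields $R^\star(\ell)$.

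The main step is identifying $q$. The candidate is a valid KKT point exactly when two conditions hold. First, the weights on the prefix are nonnegative, and the binding one is the last weight: $w_{q-1}=q^{-1}-\gamma_q\bar d_q\geq0$. Second, the multiplier condition off the support holds, namely $\alpha-\beta q\leq0$, which says the affine extension is nonpositive at $d=q$: $q^{-1}-\gamma_q(q-\bar d_q)\leq0$ when $q<M$.

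Substituting $D_q=q(q^2-1)/12$ and $\bar d_q=(q-1)/2$, I expect the first condition to simplify to $\ell\geq(q-2)/3$ and the second to $\ell\leq(q-1)/3$. The plan is to verify this algebra carefully, cancelling $(q-1)$ factors and noting that $w_{q-1}=0$ exactly at $\ell=(q-2)/3$. The intervals $[(q-2)/3,(q-1)/3]$ tile $(0,\infty)$, so $q=\lceil3\ell+1\rceil$ works, which gives $q-1<3\ell+1\leq q$, i.e. $(q-2)/3<\ell\leq(q-1)/3$. When $3\ell+1>M$ we take $q=M$, and only the nonnegativity condition is needed there; it holds because $\ell\geq(M-2)/3$.

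Also $\beta=2\gamma_q>0$, since $\ell<\bar d_q$; this is immediate when $q<M$ from $\ell\leq(q-1)/3<(q-1)/2$ for $q\geq2$, and when $q=M$ from the hypothesis $\ell<(M-1)/2$. Finally, $q\geq2$ because $\ell>0$. Uniqueness of the minimizer then forces this KKT point to be the solution. At boundary values of $\ell$ the two adjacent descriptions coincide, since the extra weight is zero, so uniqueness is consistent there.
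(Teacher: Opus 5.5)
Your proposal is correct and follows essentially the same route as the paper. Both arguments use the KKT conditions with an active lag constraint, the clipped-affine form that forces a prefix support, the centered affine coefficients with $\gamma_q=(\bar d_q-\ell)/D_q$, and the last-active/first-inactive conditions, which reduce to $(q-2)/3<\ell\leq(q-1)/3$, or to $\ell>(M-2)/3$ when $q=M$.

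The only difference is presentational. The paper derives the structure as necessary conditions, invoking Slater and ruling out $\tau=0$. You instead check that the explicit candidate satisfies KKT and then invoke sufficiency plus uniqueness, which is equally valid and handles the boundary values of $\ell$ cleanly.
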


\begin{proof}
The feasible set is compact, and the objective is strictly convex.  Hence the
minimizer is unique.  With $\bm d=(0,1,\ldots,M-1)^\top$, the Lagrangian is
\begin{equation*}
  \mathcal L_{\mathrm{KKT}}(\bm w,\tau,\nu,\bm\alpha)
  =\|\bm w\|_2^2+\tau(\bm d^\top\bm w-\ell)
  +\nu(\bm 1^\top\bm w-1)-\bm\alpha^\top\bm w.
\end{equation*}
Slater's condition holds for $\ell>0$, so the KKT conditions are
\begin{equation}
  2\bm w+\tau\bm d+\nu\bm 1-\bm\alpha=0,\qquad
  \tau(\bm d^\top\bm w-\ell)=0,\qquad
  \alpha_dw_d=0,
\label{eq:app-budget-kkt}
\end{equation}
with $\tau\geq0$ and $\bm\alpha\geq0$.  If $\tau=0$, they give the uniform
simplex minimizer, which is infeasible because $(M-1)/2>\ell$.  Thus
$\tau>0$ and the lag constraint is active.  Stationarity and complementary
slackness in Eq.~(\ref{eq:app-budget-kkt}) give
\begin{align}
  w_d>0
  &\ \Longrightarrow\ \alpha_d=0
  \ \Longrightarrow\ w_d=-\frac{\nu+\tau d}{2}>0,
  \label{eq:app-budget-active-coordinate}\\
  w_d=0
  &\ \Longrightarrow\ -\frac{\nu+\tau d}{2}
  =-\frac{\alpha_d}{2}\leq0. \notag
\end{align}
Combining the two cases yields
\begin{equation}
  w_d=\left[-\frac{\nu+\tau d}{2}\right]_+.
  \label{eq:app-budget-positive-part}
\end{equation}
Because $\tau>0$, the expression inside the positive part decreases with
$d$.  Hence the support is a prefix $S_q=\{0,\ldots,q-1\}$.  For
$d\in S_q$, the positive-part operator is inactive and
\begin{equation}
  w_d=-\frac{\nu}{2}-\frac{\tau}{2}d.
\label{eq:app-budget-active-affine}
\end{equation}
Using $\sum_{d=0}^{q-1}w_d=1$ and
$\sum_{d=0}^{q-1}d=q\bar d_q$ in
Eq.~(\ref{eq:app-budget-active-affine}) gives
\begin{equation*}
  1=-\frac{q\nu}{2}-\frac{\tau q\bar d_q}{2},
  \qquad
  -\frac{\nu}{2}=\frac1q+\frac{\tau}{2}\bar d_q.
\end{equation*}
Substitution back into Eq.~(\ref{eq:app-budget-active-affine}) yields
\begin{equation*}
  w_d=\frac1q-\frac{\tau}{2}(d-\bar d_q),
  \qquad d\in S_q.
\end{equation*}
Therefore, with
$\bm c_q=(0,1,\ldots,q-1)^\top-\bar d_q\bm 1_q$,
\begin{equation*}
  \bm w^\star_{S_q}
  =q^{-1}\bm 1_q-\frac{\tau}{2}\bm c_q.
\end{equation*}
Finally, the lag constraint is active, so
\begin{equation*}
\begin{aligned}
\ell
&=\sum_{d=0}^{q-1} d\,w_d \\
&=\bar d_q-\frac{\tau}{2}
  \sum_{d=0}^{q-1} d(d-\bar d_q) \\
&=\bar d_q-\frac{\tau}{2}D_q .
\end{aligned}
\end{equation*}
Hence $\tau/2=(\bar d_q-\ell)/D_q=\gamma_q$, which proves
Eqs.~(\ref{eq:app-budget-kernel})--(\ref{eq:app-support-moments}).
The last-active and first-inactive conditions reduce to
\begin{equation*}
  \frac{q-2}{3}<\ell\leq\frac{q-1}{3}\quad(q<M),
  \qquad
  \ell>\frac{M-2}{3}\quad(q=M),
\end{equation*}
which gives Eq.~(\ref{eq:app-budget-support}).  Finally,
$\bm 1_q^\top\bm c_q=0$ and $\|\bm c_q\|_2^2=D_q$, so
\begin{equation*}
  R(\bm w^\star(\ell))
  =\left\|q^{-1}\bm 1_q-\frac{\tau}{2}\bm c_q\right\|_2^2
  =\frac1q+\frac{(\bar d_q-\ell)^2}{D_q}.
\end{equation*}
\end{proof}

We next parameterize the budget-optimal kernels by a single coefficient.

\begin{theorem}[Lag-noise Trade-off Weight Averaging (\LNWA{}) Kernel]
\label{thm:app-frontier}
For $0\leq\lambda<\infty$, define the Lag-noise Trade-off Weight Averaging (\LNWA{}) kernel by
\begin{equation}
  \bm{w}_\lambda
  =\arg\min_{\bm{w}\in\mathcal W_M}
  \{R(\bm{w})+\lambda m(\bm{w})^2\}.
  \label{eq:app-lnwa-objective}
\end{equation}
The finite-$\lambda$ family satisfies
\begin{equation}
  \{\Phi(\bm w_\lambda):0\leq\lambda<\infty\}
  =\operatorname{Eff}(\mathcal G_M)\setminus\{\Phi(\bm e_0)\}.
  \label{eq:app-lnwa-finite-frontier}
\end{equation}
Its unique solution has an active prefix of size $q\geq2$ and
\begin{equation}
  (\bm{w}_\lambda)_d=
  \begin{cases}
    q^{-1}+\lambda m_q(\lambda)(\bar d_q-d),&0\leq d<q,\\
    0,&q\leq d<M,
  \end{cases}
  \label{eq:app-lnwa-kernel}
\end{equation}
where
\begin{equation}
  m_q(\lambda)=m(\bm w_\lambda)
      =\frac{\bar d_q}{1+\lambda D_q},\qquad
  \bar d_q=\frac{q-1}{2},\qquad
  D_q=\sum_{d=0}^{q-1}(d-\bar d_q)^2
      =\frac{q(q^2-1)}{12}.
  \label{eq:app-lnwa-moment}
\end{equation}
\end{theorem}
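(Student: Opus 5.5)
Our plan is to run the KKT argument of Lemma~\ref{lem:app-budget} a second time, now on the penalized objective, and afterwards use Lemma~\ref{lem:app-budget} to identify the resulting set of lag--noise points with the efficient set. \textbf{Step 1 (existence, uniqueness, form).} Since $R$ is strictly convex and $m(\bm w)^2$ is a convex quadratic, $R+\lambda m^2$ is strictly convex on the compact polytope $\mathcal W_M$, so it has exactly one minimizer. The constraints are affine, hence KKT applies with no constraint qualification needed. Stationarity reads $2w_d+2\lambda m\,d+\nu-\alpha_d=0$ with $m=m(\bm w_\lambda)$. This gives $w_d=[-\nu/2-\lambda m d]_+$. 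If $\lambda>0$ and $m>0$, the expression decreases in $d$, so the support is a prefix $S_q$; when $\lambda=0$ the minimizer is the uniform kernel ($q=M$). On $S_q$, imposing $\sum w_d=1$ yields $w_d=q^{-1}+\lambda m(\bar d_q-d)$ by the same algebra as in Lemma~\ref{lem:app-budget}. Computing $m=\sum_{d<q} d\,w_d=\bar d_q-\lambda m D_q$ then gives the closed-form expression $m=\bar d_q/(1+\lambda D_q)$, which matches Eq.~(\ref{eq:app-lnwa-moment}). To exclude $q=1$, note that $m=0$ would force $w_d=[-\nu/2]_+$ to be constant, i.e.\ the uniform kernel, whose mean age is $(M-1)/2>0$; this is a contradiction. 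Hence $m>0$, so $q\ge2$, and this also justifies the prefix argument above.

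\textbf{Step 2 (equivalence with the budget problem).} Let $\ell:=m(\bm w_\lambda)\in(0,(M-1)/2]$. Any $\bm w$ with $m(\bm w)\le\ell$ satisfies $R(\bm w)+\lambda m(\bm w)^2\ge R(\bm w_\lambda)+\lambda\ell^2$, which gives $R(\bm w)\ge R(\bm w_\lambda)$. So $\bm w_\lambda$ solves Eq.~(\ref{eq:app-lag-budget}) with budget $\ell$, and by uniqueness in Lemma~\ref{lem:app-budget} it equals $\bm w^\star(\ell)$. For the reverse inclusion, take any budget $\ell\in(0,(M-1)/2]$ with support size $q=q(\ell)$. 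Matching $\gamma_q=\lambda m_q(\lambda)$ means solving $\bar d_q/(1+\lambda D_q)=\ell$, i.e.\ $\lambda=(\bar d_q-\ell)/(\ell D_q)\ge0$. We then check, via the support-selection condition $(q-2)/3<\ell\le(q-1)/3$, that the kernel so defined is nonnegative and satisfies the first-inactive KKT inequality for the penalized problem. At that point it is the unique minimizer $\bm w_\lambda$.

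\textbf{Step 3 (frontier identification).} By Lemma~\ref{lem:app-budget}, $R^\star(\ell)$ is strictly decreasing on $(0,(M-1)/2]$. It is piecewise quadratic and continuous across support changes, which we verify at $\ell=(q-1)/3$, where $w_{q-1}=0$. Consequently the efficient points with positive lag are exactly $\{(\ell,R^\star(\ell))\}$. The remaining efficient point is $\Phi(\bm e_0)=(0,1)$, the unique kernel with zero lag, and it is attained only as $\lambda\to\infty$. Combining these facts gives Eq.~(\ref{eq:app-lnwa-finite-frontier}).

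We expect the main obstacle to be the surjectivity part of Step 2: showing that every budget $\ell$ is reached by some finite $\lambda$ with a consistent support size $q$. The worry is that $\lambda\mapsto m(\bm w_\lambda)$ could skip values at support transitions. We would first try to establish that this map is continuous, which follows from strict convexity and Berge's maximum theorem, and strictly decreasing, which follows by the usual exchange argument comparing the optimality inequalities at $\lambda_1$ and $\lambda_2$. Its limits are $(M-1)/2$ at $\lambda=0$ and $0$ as $\lambda\to\infty$, since $m\le\bar d_M/(1+\lambda D_2)$. The intermediate value theorem then supplies surjectivity, avoiding case-by-case support bookkeeping.
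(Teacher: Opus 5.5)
Your proposal is correct and follows essentially the paper's route. You use KKT on the penalized objective to get the prefix support and closed form, uniqueness in the budget problem of Lemma~\ref{lem:app-budget} to identify $\operatorname{Eff}(\mathcal G_M)$, and the inversion $\lambda=(\bar d_q-\ell)/(D_q\ell)$. That inversion by itself settles the surjectivity you flag as the main obstacle: with $m=\ell$, the penalized stationarity coefficient $2\lambda\ell$ equals the budget multiplier $\tau=2\gamma_q$, so the same $\nu,\bm\alpha$ certify optimality and the Berge/IVT detour is unnecessary.

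The remaining differences are minor. You exclude $q=1$ by noting that $m=0$ would force constant weights, whereas the paper uses the descent direction $(1-t)\bm e_0+t\bm e_1$. Your scalarization inequality makes explicit the inclusion $\{\Phi(\bm w_\lambda)\}\subseteq\operatorname{Eff}(\mathcal G_M)$, which the paper leaves implicit. In Step~3 you should add, as the paper does, that kernels with $m(\bm w)>(M-1)/2$ are dominated by the uniform kernel.
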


\begin{proof}
Any kernel with $m(\bm w)>(M-1)/2$ is dominated by the uniform kernel.  For
$0\leq\ell\leq(M-1)/2$, uniqueness in
Eq.~(\ref{eq:app-lag-budget}) implies
\begin{equation}
  \operatorname{Eff}(\mathcal G_M)
  =\{\Phi(\bm w^\star(\ell)):0\leq\ell\leq(M-1)/2\}.
\label{eq:app-frontier-budget-characterization}
\end{equation}
Indeed, a dominator of $\bm w^\star(\ell)$ contradicts its optimality or
uniqueness; conversely, $\bm w^\star(m(\bm w))$ dominates every feasible
$\bm w$ that is not the budget minimizer.

For finite $\lambda$, Eq.~(\ref{eq:app-lnwa-objective}) is strictly convex.
By the same stationarity and complementary-slackness argument as
Eqs.~(\ref{eq:app-budget-active-coordinate})--(\ref{eq:app-budget-positive-part}),
its KKT conditions give
\begin{equation}
  w_d=\left[-\frac{\nu}{2}-\lambda m d\right]_+,
  \qquad m=m(\bm w_\lambda),
\label{eq:app-lnwa-kkt-form}
\end{equation}
so the active set is a prefix.  Moreover, for sufficiently small $t>0$,
\begin{equation}
\begin{aligned}
  &R\!\left((1-t)\bm e_0+t\bm e_1\right)
  +\lambda m\!\left((1-t)\bm e_0+t\bm e_1\right)^2 \\
  &\qquad =1-2t+(\lambda+2)t^2
  <1
  =R(\bm e_0)+\lambda m(\bm e_0)^2,
\end{aligned}
\label{eq:app-lnwa-not-latest-only}
\end{equation}
so $\bm e_0$ cannot solve Eq.~(\ref{eq:app-lnwa-objective}); hence $q\geq2$.
On the active prefix $S_q=\{0,\ldots,q-1\}$,
Eq.~(\ref{eq:app-lnwa-kkt-form}) becomes
\begin{equation}
  w_d=-\frac{\nu}{2}-\lambda m d,
  \qquad d\in S_q.
\label{eq:app-lnwa-active-affine}
\end{equation}
Using $\sum_{d=0}^{q-1}w_d=1$ and
$\sum_{d=0}^{q-1}d=q\bar d_q$ gives
\begin{equation*}
  1=-\frac{q\nu}{2}-\lambda m q\bar d_q,
  \qquad
  -\frac{\nu}{2}
  =\frac1q+\lambda m\bar d_q.
\end{equation*}
Substituting this identity into
Eq.~(\ref{eq:app-lnwa-active-affine}) yields
\begin{equation*}
  w_d=\frac1q+\lambda m(\bar d_q-d),
  \qquad d\in S_q.
\end{equation*}
Therefore,
\begin{align*}
  m
  &=\sum_{d=0}^{q-1}d\,w_d\\
  &=\bar d_q
    +\lambda m\sum_{d=0}^{q-1}d(\bar d_q-d)\\
  &=\bar d_q-\lambda mD_q,
\end{align*}
where
\[
\sum_{d=0}^{q-1}d(\bar d_q-d)
=q\bar d_q^2-\sum_{d=0}^{q-1}d^2
=-D_q.
\]
Consequently,
\[
m=\frac{\bar d_q}{1+\lambda D_q},
\]
which proves Eqs.~(\ref{eq:app-lnwa-kernel}) and
(\ref{eq:app-lnwa-moment}).

For $0<\ell<(M-1)/2$, set
\begin{equation*}
  \lambda=\frac{\gamma_q}{\ell}
  =\frac{\bar d_q-\ell}{D_q\ell}.
\end{equation*}
Then Eqs.~(\ref{eq:app-budget-kernel}) and
(\ref{eq:app-lnwa-kernel}) coincide, so every interior budget solution lies
on the \LNWA{} path. The budget endpoint
$\ell=(M-1)/2$ is the uniform kernel $\bm w_0$, whereas $\ell=0$ has the
unique solution $\bm e_0$.  Eq.~(\ref{eq:app-lnwa-not-latest-only})
excludes $\bm e_0$ for every finite $\lambda$.  Together with
Eq.~(\ref{eq:app-frontier-budget-characterization}), these facts prove
Eq.~(\ref{eq:app-lnwa-finite-frontier}).
\end{proof}

\begin{lemma}[Global Structure of the \LNWA{}]
\label{lem:app-endpoints-support}
The \LNWA{} path has endpoints
\begin{equation}
  \bm w_0=\frac1M\bm 1,\qquad
  \lim_{\lambda\to\infty}\bm w_\lambda=\bm e_0.
  \label{eq:app-frontier-endpoints}
\end{equation}
For $q\geq3$, define
\begin{equation*}
  \lambda_q^{\mathrm{knot}}=\frac{6}{q(q-1)(q-2)}.
\end{equation*}
If $M>2$, all $M$ coordinates are active on
$[0,\lambda_M^{\mathrm{knot}})$; for each
$q\in\{3,\ldots,M-1\}$, exactly $q$ coordinates are active on
$[\lambda_{q+1}^{\mathrm{knot}},\lambda_q^{\mathrm{knot}})$; and exactly two
coordinates are active on $[\lambda_3^{\mathrm{knot}},\infty)$.  If $M=2$,
both coordinates remain active for every finite $\lambda$.  Defining
$\bm w_\infty:=\bm e_0$ therefore completes the frontier:
\begin{equation*}
  \{\Phi(\bm w_\lambda):0\leq\lambda\leq\infty\}
  =\operatorname{Eff}(\mathcal G_M).
\end{equation*}
Moreover, for $0\leq\lambda_1<\lambda_2\leq\infty$,
\begin{equation}
  m(\bm{w}_{\lambda_2})<m(\bm{w}_{\lambda_1}),
  \qquad
  R(\bm{w}_{\lambda_2})>R(\bm{w}_{\lambda_1}).
  \label{eq:app-monotonicity}
\end{equation}
\end{lemma}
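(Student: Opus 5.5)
The plan is to work entirely from the closed form in Theorem~\ref{thm:app-frontier}: on an active prefix of size $q$,
\[
(\bm w_\lambda)_d=q^{-1}+\lambda m_q(\lambda)(\bar d_q-d),\qquad m_q(\lambda)=\bar d_q/(1+\lambda D_q).
\]
Because the minimizer is unique, the support size $q$ is the largest $q$ for which this formula gives $(\bm w_\lambda)_{q-1}>0$, while the KKT form in Eq.~(\ref{eq:app-lnwa-kkt-form}) automatically makes the next coordinate nonpositive. So I will determine the active set at each $\lambda$ by testing the sign of the last coordinate.

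\textbf{Support.} The last coordinate is $q^{-1}-\lambda m_q(\lambda)\bar d_q$. Writing $\bar d_q=(q-1)/2$, it is positive exactly when
\[
1+\lambda D_q>\lambda q\bar d_q^2 .
\]
Substituting $D_q=q(q^2-1)/12$ and $q\bar d_q^2=q(q-1)^2/4$, the coefficient of $\lambda$ on the right, after moving terms, is
\[
\frac{q(q-1)}{12}\bigl(3(q-1)-(q+1)\bigr)=\frac{q(q-1)(q-2)}{6}.
\]
Hence the last coordinate is positive if and only if $\lambda<\lambda_q^{\mathrm{knot}}$. For $q=2$ this coefficient vanishes, so two coordinates remain positive for every finite $\lambda$; this also covers the case $M=2$.

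The knots $\lambda_q^{\mathrm{knot}}$ are strictly decreasing in $q$. I then consistency-check each candidate support size. On $[\lambda_{q+1}^{\mathrm{knot}},\lambda_q^{\mathrm{knot}})$, the $q$-prefix formula is strictly positive on its support. For the inactivity condition at index $q$, I need $-\nu/2-\lambda m q\le 0$, i.e.\ the affine extension evaluated at $d=q$ is nonpositive. The affine extension at $d=q$ equals $q^{-1}+\lambda m_q(\bar d_q-q)$. I expect this to reduce to the same algebra with $q$ replaced by $q+1$, giving exactly $\lambda\ge\lambda_{q+1}^{\mathrm{knot}}$. This step is the main bookkeeping obstacle; I would verify it by direct substitution. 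Uniqueness of the KKT point then yields the stated interval partition.

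\textbf{Endpoints and monotonicity.} At $\lambda=0$ the support is all of $\{0,\dots,M-1\}$ and the weights are uniform. As $\lambda\to\infty$ the support is $q=2$ with $m_2=\tfrac12/(1+\lambda/2)\to0$, so $\bm w_\lambda\to\bm e_0$. For monotonicity, $\lambda\mapsto\bm w_\lambda$ is continuous, being the argmin of a strictly convex objective jointly continuous in $\lambda$. Therefore $m(\bm w_\lambda)$ is continuous. On each segment $m=\bar d_q/(1+\lambda D_q)$ is strictly decreasing, so $m$ is strictly decreasing overall and tends to $0=m(\bm e_0)$. Each $\bm w_\lambda$ lies on the efficient frontier by Eq.~(\ref{eq:app-lnwa-finite-frontier}), and by Lemma~\ref{lem:app-budget} the frontier noise $R^\star(\ell)$ is strictly decreasing in $\ell$ on $[0,(M-1)/2]$. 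Hence a strictly smaller lag forces strictly larger $R$, including at the endpoint $\lambda_2=\infty$, where $R=1$.

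\textbf{Completing the frontier.} Eq.~(\ref{eq:app-lnwa-finite-frontier}) already gives all efficient points except $\Phi(\bm e_0)$. The point $\ell=0$ in the budget characterization~(\ref{eq:app-frontier-budget-characterization}) corresponds to $\bm e_0=\bm w_\infty$, so adjoining it yields all of $\operatorname{Eff}(\mathcal G_M)$.

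For the claim that $R^\star$ is strictly decreasing, I would use that $R^\star(\ell)$ is the value function of a convex problem with an active constraint. Its multiplier is $\tau>0$, so $R^\star$ strictly decreases in $\ell$. Equivalently, $-2\gamma_q<0$ on each piece, and $R^\star$ is continuous across knots.
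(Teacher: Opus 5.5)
Your proof is correct. The support analysis is the same as the paper's: you test positivity of the last active coefficient and nonpositivity of the affine continuation at $d=q$. The computation you deferred does work out. Since $q\bar d_q(\bar d_q-q)=-q(q^2-1)/4$, the condition $q^{-1}+\lambda m_q(\lambda)(\bar d_q-q)\le0$ is equivalent to $1-\lambda q(q^2-1)/6\le0$, i.e.\ $\lambda\ge\lambda_{q+1}^{\mathrm{knot}}$.

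The genuine difference is the monotonicity step. The paper never uses continuity in $\lambda$ or the segment formula for $m$. It writes the optimality inequality of $\bm w_{\lambda_1}$ at $\lambda_1$ and of $\bm w_{\lambda_2}$ at $\lambda_2$, and adds them to get $(\lambda_2-\lambda_1)\bigl[m(\bm w_{\lambda_2})^2-m(\bm w_{\lambda_1})^2\bigr]\le0$. Strictness follows because equality would force $\bm w_{\lambda_1}=\bm w_{\lambda_2}$ by uniqueness, while $(\bm w_\lambda)_0-(\bm w_\lambda)_1=\lambda m(\bm w_\lambda)$ differs at distinct $\lambda$. The paper then reads off $R(\bm w_{\lambda_1})<R(\bm w_{\lambda_2})$ from the first inequality, using uniqueness of the uniform minimizer of $R$ when $\lambda_1=0$. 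The case $\lambda_2=\infty$ is settled by $m(\bm e_0)=0$ and $R(\bm e_0)=1$, together with the fact that every finite-$\lambda$ kernel has at least two positive coordinates.

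That argument is self-contained and generic: it is the standard comparative-statics argument for any scalarization with unique minimizers. Your route instead needs continuity of the argmin. You can get this from Berge's theorem, or directly by checking that both adjacent segment formulas give $m=(q-1)/3$ at $\lambda_{q+1}^{\mathrm{knot}}$. It also imports the frontier identification from Theorem~\ref{thm:app-frontier} and Lemma~\ref{lem:app-budget}. In exchange, you obtain the explicit piecewise lag profile $m=\bar d_q/(1+\lambda D_q)$ along the whole path.

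Your last step can also be shortened. Once $m$ is strictly decreasing and every $\bm w_\lambda$ (including $\bm e_0$) is Pareto-efficient, $R(\bm w_{\lambda_2})\le R(\bm w_{\lambda_1})$ would make $\bm w_{\lambda_2}$ dominate $\bm w_{\lambda_1}$. So the strict decrease of $R^\star(\ell)$ is not actually needed.
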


\begin{proof}
At $\lambda=0$, Eq.~(\ref{eq:app-lnwa-objective}) reduces to minimizing
$R(\bm w)$ over $\mathcal W_M$, whose unique solution is
$\bm w_0=M^{-1}\bm 1$.  This proves the first identity in
Eq.~(\ref{eq:app-frontier-endpoints}).

Suppose the active prefix is $S_q=\{0,\ldots,q-1\}$.  To verify that its size
is exactly $q$, we check its last active coefficient and the affine continuation
to the next coordinate, denoted by $\widetilde w_q$.
Eqs.~(\ref{eq:app-lnwa-kernel})--(\ref{eq:app-lnwa-moment}) give
\begin{equation*}
\begin{aligned}
  \widetilde w_q
  &:=q^{-1}+\lambda m_q(\lambda)(\bar d_q-q),\\
  (\bm w_\lambda)_{q-1}>0
  &\iff \lambda<\lambda_q^{\mathrm{knot}} &&(q\geq3),\\
  \widetilde w_q\leq0
  &\iff \lambda\geq\lambda_{q+1}^{\mathrm{knot}} &&(q<M).
\end{aligned}
\end{equation*}
These two boundary conditions give $[\lambda_{q+1}^{\mathrm{knot}},\lambda_q^{\mathrm{knot}})$; at each knot, the oldest active coefficient reaches zero.

When $M>2$, the two-coordinate segment is
$\lambda\in[\lambda_3^{\mathrm{knot}},\infty)$; when $M=2$, it is the entire
finite path $\lambda\in[0,\infty)$.  On this segment,
Theorem~\ref{thm:app-frontier} gives
\begin{equation}
  \bm w_\lambda
  =\left(\frac{\lambda+1}{\lambda+2},
          \frac1{\lambda+2},0,\ldots,0\right).
  \label{eq:app-lnwa-two-coordinate}
\end{equation}
Taking
$\lambda\to\infty$ in Eq.~(\ref{eq:app-lnwa-two-coordinate}) gives
\begin{equation*}
  \lim_{\lambda\to\infty}\bm w_\lambda=\bm e_0.
\end{equation*}
This proves the second identity in Eq.~(\ref{eq:app-frontier-endpoints}).

To prove Eq.~(\ref{eq:app-monotonicity}), first suppose
$0\leq\lambda_1<\lambda_2<\infty$.  Optimality at the two coefficients gives
\begin{align}
  R(\bm w_{\lambda_1})+\lambda_1m(\bm w_{\lambda_1})^2
  &\leq R(\bm w_{\lambda_2})+\lambda_1m(\bm w_{\lambda_2})^2,
  \label{eq:app-monotonicity-opt-1}\\
  R(\bm w_{\lambda_2})+\lambda_2m(\bm w_{\lambda_2})^2
  &\leq R(\bm w_{\lambda_1})+\lambda_2m(\bm w_{\lambda_1})^2.
  \label{eq:app-monotonicity-opt-2}
\end{align}
Adding Eqs.~(\ref{eq:app-monotonicity-opt-1}) and
(\ref{eq:app-monotonicity-opt-2}) yields
\begin{equation*}
  (\lambda_2-\lambda_1)
  \left[m(\bm w_{\lambda_2})^2-m(\bm w_{\lambda_1})^2\right]\leq0.
\end{equation*}
Thus,
\begin{equation*}
  m(\bm w_{\lambda_2})\leq m(\bm w_{\lambda_1}).
\end{equation*}
Equality would force $R(\bm w_{\lambda_2})=R(\bm w_{\lambda_1})$ by
Eqs.~(\ref{eq:app-monotonicity-opt-1})--(\ref{eq:app-monotonicity-opt-2}),
so strict convexity would give $\bm w_{\lambda_2}=\bm w_{\lambda_1}$.
But Eq.~(\ref{eq:app-lnwa-active-affine}) gives
$(\bm w_\lambda)_0-(\bm w_\lambda)_1=\lambda m(\bm w_\lambda)$, with
$m(\bm w_\lambda)>0$ for every finite $\lambda$. Thus the same kernel
cannot occur at distinct coefficients, and the inequality is strict.
Eq.~(\ref{eq:app-monotonicity-opt-1}) then gives
\begin{equation*}
  R(\bm w_{\lambda_1})-R(\bm w_{\lambda_2})
  \leq\lambda_1
  \left[m(\bm w_{\lambda_2})^2-m(\bm w_{\lambda_1})^2\right]\leq0.
\end{equation*}
This is strict when $\lambda_1>0$. For $\lambda_1=0$, it is strict because
$\bm w_0$ uniquely minimizes $R$ and
$\bm w_{\lambda_2}\ne\bm w_0$. Thus,
\begin{equation*}
  R(\bm w_{\lambda_2})>R(\bm w_{\lambda_1}).
\end{equation*}
Finally, Eq.~(\ref{eq:app-frontier-endpoints}) gives
$m(\bm w_\infty)=0$ and $R(\bm w_\infty)=1$. Every finite-$\lambda_1$
kernel has at least two positive coordinates, so
\begin{equation*}
\begin{aligned}
  m(\bm w_\infty)&=0<m(\bm w_{\lambda_1}),\\
  R(\bm w_\infty)&=1>R(\bm w_{\lambda_1}).
\end{aligned}
\end{equation*}
\end{proof}

For the PTQ analysis below, we record the closed-form coordinates of each
fixed-support segment of the \LNWA{} frontier.

\begin{lemma}[Closed-Form Coordinates of the \LNWA{} Frontier]
\label{lem:app-frontier-coordinates}
On an active prefix of size $q\geq2$,
\begin{equation}
  m(\bm w_\lambda)^2=\frac{\bar d_q^2}{(1+D_q\lambda)^2},\qquad
  R(\bm w_\lambda)=\frac1q+
  \frac{D_q\bar d_q^2\lambda^2}{(1+D_q\lambda)^2}.
  \label{eq:app-frontier-coordinates}
\end{equation}
\end{lemma}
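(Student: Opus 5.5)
The plan is to work directly from the closed-form kernel in Theorem~\ref{thm:app-frontier}. On an active prefix of size $q$, write $\bm w_\lambda$ restricted to $S_q$ as $q^{-1}\bm 1_q-\lambda m_q(\lambda)\bm c_q$, where $\bm c_q=(0,1,\ldots,q-1)^\top-\bar d_q\bm 1_q$ is the centered age vector introduced in the proof of Lemma~\ref{lem:app-budget}. Two elementary identities will carry the whole computation:
\[
\bm 1_q^\top\bm c_q=0,
\qquad
\|\bm c_q\|_2^2=D_q .
\]

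For the lag coordinate, Eq.~(\ref{eq:app-lnwa-moment}) already gives $m(\bm w_\lambda)=m_q(\lambda)=\bar d_q/(1+\lambda D_q)$. Squaring this yields the first formula in Eq.~(\ref{eq:app-frontier-coordinates}) immediately.

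For the noise coordinate, expand the squared norm of the decomposition. The cross term is proportional to $\bm 1_q^\top\bm c_q=0$ and drops out. The first piece contributes $\|q^{-1}\bm 1_q\|_2^2=1/q$. The second contributes
\[
\lambda^2m_q(\lambda)^2\|\bm c_q\|_2^2=\lambda^2D_q\,\frac{\bar d_q^2}{(1+\lambda D_q)^2}.
\]
Adding the two pieces gives the second formula. This mirrors the last display in the proof of Lemma~\ref{lem:app-budget}.

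As a consistency check, substitute $\ell=m_q(\lambda)$ into $R^\star(\ell)=1/q+(\bar d_q-\ell)^2/D_q$. Since $\bar d_q-m_q(\lambda)=\bar d_q\lambda D_q/(1+\lambda D_q)$, this reproduces the same expression.

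There is no real obstacle here. The only point needing care is that the formulas hold only on the $\lambda$-interval where the support is exactly $S_q$, which is the interval identified in Lemma~\ref{lem:app-endpoints-support}. On that interval the affine coefficients are the genuine kernel entries, so the computation applies verbatim.
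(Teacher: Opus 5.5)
Your proposal is correct and matches the paper's proof: the lag coordinate is read off from Eq.~(\ref{eq:app-lnwa-moment}), and $R$ follows by expanding the squared kernel with $\sum_{d=0}^{q-1}(\bar d_q-d)=0$ and $\sum_{d=0}^{q-1}(\bar d_q-d)^2=D_q$. Your centered vector $\bm c_q$ simply repackages these two identities. The consistency check against $R^\star(\ell)$ is a nice extra, but it is not needed.
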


\begin{proof}
Eq.~(\ref{eq:app-lnwa-moment}) gives the first identity.  Using
$\sum_{d=0}^{q-1}(\bar d_q-d)=0$ and
$\sum_{d=0}^{q-1}(\bar d_q-d)^2=D_q$ in
Eq.~(\ref{eq:app-lnwa-kernel}) gives
\begin{align*}
  R(\bm w_\lambda)
  &=\sum_{d=0}^{q-1}
  \left[q^{-1}+\lambda m_q(\lambda)(\bar d_q-d)\right]^2\\
  &=\frac1q+\frac{2\lambda m_q(\lambda)}q
  \sum_{d=0}^{q-1}(\bar d_q-d)
  +\lambda^2m_q(\lambda)^2
  \sum_{d=0}^{q-1}(\bar d_q-d)^2\\
  &=\frac1q+D_q\lambda^2m_q(\lambda)^2
  =\frac1q+\frac{D_q\bar d_q^2\lambda^2}
  {(1+D_q\lambda)^2}.
\end{align*}
\end{proof}

Theorem~\ref{thm:app-frontier} derives the \LNWA{} kernel and its
finite-$\lambda$ Pareto path.  Lemma~\ref{lem:app-endpoints-support} then
characterizes the global structure of this path, including its endpoints,
support transitions, and monotonicity in $\lambda$.
Lemma~\ref{lem:app-frontier-coordinates} gives its closed-form lag--noise
coordinates on each fixed-support segment.

\subsection{Classical averaging kernels as \LNWA{} special cases}
\label{app:classical}

This subsection shows that classical averaging kernels can be viewed as
special cases of the \LNWA{} framework.  We first present its two endpoints,
\LAWA{} (uniform averaging) and Stable (the latest weight at the end of the
stable-LR training trajectory), followed by \WMA{} (linear weighting) and
\EMA{} (exponential weighting).

\begin{proposition}[The two endpoints]
\label{prop:app-endpoints}
The $\lambda=0$ endpoint of the extended \LNWA{} path is \LAWA{}, whereas its
$\lambda=\infty$ endpoint is Stable. Their lag--noise coordinates are therefore
\begin{equation*}
  \Phi(\bm{w}^{\LAWA})=\left(\frac{M-1}{2},\frac1M\right),
  \qquad
  \Phi(\bm{w}^{\mathrm{Stable}})=(0,1).
\end{equation*}
\end{proposition}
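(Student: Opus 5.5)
The proof identifies the two endpoints of the extended \LNWA{} path with the named classical kernels, then evaluates the coordinate map $\Phi$ at each.

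\textbf{Endpoints.} Lemma~\ref{lem:app-endpoints-support}, Eq.~(\ref{eq:app-frontier-endpoints}), already gives $\bm w_0=M^{-1}\bm 1$ and $\bm w_\infty:=\lim_{\lambda\to\infty}\bm w_\lambda=\bm e_0$. It remains to match these with the definitions in Table~\ref{tab:app-classical-kernels}. \LAWA{} assigns the uniform coefficient $1/M$ to each of the $M$ retained weights, so $\bm w^{\LAWA}=\bm w_0$. Stable keeps only the newest weight $\bm\theta_T$, so $\bm w^{\mathrm{Stable}}=\bm e_0=\bm w_\infty$. For completeness, the $\lambda=0$ identity can also be re-derived directly: at $\lambda=0$ the objective in Eq.~(\ref{eq:app-lnwa-objective}) is $R(\bm w)=\|\bm w\|_2^2$ on the simplex, and Cauchy--Schwarz gives $\|\bm w\|_2^2\ge(\bm 1^\top\bm w)^2/M=1/M$, with equality only at the uniform kernel.

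\textbf{Coordinates.} For \LAWA{}, $m=\frac1M\sum_{d=0}^{M-1}d=\frac{M-1}{2}$ and $R=M\cdot M^{-2}=1/M$. As a cross-check, Lemma~\ref{lem:app-frontier-coordinates} with $q=M$ and $\lambda=0$ gives $m=\bar d_M=(M-1)/2$ and $R=1/M$. For Stable, $m(\bm e_0)=0\cdot1=0$ and $R(\bm e_0)=1$.

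\textbf{Main subtlety.} The only delicate point is that the Stable endpoint is attained only in the limit, since no finite $\lambda$ yields $\bm e_0$ by Eq.~(\ref{eq:app-lnwa-not-latest-only}). The statement therefore concerns the \emph{extended} path, with $\bm w_\infty$ defined as this limit. To confirm that $\Phi(\bm w_\infty)$ is the limit of the frontier coordinates, I would note that $\Phi$ is continuous: it is a polynomial in $\bm w$. Alternatively, on the two-coordinate segment Eq.~(\ref{eq:app-lnwa-two-coordinate}) gives $m=1/(\lambda+2)\to0$ and $R=\frac{(\lambda+1)^2+1}{(\lambda+2)^2}\to1$.
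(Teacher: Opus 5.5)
Your proposal is correct and follows the paper's route. The paper proves this proposition simply by citing Lemma~\ref{lem:app-endpoints-support}, which gives $\bm w_0=M^{-1}\bm 1$ and $\bm w_\infty=\bm e_0$ and records $m(\bm w_\infty)=0$ and $R(\bm w_\infty)=1$; your direct evaluation of $\Phi$, the Cauchy--Schwarz check at $\lambda=0$, and the continuity remark only make those steps explicit.
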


This proposition is established by Lemma~\ref{lem:app-endpoints-support}. We next locate the linearly weighted rule on the interior of the
frontier.

\begin{proposition}[WMA on the \LNWA{} Frontier]
\label{prop:app-wma}
The \WMA{} kernel $\bm{w}^{\WMA}\in\mathcal W_M$ has coordinates
$(\bm{w}^{\WMA})_d=2(M-d)/[M(M+1)]$.  Its lag--noise coordinates and
corresponding \LNWA{} coefficient are
\begin{equation*}
  m_{\WMA}=\frac{M-1}{3},
  \qquad
  R_{\WMA}=\frac{2(2M+1)}{3M(M+1)},
  \qquad
  \lambda_{\WMA}=\frac{6}{M(M^2-1)}.
\end{equation*}
Consequently, $\bm w^{\WMA}=\bm w_{\lambda_{\WMA}}$.
\end{proposition}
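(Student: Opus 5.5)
The plan is a direct verification in three parts: first check that the \WMA{} kernel is feasible and compute its coordinates, then produce the \LNWA{} coefficient that reproduces it, and finally identify the two kernels through the closed form of Theorem~\ref{thm:app-frontier}.

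\textbf{Feasibility.} Each coordinate $2(M-d)/[M(M+1)]$ is positive. Summing over $d$ uses $\sum_{d=0}^{M-1}(M-d)=M(M+1)/2$, so the coordinates sum to one and $\bm w^{\WMA}\in\mathcal W_M$.

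\textbf{Coordinates $m_{\WMA}$ and $R_{\WMA}$.} These follow from the elementary sums
\[
\sum_{d=0}^{M-1}d(M-d)=\frac{M(M^2-1)}{6},
\qquad
\sum_{k=1}^{M}k^2=\frac{M(M+1)(2M+1)}{6}.
\]
Multiplying the first sum by $2/[M(M+1)]$ gives $m_{\WMA}=(M-1)/3$. The second sum, multiplied by $4/[M(M+1)]^2$, gives $R_{\WMA}=2(2M+1)/[3M(M+1)]$.

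\textbf{Identification with $\bm w_{\lambda_{\WMA}}$.} Set $q=M$ and $\lambda=\lambda_{\WMA}=6/[M(M^2-1)]$. Since $D_M=M(M^2-1)/12$, we get $\lambda D_M=1/2$ and hence $m_M(\lambda)=\bar d_M/(1+\tfrac12)=(M-1)/3$. This already matches $m_{\WMA}$.

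We then need the full active support, i.e.\ $\lambda_{\WMA}<\lambda_M^{\mathrm{knot}}=6/[M(M-1)(M-2)]$. This holds because $M+1>M-2$. For $M=2$, Lemma~\ref{lem:app-endpoints-support} says both coordinates are always active, so no knot condition is needed.

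Substituting into \eqref{eq:app-lnwa-kernel}, the slope is
\[
\lambda m_M(\lambda)=\frac{6}{M(M^2-1)}\cdot\frac{M-1}{3}=\frac{2}{M(M+1)},
\]
so the coordinates are
\[
(\bm w_\lambda)_d=\frac1M+\frac{2}{M(M+1)}\left(\frac{M-1}{2}-d\right)=\frac{(M+1)+(M-1)-2d}{M(M+1)}=\frac{2(M-d)}{M(M+1)}.
\]
These are exactly the \WMA{} coordinates.

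The point that needs care is self-consistency, since the closed form in Theorem~\ref{thm:app-frontier} is valid only on its active prefix. There are two ways to settle this, and I would state the second as the decisive one.
\begin{itemize}[leftmargin=*]
\item Check the support through the knot inequality above.
\item Check the KKT conditions directly. The \WMA{} kernel is strictly positive and affine in $d$ with slope $-\lambda m(\bm w)$, which is the stationarity form \eqref{eq:app-lnwa-kkt-form} with all multipliers $\alpha_d=0$. Strict convexity of the objective then makes it the unique minimizer.
\end{itemize}
As a final consistency check, Lemma~\ref{lem:app-frontier-coordinates} gives $R=\frac1M+D_M\lambda^2 m^2$, which should reproduce $R_{\WMA}$.
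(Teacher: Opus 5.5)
Your proof is correct and follows essentially the paper's route: compute $m$ and $R$ directly from the \WMA{} coefficients, then identify the \WMA{} kernel with the full-support \LNWA{} closed form through the common slope $\lambda m = 2/[M(M+1)]$. You are more careful than the paper on one point. Its proof implicitly replaces $m(\bm w_\lambda)$ by $m_{\WMA}$ when matching slopes, and then checks an intercept that matches automatically, but it never verifies $\lambda_{\WMA}<\lambda_M^{\mathrm{knot}}$ or $m_M(\lambda_{\WMA})=m_{\WMA}$. Your identity $\lambda_{\WMA}D_M=1/2$, the knot inequality, and the direct KKT check settle exactly those points.
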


\begin{proof}
Substituting the \WMA{} coefficients into the definitions of $m$ and $R$ gives
\begin{align*}
  m_{\WMA}
  &=\sum_{d=0}^{M-1}d\,(\bm w^{\WMA})_d
   =\frac{2}{M(M+1)}
    \left(M\sum_{d=0}^{M-1}d-\sum_{d=0}^{M-1}d^2\right)
   =\frac{M-1}{3},
\\
  R_{\WMA}
  &=\sum_{d=0}^{M-1}\bigl((\bm w^{\WMA})_d\bigr)^2
   =\frac{4}{M^2(M+1)^2}\sum_{d=0}^{M-1}(M-d)^2
   =\frac{2(2M+1)}{3M(M+1)}.
\end{align*}
On the full-support face, both kernels are affine in $d$:
\begin{align}
  (\bm w_\lambda)_d
  &=\frac1M+\lambda m(\bm w_\lambda)(\bar d_M-d), \notag\\
  (\bm w^{\WMA})_d
  &=\frac{2}{M+1}-\frac{2}{M(M+1)}d.
  \label{eq:app-wma-affine}
\end{align}
Matching the coefficients of $d$ gives
\begin{equation*}
  \lambda_{\WMA}m_{\WMA}=\frac{2}{M(M+1)}
  \qquad\Longrightarrow\qquad
  \lambda_{\WMA}=\frac{6}{M(M^2-1)}.
\end{equation*}
For this coefficient, the intercept of the \LNWA{} kernel is
\begin{equation*}
  \frac1M+\lambda_{\WMA}m_{\WMA}\bar d_M
  =\frac1M+\frac{2}{M(M+1)}\frac{M-1}{2}
  =\frac{2}{M+1},
\end{equation*}
which equals the \WMA{} intercept in
Eq.~(\ref{eq:app-wma-affine}).  Hence
$\bm w^{\WMA}=\bm w_{\lambda_{\WMA}}$.
\end{proof}

The next two propositions give the exact coordinates of \EMA{} and its
\LNWA{} approximation.

\begin{proposition}[Exact Lag-Noise Coordinates of \EMA{}]
\label{prop:app-ema-coordinates}
For $M\geq2$ and $0<\beta<1$, the \EMA{} kernel is
\begin{equation*}
  (\bm{w}^{\EMA})_d=\frac{(1-\beta)\beta^d}{1-\beta^M},
\end{equation*}
with exact coordinates
\begin{align*}
  m_{\EMA}
  &=\frac{\beta-M\beta^M+(M-1)\beta^{M+1}}
  {(1-\beta)(1-\beta^M)}, \\
  R_{\EMA}
  &=\frac{(1-\beta)(1+\beta^M)}
  {(1+\beta)(1-\beta^M)}.
\end{align*}
\end{proposition}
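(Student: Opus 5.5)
The plan is a direct computation with geometric sums, done separately for the two coordinates. Let $S:=\sum_{d=0}^{M-1}\beta^d=(1-\beta^M)/(1-\beta)$. The normalization $\sum_d(\bm w^{\EMA})_d=1$ holds by construction, and positivity is obvious. So $\bm w^{\EMA}\in\mathcal W_M$, and only the closed forms of $m$ and $R$ need to be established.

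\textbf{Noise coordinate.} Compute
\[
R_{\EMA}=\frac{(1-\beta)^2}{(1-\beta^M)^2}\sum_{d=0}^{M-1}\beta^{2d}
=\frac{(1-\beta)^2}{(1-\beta^M)^2}\cdot\frac{1-\beta^{2M}}{1-\beta^2}.
\]
Factor $1-\beta^{2M}=(1-\beta^M)(1+\beta^M)$ and $1-\beta^2=(1-\beta)(1+\beta)$, then cancel one factor of $1-\beta^M$ and one of $1-\beta$. This gives the stated expression.

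\textbf{Lag coordinate.} Compute $m_{\EMA}=\frac{1-\beta}{1-\beta^M}\sum_{d=0}^{M-1}d\beta^d$. For the inner sum, differentiate $\sum_{d=0}^{M-1}\beta^d=(1-\beta^M)/(1-\beta)$ with respect to $\beta$ and multiply by $\beta$:
\[
\sum_{d=0}^{M-1}d\beta^d=\beta\,\frac{-M\beta^{M-1}(1-\beta)+(1-\beta^M)}{(1-\beta)^2}
=\frac{\beta-M\beta^M+(M-1)\beta^{M+1}}{(1-\beta)^2}.
\]
Multiplying by the prefactor cancels one power of $1-\beta$ and gives the stated formula.

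\textbf{Checks.} As a sanity check, take $M=2$. The lag formula gives $(\beta-2\beta^2+\beta^3)/((1-\beta)(1-\beta^2))=\beta/(1+\beta)$, which matches $w_1=\beta/(1+\beta)$. The noise formula gives $(1+\beta^2)/(1+\beta)^2$, which matches $w_0^2+w_1^2$.

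\textbf{Main obstacle.} The only delicate step is the algebra in the derivative of the truncated geometric sum, specifically collecting the $\beta^M$ and $\beta^{M+1}$ terms correctly. The $M=2$ check guards against errors there.
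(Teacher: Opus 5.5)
Your proposal is correct and follows essentially the same route as the paper: normalization via the finite geometric series, $m_{\EMA}$ via $\beta\,\frac{\mathrm d}{\mathrm d\beta}$ of the truncated geometric sum, and $R_{\EMA}$ via $\sum_d\beta^{2d}=(1-\beta^{2M})/(1-\beta^2)$ with the factorizations you describe. Your $M=2$ sanity check is a small addition the paper omits; it is not needed for the argument.
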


\begin{proof}
The finite geometric series gives
\begin{equation*}
  \sum_{d=0}^{M-1}(\bm w^{\EMA})_d
  =\frac{1-\beta}{1-\beta^M}
   \sum_{d=0}^{M-1}\beta^d
  =1.
\end{equation*}
Substitution into the definitions of $m$ and $R$ yields
\begin{align*}
  m_{\EMA}
  &=\frac{1-\beta}{1-\beta^M}
    \beta\frac{\mathrm d}{\mathrm d\beta}
    \left(\frac{1-\beta^M}{1-\beta}\right)
   =\frac{\beta-M\beta^M+(M-1)\beta^{M+1}}
    {(1-\beta)(1-\beta^M)},
\\
  R_{\EMA}
  &=\left(\frac{1-\beta}{1-\beta^M}\right)^2
    \sum_{d=0}^{M-1}\beta^{2d}
   =\left(\frac{1-\beta}{1-\beta^M}\right)^2
    \frac{1-\beta^{2M}}{1-\beta^2}
   =\frac{(1-\beta)(1+\beta^M)}
    {(1+\beta)(1-\beta^M)}.
\end{align*}
\end{proof}

\begin{proposition}[EMA on the \LNWA{} Frontier]
\label{prop:app-ema}
For $M=2$, \EMA{} is exactly an \LNWA{} kernel for every
$0<\beta<1$, with $\lambda=(1-\beta)/\beta$. For $M>2$, let
$\eta=-\log\beta$. For any $0<\rho\leq1$ such that
$\eta(M-1)\leq\rho$, define
\begin{equation*}
  \lambda_{\EMA}
  :=\frac{\eta}{M\bar d_M-\eta D_M}.
\end{equation*}
Then the approximation residual satisfies
\begin{equation}
  \varepsilon_{\EMA}
  :=\left\|\bm w^{\EMA}-\bm w_{\lambda_{\EMA}}\right\|_\infty
  \leq
  \frac{e^{\rho/2}\eta^2}{2M}
  \left[
    \frac{(M-1)^2}{4}
    +\left(1+\frac{\rho}{2}\right)\frac{M^2-1}{12}
  \right]
  =\mathcal O(\eta^2M).
  \label{eq:app-ema-local-error}
\end{equation}
Thus, \EMA{} is an exact special case of \LNWA{} when $M=2$ and an
approximate special case with the above residual bound when $M>2$.
\end{proposition}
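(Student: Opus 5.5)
For $M=2$ we match coordinates directly. For $M>2$ we identify $\bm w_{\lambda_{\EMA}}$ as the first-order Taylor expansion of the \EMA{} kernel about the window centre, then bound the second-order remainder.

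\textbf{Case $M=2$.} By Lemma~\ref{lem:app-endpoints-support}, for $M=2$ every finite $\lambda$ gives the two-coordinate kernel in Eq.~(\ref{eq:app-lnwa-two-coordinate}), namely $\bigl((\lambda+1)/(\lambda+2),\,1/(\lambda+2)\bigr)$. The \EMA{} kernel is $\bigl(1/(1+\beta),\,\beta/(1+\beta)\bigr)$. Solving $1/(\lambda+2)=\beta/(1+\beta)$ gives $\lambda=(1-\beta)/\beta\in(0,\infty)$, and then the first coordinates agree automatically. This proves exactness.

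\textbf{Case $M>2$: admissibility of $\lambda_{\EMA}$.}
\begin{itemize}
\item First, $M\bar d_M-\eta D_M=\tfrac{M(M-1)}{2}\bigl(1-\eta\tfrac{M+1}{6}\bigr)$. Since $\eta(M-1)\leq\rho\leq1$ and $(M+1)/(M-1)\leq2$ for $M\geq3$, we get $\eta(M+1)/6\leq1/3$. Hence $\lambda_{\EMA}>0$ is well defined.
\item Next, consider the full-support candidate from Theorem~\ref{thm:app-frontier} with $q=M$. Its slope is $\lambda m_M(\lambda)=\lambda\bar d_M/(1+\lambda D_M)$. Substituting $\lambda_{\EMA}$, the denominator becomes $M\bar d_M/(M\bar d_M-\eta D_M)$, so the slope collapses to exactly $\eta/M$.
\item The candidate kernel is therefore
\[
\ell_d:=\frac1M\bigl(1-\eta u_d\bigr),\qquad u_d:=d-\bar d_M .
\]
Since $|u_d|\leq(M-1)/2$, we have $\eta|u_d|\leq\rho/2\leq1/2$, so every coordinate is strictly positive.
\item Such a strictly positive affine kernel satisfies the KKT conditions of Eq.~(\ref{eq:app-lnwa-objective}) with all coordinates active and $\bm\alpha=\bm 0$. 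Strict convexity and uniqueness then give $\bm w_{\lambda_{\EMA}}=\bm\ell$. This is the point to check carefully: the self-consistency $m=m(\bm w)$ holds by construction of $\lambda_{\EMA}$.
\end{itemize}

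\textbf{Case $M>2$: error bound.} Rewrite the \EMA{} kernel as $w^{\EMA}_d=e^{-\eta u_d}/\sum_k e^{-\eta u_k}$.
\begin{itemize}
\item Write $e^{-\eta u}=1-\eta u+r(u)$. By convexity $r(u)\geq0$, and by Taylor's theorem
\[
r(u)\leq\tfrac12\eta^2u^2e^{\eta|u|}\leq\tfrac12 e^{\rho/2}\eta^2u^2 .
\]
\item Since $\sum_k u_k=0$, the normaliser is $M+S$ with
\[
0\leq S=\sum_k r(u_k)\leq\tfrac12 e^{\rho/2}\eta^2 D_M .
\]
\item This yields the exact identity
\[
w^{\EMA}_d-\ell_d=\frac{M r(u_d)-S(1-\eta u_d)}{M(M+S)} .
\]
\item Bound the denominator below by $M^2$ and the numerator using $|1-\eta u_d|\leq1+\rho/2$. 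Together with $u_d^2\leq(M-1)^2/4$ and $D_M/M=(M^2-1)/12$, this gives
\[
|w^{\EMA}_d-\ell_d|\leq\frac{e^{\rho/2}\eta^2}{2M}\Bigl[\tfrac{(M-1)^2}{4}+\bigl(1+\tfrac\rho2\bigr)\tfrac{M^2-1}{12}\Bigr],
\]
which is Eq.~(\ref{eq:app-ema-local-error}).
\item The $\mathcal O(\eta^2M)$ form follows because the bracket is $\mathcal O(M^2)$.
\end{itemize}

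The main obstacle is the admissibility step for $M>2$, namely showing that the full-support formula is really the minimiser at $\lambda_{\EMA}$. It is handled by the positivity of $\ell_d$ together with uniqueness. The remainder bound itself is routine bookkeeping.
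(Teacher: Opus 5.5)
Your proposal is correct and follows essentially the same argument as the paper. It uses the same coordinate match for $M=2$, the same computation $\lambda_{\EMA}m_M(\lambda_{\EMA})=\eta/M$, and the same Taylor-remainder decomposition of the centred \EMA{} kernel, which yields the identical bound. The only cosmetic difference is how full support is certified: you check positivity of the affine candidate and invoke KKT sufficiency plus uniqueness, whereas the paper cites the knot criterion $\lambda_{\EMA}<\lambda_M^{\mathrm{knot}}\iff\eta(M-1)<2$ from Lemma~\ref{lem:app-endpoints-support}; both reduce to positivity of the oldest coefficient.
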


\begin{proof}
For $M=2$, direct substitution gives
\begin{equation*}
  \bm w^{\EMA}
  =\left(\frac1{1+\beta},\frac{\beta}{1+\beta}\right)
  =\left(\frac{\lambda+1}{\lambda+2},\frac1{\lambda+2}\right)
  =\bm w_\lambda,
  \qquad
  \lambda=\frac{1-\beta}{\beta}.
\end{equation*}
For $M>2$, $\eta(M-1)\leq\rho\leq1$ gives
\begin{equation*}
  M\bar d_M-\eta D_M
  =\frac{M(M-1)}{2}
   \left(1-\frac{\eta(M+1)}{6}\right)
  \geq\frac{M(M-1)}{2}
   \left(1-\frac{\rho(M+1)}{6(M-1)}\right)>0,
\end{equation*}
so $\lambda_{\EMA}$ is well defined and positive.  It remains to verify that
the corresponding \LNWA{} kernel has full support.  By
Lemma~\ref{lem:app-endpoints-support}, this is equivalent to
$\lambda_{\EMA}<\lambda_M^{\mathrm{knot}}$.  Substituting the definitions of
$\lambda_{\EMA}$ and $\lambda_M^{\mathrm{knot}}$ gives
\begin{equation*}
  \lambda_{\EMA}<\lambda_M^{\mathrm{knot}}
  \iff \eta(M-1)<2.
\end{equation*}
Because $\eta(M-1)\leq\rho\leq1$, this condition holds.  Substituting
$\lambda_{\EMA}$ into Eq.~(\ref{eq:app-lnwa-moment}) with $q=M$ gives
\begin{equation}
  \lambda_{\EMA}m_M(\lambda_{\EMA})
  =\frac{\lambda_{\EMA}\bar d_M}
  {1+\lambda_{\EMA}D_M}
  =\frac{\eta}{M}.
  \label{eq:app-ema-slope-proof}
\end{equation}
Substituting Eq.~(\ref{eq:app-ema-slope-proof}) into
Eq.~(\ref{eq:app-lnwa-kernel}) with $q=M$ gives
\begin{equation*}
  (\bm w_{\lambda_{\EMA}})_d
  =\frac1M-\frac{\eta}{M}(d-\bar d_M).
\end{equation*}
Define
\begin{equation*}
  r_d:=e^{-\eta(d-\bar d_M)}-
  \left[1-\eta(d-\bar d_M)\right],
  \qquad
  \bar r:=\frac1M\sum_{j=0}^{M-1}r_j.
\end{equation*}
Since $|\eta(d-\bar d_M)|\leq\rho/2$, we can bound each
$r_d$; averaging these bounds then gives the bound for $\bar r$:
\begin{equation*}
  0\leq r_d
  \leq\frac{e^{\rho/2}\eta^2(d-\bar d_M)^2}{2},
  \qquad
  0\leq\bar r
  \leq\frac{e^{\rho/2}\eta^2D_M}{2M}.
\end{equation*}
Using $\sum_{j=0}^{M-1}(j-\bar d_M)=0$, the normalized \EMA{} coefficient is
exactly
\begin{equation*}
  (\bm w^{\EMA})_d
  =\frac{e^{-\eta(d-\bar d_M)}}
  {\sum_{j=0}^{M-1}e^{-\eta(j-\bar d_M)}}
  =\frac{1-\eta(d-\bar d_M)+r_d}{M(1+\bar r)}.
\end{equation*}
Therefore,
\begin{align*}
  \left|(\bm w^{\EMA})_d
  -(\bm w_{\lambda_{\EMA}})_d\right|
  &=\frac{\left|r_d-\bigl[1-\eta(d-\bar d_M)\bigr]\bar r\right|}
  {M(1+\bar r)}\\
  &\leq\frac{e^{\rho/2}\eta^2}{2M}
  \left[(d-\bar d_M)^2+
  \left(1+\frac\rho2\right)\frac{D_M}{M}\right].
\end{align*}
Taking the maximum over $d$ and using
$|d-\bar d_M|\leq(M-1)/2$ and $D_M/M=(M^2-1)/12$ proves
Eq.~(\ref{eq:app-ema-local-error}).
\end{proof}



Together, the preceding results establish the lag-noise decomposition, characterize
the \LNWA{} frontier, and relate classical averaging methods to \LNWA{}.  We
next analyze how weight averaging affects PTQ, leading to our two main
theorems.

\subsection{PTQ tolerance and degradation}
\label{app:ptq}
\label{app:turnover}

We now ask how far an averaging kernel can move toward the newest
weight before quantization performance begins to degrade. We measure
performance by the expected output reconstruction error relative to the
latest latent mean. We first compute the error introduced by the two quantization stages of simplified GPTQ model. This gives an exact expression for the expected reconstruction error. We then study this expression along the \LNWA{} frontier and locate the transition from noise tolerance to degradation.

We use the latent-mean
decomposition and write the mean of an averaged state as
\[
 \bm\mu_T^{\bm w}:=\sum_{d=0}^{M-1}w_d\bm\mu_{T-d}
 =\bm\mu_T-\bm b(\bm w),
 \qquad
 \bm\theta_T^{\bm w}=\bm\mu_T^{\bm w}+\bm\epsilon_T^{\bm w}.
\]
The term $-\bm b(\bm w)$ is the displacement due to temporal lag, and
$\bm\epsilon_T^{\bm w}$ is the retained spatial noise. Both will enter
the output reconstruction error through the channel matrices defined
below.

\begin{assumption}[Bounded Isotropic Spatial Noise]
\label{ass:app-bounded-noise}
In addition to Assumption~\ref{ass:app-noise}, assume
\[
 \bm\Sigma=\sigma^2\bm I_P,\qquad \sigma>0,
 \qquad |[\bm\epsilon_{T-d}]_j|\leq\sigma K_\epsilon
 \quad(0\leq d<M,\ 1\leq j\leq P)
\]
almost surely, for a fixed finite $K_\epsilon$.
\end{assumption}
Thus the averaged spatial noise has
\[
 \E\bm\epsilon_T^{\bm w}=\bm0,
 \qquad
 \Cov(\bm\epsilon_T^{\bm w})=\sigma^2R(\bm w)\bm I_P.
\]

We measure the effect of a parameter perturbation by the reconstruction error it induces on the fixed calibration inputs. For each linear layer $\ell$, let $i$ index its output channels and
$n^{(\ell)}$ denote the number of calibration samples. Let
$\bm X^{(\ell)}\in\mathbb R^{d_{\mathrm{in}}^{(\ell)}\times n^{(\ell)}}$
contain its fixed calibration inputs. The channel vector
$\bm\theta^{(\ell,i)}$ is the transpose of row $i$ of that layer's
parameter matrix. A change $\bm z^{(\ell,i)}$ causes mean squared
reconstruction error
\[
 \frac1{n^{(\ell)}}
 \| (\bm z^{(\ell,i)})^\top\bm X^{(\ell)}\|_2^2
 = (\bm z^{(\ell,i)})^\top\bm G^{(\ell,i)}\bm z^{(\ell,i)},
 \qquad
 \bm G^{(\ell,i)}:=\frac{\bm X^{(\ell)}(\bm X^{(\ell)})^\top}{n^{(\ell)}}.
\]
Let $\mathcal I$ be the set of layer--channel pairs included in the
model, and stack their parameter vectors in a fixed order.
The sum of their reconstruction errors is $\bm z^\top\bm G\bm z$, where
\[
 \bm G=\operatorname{diag}_{(\ell,i)\in\mathcal I}
             \bm G^{(\ell,i)}
 =\begin{pmatrix}
   \bm G^{(\ell_1,i_1)}&\bm0&\cdots\\
   \bm0&\bm G^{(\ell_2,i_2)}&\cdots\\
   \vdots&\vdots&\ddots
  \end{pmatrix},
 \qquad \|\bm z\|_{\bm G}^2:=\bm z^\top\bm G\bm z.
\]
The global matrix is block diagonal across channels. The stacked parameter dimension is
$P=\sum_{(\ell,i)\in\mathcal I}d_{\mathrm{in}}^{(\ell)}$.

Write $Q_b(\bm\theta;s)$ for the model produced by the simplified
GPTQ model at precision $b$ and scale $s$, whose channel operations are
specified below. We evaluate each averaging kernel using the expected reconstruction error
\begin{equation}
 \mathcal E_b(\bm w)
 :=\frac12\E\left[
    \|Q_b(\bm\theta_T^{\bm w};s(\bm w))-\bm\mu_T\|_{\bm G}^2\right].
 \label{eq:app-total-output-error}
\end{equation}
This sums the squared channel-output differences between the quantized
averaged model and the latest latent mean. It includes the temporal lag and spatial noise
already present before quantization, together with the change caused
by quantization. We assume that every channel matrix $\bm G^{(\ell,i)}$ is positive definite.

The simplified GPTQ model quantizes one group of coordinates and uses
the other to compensate for its output error. Partition each channel into
two nonempty groups $\mathrm f$ and $\mathrm r$. Reorder the stacked
coordinates so that all $\mathrm f$ entries precede all $\mathrm r$
entries, and apply the same permutation to every global vector and
matrix. We retain the symbols $\bm\theta$ and $\bm G$ in this order. Then
\[
 \bm\theta=\begin{pmatrix}\bm\theta_{\mathrm f}\\\bm\theta_{\mathrm r}\end{pmatrix},
 \qquad
 \bm G=\begin{pmatrix}
    \bm G_{\mathrm f,\mathrm f}&\bm G_{\mathrm f,\mathrm r}\\
    \bm G_{\mathrm r,\mathrm f}&\bm G_{\mathrm r,\mathrm r}
  \end{pmatrix}.
\]
To simplify the theoretical analysis, we do not model group-wise quantization scales. Instead, we assign the averaged state a single scale, computed before rounding from its RMS magnitude.
\begin{assumption}[Fixed RMS Scale]
\label{ass:app-fixed-rms}
For each averaging kernel $\bm w$, the scale is
\[
 s(\bm w)^2:=\frac1P\E\|\bm\theta_T^{\bm w}\|_2^2
 =\frac{\|\bm\mu_T^{\bm w}\|_2^2}{P}+\sigma^2R(\bm w).
\]
\end{assumption}

For each group $\mathrm g\in{\mathrm f,\mathrm r}$, we use a fixed range multiplier $H_{\mathrm g}>0$, shared across kernels and precisions. Given the RMS scale $s(\bm w)$, the corresponding parameter-space and normalized quantization steps are
$$
\Delta_{b,\mathrm g}(\bm w)
:=\frac{2H_{\mathrm g}s(\bm w)}{2^b-1},
\qquad
\bar\Delta_{b,\mathrm g}
:=\frac{\Delta_{b,\mathrm g}(\bm w)}{s(\bm w)}
=\frac{2H_{\mathrm g}}{2^b-1}.
$$
Accordingly, the $2^b$ reconstruction levels for group $\mathrm g$ cover the range
$$
[-H_{\mathrm g}s(\bm w),\,H_{\mathrm g}s(\bm w)].
$$

Deterministic rounding generally leads to less tractable rounding-error statistics \citep{alvarez2025introducing, zhang2026spfq}.
To simplify the analysis, following \cite{823976}, we instead use
nonsubtractive triangular dither and define
\[
\operatorname{round}(t)
:=
\left\lfloor t+U+V+\frac12\right\rfloor,
\qquad
U,V\overset{\mathrm{i.i.d.}}{\sim}\operatorname{Unif}[-1/2,1/2].
\]
Here $\operatorname{Unif}[a,b]$ denotes the uniform distribution on $[a,b]$,
and $\lfloor\cdot\rfloor$ denotes the floor operator, so
$\operatorname{round}(t)$ returns an integer code.

For an input $\bm z$, a scale $s>0$, and group range $H_{\mathrm g}$,
write $\Delta=s\bar\Delta_{b,\mathrm g}$. Quantization assigns the
integer code and maps it back to parameter space as
\[
 k_j=\operatorname{clip}_{[0,2^b-1]}
       \left(\operatorname{round}
          \left(\frac{z_j}{\Delta}+\frac{2^b-1}{2}\right)\right),
 \qquad
 [\operatorname{quant}_b(\bm z;s,H_{\mathrm g})]_j
 =\Delta\left(k_j-\frac{2^b-1}{2}\right).
\]
Here $\operatorname{clip}$ restricts the integer code to the displayed
range. We next derive the compensation and write the  full-model simplified GPTQ. At the common scale $s$, the first group
has rounding error
\[
 \bm e_{\mathrm f}
 :=\operatorname{quant}_b(\bm\theta_{\mathrm f};s,H_{\mathrm f})-\bm\theta_{\mathrm f}.
\]
The remaining group is adjusted to minimize the output reconstruction
error caused by this first rounding:
\begin{equation}
 \bm c_{\mathrm r}^{\star}
 :=\operatorname*{argmin}_{\bm c_{\mathrm r}}
 \begin{pmatrix}\bm e_{\mathrm f}\\\bm c_{\mathrm r}\end{pmatrix}^{\!\top}
 \bm G
 \begin{pmatrix}\bm e_{\mathrm f}\\\bm c_{\mathrm r}\end{pmatrix}.
 \label{eq:app-gptq-operation}
\end{equation}
Differentiating this quadratic objective gives
\[
 \bm G_{\mathrm r,\mathrm r}\bm c_{\mathrm r}^{\star}
 +\bm G_{\mathrm r,\mathrm f}\bm e_{\mathrm f}=\bm0.
\]
Since $\bm G_{\mathrm r,\mathrm r}\succ0$, the compensation is
\[
 \bm c_{\mathrm r}^{\star}=-\bm K\bm e_{\mathrm f},
 \qquad
 \bm K:=\bm G_{\mathrm r,\mathrm r}^{-1}\bm G_{\mathrm r,\mathrm f}.
\]
The matrix $\bm K$ specifies how first-group rounding changes the
remaining group. Write $B_K:=\|\bm K\|_\infty$, where the matrix infinity
norm is the maximum absolute row sum. The second rounding, applied after
compensation, has error
\[
 \bm\eta_{\mathrm r}
 :=\operatorname{quant}_b(\bm\theta_{\mathrm r}-\bm K\bm e_{\mathrm f};s,H_{\mathrm r})
   -(\bm\theta_{\mathrm r}-\bm K\bm e_{\mathrm f}).
\]
Thus the complete model-level map is
\begin{equation}
 Q_b(\bm\theta;s)
 =\begin{pmatrix}
   \bm\theta_{\mathrm f}+\bm e_{\mathrm f}\\
   \bm\theta_{\mathrm r}-\bm K\bm e_{\mathrm f}+\bm\eta_{\mathrm r}
  \end{pmatrix},
 \qquad
 \bm e_{\mathrm f}
 =\operatorname{quant}_b(\bm\theta_{\mathrm f};s,H_{\mathrm f})-\bm\theta_{\mathrm f}.
 \label{eq:app-global-gptq-map}
\end{equation}

We assume that the quantizer codebook has a sufficiently wide range to represent the parameter values in both quantization stages.
\begin{assumption}[Rounding Inputs within the Quantizer Range]
\label{ass:app-no-clipping}
Let $b_{\min}=2$. For every $\bm w\in\mathcal W_M$ and integer
$b\geq b_{\min}$, the two rounding inputs satisfy
\[
 \frac{\|(\bm\theta_T^{\bm w})_{\mathrm f}\|_\infty}{s(\bm w)}
 <H_{\mathrm f}\left(1-\frac1{2^b-1}\right),
 \qquad
 \frac{\|(\bm\theta_T^{\bm w})_{\mathrm r}-\bm K\bm e_{\mathrm f}\|_\infty}
      {s(\bm w)}
 <H_{\mathrm r}\left(1-\frac1{2^b-1}\right)
\]
almost surely over the training distribution and the first-stage
rounding randomness.
\end{assumption}

\begin{lemma}[A Sufficient Quantization Range]
\label{lem:app-range-sufficient}
Under Assumptions~\ref{ass:app-bounded-noise}
and~\ref{ass:app-fixed-rms}, define
\[
 Z_{\max}:=\sqrt P+K_\epsilon\sqrt M.
\]
If
\[
 H_{\mathrm f}>\frac32Z_{\max},
 \qquad
 H_{\mathrm r}>\frac32\left(Z_{\max}+B_KH_{\mathrm f}\right),
\]
then Assumption~\ref{ass:app-no-clipping} holds.
\end{lemma}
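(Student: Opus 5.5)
The plan is to bound each of the two normalized rounding inputs in Assumption~\ref{ass:app-no-clipping} deterministically, using only the almost-sure noise bound in Assumption~\ref{ass:app-bounded-noise} and the RMS scale in Assumption~\ref{ass:app-fixed-rms}. Since $2^b-1\geq3$ for $b\geq b_{\min}=2$, we have $1-1/(2^b-1)\geq 2/3$. It therefore suffices to show
\[
\frac{\|(\bm\theta_T^{\bm w})_{\mathrm f}\|_\infty}{s(\bm w)}\leq Z_{\max}
\qquad\text{and}\qquad
\frac{\|(\bm\theta_T^{\bm w})_{\mathrm r}-\bm K\bm e_{\mathrm f}\|_\infty}{s(\bm w)}\leq Z_{\max}+B_KH_{\mathrm f}.
\]
The strict hypotheses $H_{\mathrm f}>\tfrac32 Z_{\max}$ and $H_{\mathrm r}>\tfrac32(Z_{\max}+B_KH_{\mathrm f})$ then give the strict inequalities required.

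For the first bound, write $\bm\theta_T^{\bm w}=\bm\mu_T^{\bm w}+\bm\epsilon_T^{\bm w}$ and treat each coordinate separately. For the mean part, $\|\bm\mu_T^{\bm w}\|_\infty\leq\|\bm\mu_T^{\bm w}\|_2$. Assumption~\ref{ass:app-fixed-rms} gives $s(\bm w)\geq\|\bm\mu_T^{\bm w}\|_2/\sqrt P$, so the mean contributes at most $\sqrt P$. For the noise part, $|[\bm\epsilon_T^{\bm w}]_j|\leq\sum_d w_d\sigma K_\epsilon=\sigma K_\epsilon$. Assumption~\ref{ass:app-fixed-rms} also gives $s(\bm w)\geq\sigma\sqrt{R(\bm w)}\geq\sigma/\sqrt M$, using $R(\bm w)\geq1/M$ (Cauchy--Schwarz on the simplex). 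So the noise contributes at most $K_\epsilon\sqrt M$, and the two pieces sum to $Z_{\max}$.

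For the second bound, use the triangle inequality:
\[
\|(\bm\theta_T^{\bm w})_{\mathrm r}-\bm K\bm e_{\mathrm f}\|_\infty\leq\|(\bm\theta_T^{\bm w})_{\mathrm r}\|_\infty+B_K\|\bm e_{\mathrm f}\|_\infty.
\]
The first term is at most $Z_{\max}s(\bm w)$ by the same argument as above. The delicate step is bounding the first-stage rounding error $\|\bm e_{\mathrm f}\|_\infty$. Triangular dither can shift the code by up to $1.5$ steps, and clipping also acts. Both the input and the output, however, lie in $[-H_{\mathrm f}s,H_{\mathrm f}s]$: the input does by the first bound, since $Z_{\max}<H_{\mathrm f}$, and the reconstruction levels do by construction. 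The crude bound is therefore $|e_j|\leq 2H_{\mathrm f}s$, which is too large by a factor of two.

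To obtain $\|\bm e_{\mathrm f}\|_\infty\leq H_{\mathrm f}s$, I would argue more carefully. Because the input lies strictly inside the range by a margin of at least $\tfrac12 H_{\mathrm f}s$, and the dithered code moves by at most $1.5\Delta$, a clipped code must land on the near endpoint. So in the clipped case the error is at most the distance from the input to that endpoint. In the unclipped case, $|e_j|\leq1.5\Delta=3H_{\mathrm f}s/(2^b-1)\leq H_{\mathrm f}s$ for $b\geq2$. Combining the cases gives $\|\bm e_{\mathrm f}\|_\infty\leq H_{\mathrm f}s(\bm w)$.

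I expect this rounding-error bound to be the main obstacle. If the case analysis does not close, the fallback is to check that the intended bound $B_KH_{\mathrm f}$ follows from the stated factor $\tfrac32$ through the margin already established in the first stage. Dividing by $s(\bm w)$ then finishes the second inequality, and hence the lemma.
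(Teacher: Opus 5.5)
Your proof is correct and follows essentially the same route as the paper. The paper bounds both normalized inputs by $Z_{\max}$ via $s(\bm w)\geq\|\bm\mu_T^{\bm w}\|_2/\sqrt P$ and $s(\bm w)\geq\sigma/\sqrt M$, bounds $\|\bm e_{\mathrm f}\|_\infty\leq\frac32\Delta_{b,\mathrm f}(\bm w)\leq H_{\mathrm f}s(\bm w)$, and closes with $1-1/(2^b-1)\geq\frac23$. One difference: the paper uses the first-stage margin to show that clipping is inactive, since the dithered code always lies in $\{0,\ldots,2^b-1\}$, so your clipped case is vacuous and needs no analysis. Also, $Z_{\max}<\frac23H_{\mathrm f}$ only guarantees a margin exceeding $\frac13H_{\mathrm f}s(\bm w)$, not $\frac12H_{\mathrm f}s(\bm w)$; this slip does not affect your conclusion.
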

\begin{proof}
For any $\bm w\in\mathcal W_M$, $R(\bm w)\geq1/M$. Hence
$s(\bm w)\geq\sigma/\sqrt M$ and
$\|\bm\mu_T^{\bm w}\|_2/s(\bm w)\leq\sqrt P$. For each coordinate,
\[
\begin{aligned}
 \frac{|[\bm\theta_T^{\bm w}]_j|}{s(\bm w)}
 &\leq\frac{\|\bm\mu_T^{\bm w}\|_2}{s(\bm w)}
       +\frac{\sum_d w_d|[\bm\epsilon_{T-d}]_j|}{s(\bm w)}\\
 &\leq\sqrt P+\frac{\sigma K_\epsilon}{s(\bm w)}\\
 &\leq\sqrt P+K_\epsilon\sqrt M\\
 &=Z_{\max}.
\end{aligned}
\]
Because $b\geq b_{\min}=2$,
$1-1/(2^b-1)\geq2/3$. The first-stage input therefore satisfies
\[
 \frac{\|(\bm\theta_T^{\bm w})_{\mathrm f}\|_\infty}{s(\bm w)}
 \leq Z_{\max}
 <\frac23H_{\mathrm f}
 \leq H_{\mathrm f}\left(1-\frac1{2^b-1}\right).
\]
For any scalar input $z$ satisfying the corresponding margin
\[
 \frac{|z|}{s(\bm w)}
 <H_{\mathrm g}\left(1-\frac1{2^b-1}\right),
\]
we have
\[
 \frac12<\frac{z}{\Delta_{b,\mathrm g}}+\frac{2^b-1}{2}
 <2^b-\frac32.
\]
Since $-1\leq U+V\leq1$, it follows that
\[
 0<\frac{z}{\Delta_{b,\mathrm g}}+\frac{2^b-1}{2}
      +U+V+\frac12<2^b.
\]
Its floor therefore belongs to $\{0,\ldots,2^b-1\}$, so clipping is
inactive. The first-stage margin established above thus ensures that
the perturbation has magnitude at most
$\Delta_{b,\mathrm f}$, and nearest-integer rounding adds at most
$\Delta_{b,\mathrm f}/2$. Thus
\[
 \|\bm e_{\mathrm f}\|_\infty
 \leq\frac32\Delta_{b,\mathrm f}(\bm w),
 \qquad
 \frac{\|\bm K\bm e_{\mathrm f}\|_\infty}{s(\bm w)}
 \leq\frac{3B_KH_{\mathrm f}}{2^b-1}.
\]
The second input consequently satisfies
\[
\begin{aligned}
 \frac{\|(\bm\theta_T^{\bm w})_{\mathrm r}
             -\bm K\bm e_{\mathrm f}\|_\infty}
      {s(\bm w)}
 &\leq Z_{\max}+\frac{3B_KH_{\mathrm f}}{2^b-1}\\
 &\leq Z_{\max}+B_KH_{\mathrm f}\\
 &<\frac23H_{\mathrm r}\\
 &\leq H_{\mathrm r}\left(1-\frac1{2^b-1}\right).
\end{aligned}
\]
This is the second inequality in
Assumption~\ref{ass:app-no-clipping}; the first was established above.
Hence that assumption holds for both quantization stages.
\end{proof}

We next derive the parameter error induced by the simplified GPTQ procedure and its corresponding output reconstruction error.
\[
 \bm C_{\mathrm f}
 :=\bm G_{\mathrm f,\mathrm f}-\bm G_{\mathrm f,\mathrm r}\bm K.
\]
\begin{lemma}[Parameter and Reconstruction Errors of the Simplified GPTQ Model]
\label{lem:app-calibration}
\label{lem:app-reconstruction-error}
The Schur complement $\bm C_{\mathrm f}$ is positive definite. The
quantization-induced global parameter error is
\[
 \bm\delta_b(\bm\theta;s)
 :=Q_b(\bm\theta;s)-\bm\theta
 =\begin{pmatrix}\bm e_{\mathrm f}\\-\bm K\bm e_{\mathrm f}+\bm\eta_{\mathrm r}\end{pmatrix}.
\]
Its output reconstruction error satisfies
\begin{equation}
 \bm\delta_b(\bm\theta;s)^\top\bm G\bm\delta_b(\bm\theta;s)
 =\bm e_{\mathrm f}^\top\bm C_{\mathrm f}\bm e_{\mathrm f}
  +\bm\eta_{\mathrm r}^\top\bm G_{\mathrm r,\mathrm r}\bm\eta_{\mathrm r}.
 \label{eq:app-global-reconstruction-identity}
\end{equation}
\end{lemma}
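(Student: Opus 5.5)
The lemma is pure linear algebra, so the plan is a direct computation in three steps. First we establish positive definiteness of the Schur complement, then read off the parameter error from the map in Eq.~(\ref{eq:app-global-gptq-map}), and finally expand the quadratic form and check that the cross terms vanish.

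For the first claim, recall that every channel matrix $\bm G^{(\ell,i)}$ is assumed positive definite. Hence the block-diagonal $\bm G$ is positive definite, and it stays so after the fixed permutation that places all $\mathrm f$ coordinates first. In particular $\bm G_{\mathrm r,\mathrm r}\succ0$, so $\bm K$ is well defined. The Schur complement of $\bm G_{\mathrm r,\mathrm r}$ in $\bm G$ is
\[
\bm C_{\mathrm f}=\bm G_{\mathrm f,\mathrm f}-\bm G_{\mathrm f,\mathrm r}\bm G_{\mathrm r,\mathrm r}^{-1}\bm G_{\mathrm r,\mathrm f}.
\]
The standard block factorization
\[
\bm G=\begin{pmatrix}\bm I&\bm G_{\mathrm f,\mathrm r}\bm G_{\mathrm r,\mathrm r}^{-1}\\ \bm0&\bm I\end{pmatrix}
\begin{pmatrix}\bm C_{\mathrm f}&\bm0\\ \bm0&\bm G_{\mathrm r,\mathrm r}\end{pmatrix}
\begin{pmatrix}\bm I&\bm0\\ \bm K&\bm I\end{pmatrix}
\]
is a congruence, since the outer factors are transposes of each other by symmetry of $\bm G$. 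Congruence preserves positive definiteness, so $\bm C_{\mathrm f}\succ0$. Alternatively, for $\bm x\neq\bm0$, evaluate $\bm z^\top\bm G\bm z$ at $\bm z=(\bm x,-\bm K\bm x)$; this value equals $\bm x^\top\bm C_{\mathrm f}\bm x$ and is positive.

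The formula for $\bm\delta_b$ follows immediately by subtracting $\bm\theta$ from Eq.~(\ref{eq:app-global-gptq-map}).

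For the reconstruction identity, write
\[
\bm\delta_b=\bm a+\bm c,\qquad \bm a=(\bm e_{\mathrm f},-\bm K\bm e_{\mathrm f}),\qquad \bm c=(\bm0,\bm\eta_{\mathrm r}).
\]
Then $\bm\delta_b^\top\bm G\bm\delta_b=\bm a^\top\bm G\bm a+2\bm c^\top\bm G\bm a+\bm c^\top\bm G\bm c$, and we treat the three terms in turn.
\begin{itemize}
\item The term $\bm c^\top\bm G\bm c$ equals $\bm\eta_{\mathrm r}^\top\bm G_{\mathrm r,\mathrm r}\bm\eta_{\mathrm r}$ directly.
\item For the cross term, the $\mathrm r$ block of $\bm G\bm a$ is $\bm G_{\mathrm r,\mathrm f}\bm e_{\mathrm f}-\bm G_{\mathrm r,\mathrm r}\bm K\bm e_{\mathrm f}=\bm0$ by the definition of $\bm K$. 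This is exactly the first-order optimality condition of Eq.~(\ref{eq:app-gptq-operation}), so $\bm c^\top\bm G\bm a=0$.
\item For $\bm a^\top\bm G\bm a$, since $\bm G\bm a$ has zero $\mathrm r$ block, only its $\mathrm f$ block contributes. That block is $(\bm G_{\mathrm f,\mathrm f}-\bm G_{\mathrm f,\mathrm r}\bm K)\bm e_{\mathrm f}=\bm C_{\mathrm f}\bm e_{\mathrm f}$, which gives $\bm a^\top\bm G\bm a=\bm e_{\mathrm f}^\top\bm C_{\mathrm f}\bm e_{\mathrm f}$.
\end{itemize}
Summing the three terms yields Eq.~(\ref{eq:app-global-reconstruction-identity}).

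I expect no real obstacle. The only point needing care is bookkeeping under the permutation: the global $\bm G$ is block diagonal across channels, the $\mathrm f/\mathrm r$ split is taken within each channel, and after reordering $\bm G_{\mathrm r,\mathrm r}$ remains block diagonal with invertible blocks. The definition of $\bm K$ and the orthogonality argument above therefore hold globally as well as channelwise.
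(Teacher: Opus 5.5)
Your proposal is correct and follows essentially the same route as the paper. The paper completes the square in $\bm c_{\mathrm r}$ to get $J(\bm c_{\mathrm r})=\bm e_{\mathrm f}^\top\bm C_{\mathrm f}\bm e_{\mathrm f}+(\bm c_{\mathrm r}+\bm K\bm e_{\mathrm f})^\top\bm G_{\mathrm r,\mathrm r}(\bm c_{\mathrm r}+\bm K\bm e_{\mathrm f})$, which is the same computation as your cross term vanishing via $\bm G_{\mathrm r,\mathrm r}\bm K=\bm G_{\mathrm r,\mathrm f}$; its positive-definiteness argument is exactly your alternative of evaluating $\bm G$ at $(\bm x,-\bm K\bm x)$, and your block-factorization congruence is an equally valid standard variant.
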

\begin{proof}
Because $\bm G\succ0$, its principal block
$\bm G_{\mathrm r,\mathrm r}$ is positive definite. Let $J$ be the
global reconstruction objective in Eq.~(\ref{eq:app-gptq-operation}).
For a fixed first-group error, expanding and differentiating it gives
\[
\begin{aligned}
 J(\bm c_{\mathrm r})
 &=\bm e_{\mathrm f}^\top\bm G_{\mathrm f,\mathrm f}\bm e_{\mathrm f}
   +2\bm c_{\mathrm r}^\top\bm G_{\mathrm r,\mathrm f}\bm e_{\mathrm f}
   +\bm c_{\mathrm r}^\top\bm G_{\mathrm r,\mathrm r}\bm c_{\mathrm r},\\
 \nabla_{\bm c_{\mathrm r}}J
 &=2\bm G_{\mathrm r,\mathrm f}\bm e_{\mathrm f}
   +2\bm G_{\mathrm r,\mathrm r}\bm c_{\mathrm r}.
\end{aligned}
\]
Setting this derivative to zero recovers
$\bm c_{\mathrm r}^{\star}=-\bm K\bm e_{\mathrm f}$.

Completing the square decomposes the reconstruction error into the
residual error after optimal compensation and the additional error from
a non-optimal compensation:
\begin{equation}
J(\bm c_{\mathrm r})
=
\underbrace{
\bm e_{\mathrm f}^\top
\bm C_{\mathrm f}
\bm e_{\mathrm f}
}_{\text{residual error}}
+
\underbrace{
(\bm c_{\mathrm r}+\bm K\bm e_{\mathrm f})^\top
\bm G_{\mathrm r,\mathrm r}
(\bm c_{\mathrm r}+\bm K\bm e_{\mathrm f})
}_{\text{compensation error}}.
\label{eq:app-completed-square}
\end{equation}

This identity holds for every first-group rounding error
$\bm e_{\mathrm f}$. At the optimum,
$\bm c_{\mathrm r}^{\star}=-\bm K\bm e_{\mathrm f}$, so the second term
in Eq.~(\ref{eq:app-completed-square}) vanishes and
\[
\bm e_{\mathrm f}^\top\bm C_{\mathrm f}\bm e_{\mathrm f}
=
\begin{pmatrix}
\bm e_{\mathrm f}\\
-\bm K\bm e_{\mathrm f}
\end{pmatrix}^{\!\top}
\bm G
\begin{pmatrix}
\bm e_{\mathrm f}\\
-\bm K\bm e_{\mathrm f}
\end{pmatrix}
>0
\qquad
(\bm e_{\mathrm f}\neq\bm0).
\]
Hence $\bm C_{\mathrm f}\succ0$. The second rounding introduces the additional error
$\bm\eta_{\mathrm r}$, so that
\[
\bm c_{\mathrm r}
=
\bm c_{\mathrm r}^{\star}+\bm\eta_{\mathrm r},
\]
therefore
\[
\bm c_{\mathrm r}+\bm K\bm e_{\mathrm f}
=
\bm\eta_{\mathrm r},
\]
Substituting this expression into Eq.~(\ref{eq:app-completed-square})
gives
\[
J
=
\bm e_{\mathrm f}^\top\bm C_{\mathrm f}\bm e_{\mathrm f}
+
\bm\eta_{\mathrm r}^\top
\bm G_{\mathrm r,\mathrm r}
\bm\eta_{\mathrm r},
\]
which proves Eq.~(\ref{eq:app-global-reconstruction-identity}).
\end{proof}

Lemma~\ref{lem:app-calibration} gives the parameter and reconstruction errors of the simplified GPTQ model.
We use this result to analyze the quantization-induced contribution to
$\mathcal E_b(\bm w)$. Let
\[
\bm a
:=
\bm\theta-\bm\mu_T
=
\begin{pmatrix}
\bm a_{\mathrm f}\\
\bm a_{\mathrm r}
\end{pmatrix}.
\]
we have
\[
Q_b(\bm\theta;s)-\bm\mu_T
=
\bm a+\bm\delta_b(\bm\theta;s),
\]
and the change in reconstruction error introduced by quantization is
\begin{equation}
\begin{aligned}
&\frac{1}{2}
\left\|
\bm a+\bm\delta_b(\bm\theta;s)
\right\|_{\bm G}^{2}
-
\frac{1}{2}
\|\bm a\|_{\bm G}^{2}
\\
&=
\bm a_{\mathrm f}^{\top}
\bm C_{\mathrm f}
\bm e_{\mathrm f}
+
\bm a_{\mathrm f}^{\top}
\bm G_{\mathrm f,\mathrm r}
\bm\eta_{\mathrm r}
+
\bm a_{\mathrm r}^{\top}
\bm G_{\mathrm r,\mathrm r}
\bm\eta_{\mathrm r}
+
\frac{1}{2}
\bm e_{\mathrm f}^{\top}
\bm C_{\mathrm f}
\bm e_{\mathrm f}
+
\frac{1}{2}
\bm\eta_{\mathrm r}^{\top}
\bm G_{\mathrm r,\mathrm r}
\bm\eta_{\mathrm r}.
\end{aligned}
\label{eq:app-gptq-error-change}
\end{equation}

Eq.~(\ref{eq:app-gptq-error-change}) separates the quantization-induced
change into linear and quadratic rounding terms. We first fix the input and take the expectation over the rounding randomness, then take the expectation over the spatial noise. The resulting moments give the exact expected contribution of each term. Partition
$\bm\mu_T=(\bm\mu_{T,\mathrm f}^{\top},\bm\mu_{T,\mathrm r}^{\top})^{\top}$
in the same way as $\bm\theta$. Substituting
$\bm a=\bm\theta-\bm\mu_T$ into Eq.~(\ref{eq:app-gptq-error-change}) and
expanding the quadratic forms gives
\begin{equation}
\begin{aligned}
&\E\!\left[\frac12\|\bm a+\bm\delta_b(\bm\theta;s)\|_{\bm G}^2
             -\frac12\|\bm a\|_{\bm G}^2\right]=\mathcal T_1+\mathcal T_2+\mathcal T_3+\mathcal T_4,
\end{aligned}
\label{eq:app-expected-rounding-terms}
\end{equation}
Here the four terms are
\[
\begin{aligned}
\mathcal T_1
&:=\E[\bm\theta_{\mathrm f}^{\top}\bm C_{\mathrm f}\bm e_{\mathrm f}]
 +\E[\bm\theta_{\mathrm f}^{\top}\bm G_{\mathrm f,\mathrm r}\bm\eta_{\mathrm r}]
 +\E[\bm\theta_{\mathrm r}^{\top}\bm G_{\mathrm r,\mathrm r}\bm\eta_{\mathrm r}],\\
\mathcal T_2
&:=-\bm\mu_{T,\mathrm f}^{\top}\bm C_{\mathrm f}\E[\bm e_{\mathrm f}]
 -\bm\mu_{T,\mathrm f}^{\top}\bm G_{\mathrm f,\mathrm r}\E[\bm\eta_{\mathrm r}]
 -\bm\mu_{T,\mathrm r}^{\top}\bm G_{\mathrm r,\mathrm r}\E[\bm\eta_{\mathrm r}],\\
\mathcal T_3
&:=\frac12\sum_j(\bm C_{\mathrm f})_{jj}\E[(\bm e_{\mathrm f})_j^2]
 +\frac12\sum_j(\bm G_{\mathrm r,\mathrm r})_{jj}
                \E[(\bm\eta_{\mathrm r})_j^2],\\
\mathcal T_4
&:=\frac12\sum_{j\ne k}(\bm C_{\mathrm f})_{jk}
                \E[(\bm e_{\mathrm f})_j(\bm e_{\mathrm f})_k]
 +\frac12\sum_{j\ne k}(\bm G_{\mathrm r,\mathrm r})_{jk}
                \E[(\bm\eta_{\mathrm r})_j(\bm\eta_{\mathrm r})_k].
\end{aligned}
\]
$\mathcal T_1$ captures the coupling between the input parameters and
rounding errors, while $\mathcal T_2$ contains rounding errors multiplied
by the fixed reference $\bm\mu_T$. The remaining terms are quadratic in
the rounding errors: $\mathcal T_3$ collects coordinatewise squares, and
$\mathcal T_4$ collects cross-coordinate products. The following lemma
gives the conditional moments needed to evaluate these terms.

\begin{lemma}[Moments of Triangular-Dither Rounding]
\label{lem:app-grid-moment-bound}
Under Assumption~\ref{ass:app-no-clipping}, let $\bm z$ be either
the first-stage input $\bm\theta_{\mathrm f}$ or the compensated
second-stage input $\bm\theta_{\mathrm r}-\bm K\bm e_{\mathrm f}$,
with its corresponding fixed step $\Delta>0$. Let $e_j$ denote the
parameter-space error introduced by $\operatorname{quant}_b(\cdot)$ in
coordinate $j$. For any integrable coefficient $\phi(\bm z)$, the
conditional moments corresponding to $\mathcal T_1,\ldots,\mathcal T_4$ are
\[
\begin{aligned}
 \E[\phi(\bm z)e_j\mid\bm z]&=0,\\
 \E[e_j\mid\bm z]&=0,\\
 \E[e_j^2\mid\bm z]&=\frac{\Delta^2}{4},\\
 \E[e_je_k\mid\bm z]&=0\qquad(j\ne k).
\end{aligned}
\]
Here the conditional expectation averages over the fresh rounding
draws, which are independent across coordinates.
\end{lemma}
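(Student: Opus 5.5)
The plan is to reduce everything to a scalar statement about one coordinate and then use the classical property of nonsubtractive triangular (TPDF) dither (cf.\ \cite{823976}): the first and second moments of the total error do not depend on the input. Fix $\bm z$ and a coordinate $j$, and set $t:=z_j/\Delta+(2^b-1)/2$. By Assumption~\ref{ass:app-no-clipping}, the argument made in the proof of Lemma~\ref{lem:app-range-sufficient} shows that clipping is inactive almost surely. Hence
\[
 e_j=\Delta\bigl(\lfloor t+W+\tfrac12\rfloor-t\bigr),\qquad W:=U+V.
\]
Writing $\varepsilon(y):=\lfloor y+\tfrac12\rfloor-y\in[-\tfrac12,\tfrac12)$ for the nearest-integer error at $y=t+W$, I get the splitting
\[
 e_j/\Delta=\varepsilon(t+W)+W .
\]
$W$ has the triangular density on $[-1,1]$ and characteristic function $\psi(u)=\operatorname{sinc}^2(u/2)=\bigl(\sin(u/2)/(u/2)\bigr)^2$. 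This function has double zeros at every $u=2\pi k$ with $k\neq0$, so both $\psi(2\pi k)$ and $\psi'(2\pi k)$ vanish there.

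\textbf{First moment.} I will expand the $1$-periodic sawtooth $\varepsilon$ in its Fourier series, $\varepsilon(y)=\sum_{k\neq0}c_k e^{2\pi i k y}$ with $c_k=(-1)^k/(2\pi i k)$ up to sign convention. Then
\[
 \E[\varepsilon(t+W)]=\sum_{k\neq0}c_k e^{2\pi i k t}\psi(2\pi k)=0 .
\]
Since $\E W=0$, this gives $\E[e_j\mid\bm z]=0$ for every fixed $t$. The series is not absolutely convergent, so the exchange of sum and expectation needs a justification. I will either convolve against the density of $W$ first, which makes the coefficients decay like $k^{-3}$, or argue directly that $\E[\varepsilon(t+W)]$ is a $1$-periodic function of $t$ whose Fourier coefficients are $c_k\psi(2\pi k)=0$.

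\textbf{Second moment.} I expand
\[
 \E[(\varepsilon+W)^2]=\E\varepsilon^2+2\E[\varepsilon W]+\E W^2 ,
\]
where $\E W^2=1/6$. For $\E[\varepsilon(t+W)W]$, each Fourier mode contributes $c_k e^{2\pi ikt}\,\E[We^{2\pi ikW}]=c_k e^{2\pi ikt}\,(-i)\psi'(2\pi k)=0$. For $\E\varepsilon^2$, I expand $\varepsilon^2$ in Fourier series; its mean is $\int_{-1/2}^{1/2}y^2\,dy=1/12$, and every nonzero mode is killed by $\psi(2\pi k)=0$. This gives
\[
 \E[e_j^2\mid\bm z]=\Delta^2\Bigl(\tfrac1{12}+\tfrac16\Bigr)=\tfrac{\Delta^2}{4}.
\]
This step is the main obstacle. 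The delicate part is justifying the Fourier manipulations when $\psi$ vanishes only to second order, and checking that the two dither-dependent pieces cancel exactly rather than approximately. If the Fourier route becomes messy, my fallback is a direct computation: write $\E[(\lfloor t+W+\frac12\rfloor-t)^2]$ as a piecewise-quadratic integral against the triangular density, and verify over one period $t\in[0,1)$ that it is identically $1/4$.

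\textbf{Remaining identities.} The last two identities follow from conditional independence. Given $\bm z$, the pairs $(U,V)$ are fresh and independent across coordinates, so $e_j$ and $e_k$ are conditionally independent for $j\neq k$, and $\E[e_je_k\mid\bm z]=\E[e_j\mid\bm z]\,\E[e_k\mid\bm z]=0$. Since $\phi(\bm z)$ is $\bm z$-measurable and integrable, $\E[\phi(\bm z)e_j\mid\bm z]=\phi(\bm z)\,\E[e_j\mid\bm z]=0$. For the second stage, $\bm z=\bm\theta_{\mathrm r}-\bm K\bm e_{\mathrm f}$ depends on the first-stage draws. Conditioning on $\bm z$ (and on the first-stage draws) leaves the second-stage dither independent, so the same scalar argument applies verbatim with $\Delta=\Delta_{b,\mathrm r}$.
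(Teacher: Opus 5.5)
Your proof is correct, but it takes a different route from the paper's.

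\textbf{The paper's route.} The paper uses no Fourier analysis. It conditions on one uniform component $v$ and sets $t=z+v$. The other component $u$ then turns nearest rounding of $t+u$ into unbiased stochastic rounding between the neighbouring grid points $a$ and $a+\Delta$. Writing $r=t-a$, this gives $\E[q-t\mid z,v]=0$ and $\E[(q-t)^2\mid z,v]=r(\Delta-r)$. With $e=(q-t)+v$, the cross term vanishes by the tower property. Because $v$ sweeps a full cell, $r$ is uniform on $[0,\Delta)$, and the variance is $\Delta^2/6+\Delta^2/12=\Delta^2/4$.

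\textbf{Your route.} You split $e_j/\Delta=\varepsilon(t+W)+W$ into the deterministic rounding error at the dithered point plus the full triangular dither. You then remove the input dependence through the double zeros of $\psi(u)=\operatorname{sinc}^2(u/2)$, which is the classical nonsubtractive-dither moment theory.

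\textbf{Comparison.} The paper's argument is shorter and has no convergence issues. Yours is more informative about which property of the dither does the work:
\begin{itemize}
\item the mean and $\E[\varepsilon^2]$ become input-independent using only $\psi(2\pi k)=0$, which a single uniform dither already provides;
\item the cross term $\E[\varepsilon W]$ is exactly where $\psi'(2\pi k)=0$, and hence the triangular shape, is needed.
\end{itemize}
Your approach also extends directly to other dither laws and higher moments.

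\textbf{The step you flagged as the main obstacle.} Your own second suggestion settles it. Each of $\E[\varepsilon(t+W)]$, $\E[\varepsilon(t+W)W]$ and $\E[\varepsilon^2(t+W)]$ is a bounded $1$-periodic function convolved against an integrable density. Each is therefore continuous and $1$-periodic. Its Fourier coefficients follow from Fubini as $\hat\varepsilon(k)\psi(2\pi k)$, $-i\,\hat\varepsilon(k)\psi'(2\pi k)$ and $\widehat{\varepsilon^2}(k)\psi(2\pi k)$. A continuous periodic function whose nonzero-index coefficients all vanish is constant everywhere, not just almost everywhere.

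\textbf{A more elementary finish.} You can avoid Fourier series entirely by folding onto one period. For the triangular density $p(w)=(1-|w|)_+$ and $s\in[0,1)$,
\[
\sum_n p(s+n)=1,\qquad \sum_n (s+n)\,p(s+n)=s(1-s)+(s-1)s=0 .
\]
These two identities give all three moment computations directly.

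\textbf{Minor correction.} $\varepsilon$ takes values in $(-\tfrac12,\tfrac12]$, not $[-\tfrac12,\tfrac12)$. This does not affect the argument.
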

\begin{proof}
Condition on the vector input $\bm z$ and fix a coordinate $j$. Write
$z=z_j$. In parameter space, let
$u,v$ be independent uniform variables on $[-\Delta/2,\Delta/2]$.
Fix $v$ and put $t=z+v$. Let $a$ and $a+\Delta$ be the neighboring
grid points with $a\leq t<a+\Delta$, and set $r=t-a$.
Nearest rounding of $t+u$ returns $a$ with probability
$1-r/\Delta$ and $a+\Delta$ with probability $r/\Delta$. If $q$
denotes the output, direct calculation gives
\begin{align}
 \E[q-t\mid z,v]
 &=(-r)\left(1-\frac r\Delta\right)
   +(\Delta-r)\frac r\Delta=0,
 \label{eq:app-dither-conditional-mean}\\
 \E[(q-t)^2\mid z,v]
 &=r^2\left(1-\frac r\Delta\right)
   +(\Delta-r)^2\frac r\Delta=r(\Delta-r). \notag
\end{align}
The full error is $e=q-z=(q-t)+v$. Taking the expectation over $v$ therefore yields
\[
 \E[e\mid z]=\E[v]=0.
\]
As $v$ ranges uniformly over one grid cell, $r$ also ranges uniformly
over $[0,\Delta)$, irrespective of $z$ and the grid translation.
By Eq.~(\ref{eq:app-dither-conditional-mean}), the mixed term satisfies
\[
 \E[v(q-t)\mid z]
 =\E\!\left[v\,\E[q-t\mid z,v]\mid z\right]=0.
\]
Consequently,
\[
\begin{aligned}
 \E[e^2\mid z]
 &=\E[(q-t)^2\mid z]+\E[v^2]+2\E[v(q-t)\mid z]\\
 &=\E[r(\Delta-r)]+\E[v^2]+0\\
 &=\frac1\Delta\int_0^\Delta r(\Delta-r)\,\mathrm dr
   +\frac1\Delta\int_{-\Delta/2}^{\Delta/2}v^2\,\mathrm dv\\
 &=\frac{\Delta^2}{6}+\frac{\Delta^2}{12}
 =\frac{\Delta^2}{4}.
\end{aligned}
\]
This calculation holds for every coordinate of $\bm z$.
Since $\phi(\bm z)$ is fixed conditional on $\bm z$,
\[
 \E[\phi(\bm z)e_j\mid\bm z]
 =\phi(\bm z)\E[e_j\mid\bm z]=0.
\]
Independent rounding draws across coordinates give
\[
 \E[e_je_k\mid\bm z]
 =\E[e_j\mid\bm z]\E[e_k\mid\bm z]=0
 \qquad(j\ne k).
\]
This proves all four identities in the lemma.
\end{proof}

For the first group, condition on the full input $\bm\theta$.
For the remaining group, condition on both $\bm\theta$ and the
first-stage error $\bm e_{\mathrm f}$; its compensated input is then
fixed and its rounding draws are fresh. With
$d_{\mathrm f}:=\dim\bm\theta_{\mathrm f}$ and
$d_{\mathrm r}:=\dim\bm\theta_{\mathrm r}$,
Lemma~\ref{lem:app-grid-moment-bound} gives
\begin{equation}
\begin{aligned}
 \E[\bm e_{\mathrm f}\mid\bm\theta]&=\bm0,
 &\E[\bm e_{\mathrm f}\bm e_{\mathrm f}^{\top}\mid\bm\theta]
 &=\frac{\Delta_{b,\mathrm f}(\bm w)^2}{4}\bm I_{d_{\mathrm f}},\\
 \E[\bm\eta_{\mathrm r}\mid\bm\theta,\bm e_{\mathrm f}]&=\bm0,
 &\E[\bm\eta_{\mathrm r}\bm\eta_{\mathrm r}^{\top}
            \mid\bm\theta,\bm e_{\mathrm f}]
 &=\frac{\Delta_{b,\mathrm r}(\bm w)^2}{4}\bm I_{d_{\mathrm r}}.
\end{aligned}
 \label{eq:app-rounding-moments}
\end{equation}
Eq.~(\ref{eq:app-rounding-moments}) gives
\[
 \E[\bm\delta_b(\bm\theta;s(\bm w))\mid\bm\theta]
 =\begin{pmatrix}
   \E[\bm e_{\mathrm f}\mid\bm\theta]\\
   -\bm K\E[\bm e_{\mathrm f}\mid\bm\theta]
        +\E[\E[\bm\eta_{\mathrm r}\mid\bm\theta,\bm e_{\mathrm f}]
                 \mid\bm\theta]
  \end{pmatrix}=\bm0.
\]
Thus the mixed contribution in Eq.~(\ref{eq:app-gptq-error-change})
has zero expectation. In the four-term expansion, $ \mathcal T_1=\mathcal T_2=\mathcal T_4=0,$ and
\[
\begin{aligned}
 \mathcal T_3
 &=\frac12\sum_j(\bm C_{\mathrm f})_{jj}
                       \frac{\Delta_{b,\mathrm f}(\bm w)^2}{4}
   +\frac12\sum_j(\bm G_{\mathrm r,\mathrm r})_{jj}
                       \frac{\Delta_{b,\mathrm r}(\bm w)^2}{4}\\
 &=\frac{\Delta_{b,\mathrm f}(\bm w)^2}{8}\tr(\bm C_{\mathrm f})
   +\frac{\Delta_{b,\mathrm r}(\bm w)^2}{8}
                  \tr(\bm G_{\mathrm r,\mathrm r}).
\end{aligned}
\]
This evaluates the quantization contribution. The next lemma combines
it with the temporal lag and spatial noise present before quantization.

\begin{lemma}[Exact Expected Reconstruction Error]
\label{lem:app-quantization-expansion}
Under Assumptions~\ref{ass:app-bounded-noise},
\ref{ass:app-fixed-rms}, and~\ref{ass:app-no-clipping}, the simplified GPTQ model satisfies, for every $\bm w\in\mathcal W_M$
and integer $b\geq b_{\min}$,
\begin{equation}
 \mathcal E_b(\bm w)
 =\mathcal P_0(\bm w)+\gamma_b\mathcal Q(\bm w),
 \label{eq:app-error-expansion}
\end{equation}
where the reconstruction error before quantization is
\[
 \mathcal P_0(\bm w)
 :=\frac12\|\bm\mu_T^{\bm w}-\bm\mu_T\|_{\bm G}^2
       +N_GR(\bm w),\qquad N_G:=\frac{\sigma^2}{2}\tr(\bm G).
\]
The quantization contribution separates into the parameter energy
\[
 \mathcal Q(\bm w):=\E\|\bm\theta_T^{\bm w}\|_2^2
 =Ps(\bm w)^2=\|\bm\mu_T^{\bm w}\|_2^2+P\sigma^2R(\bm w)
\]
and the precision-dependent coefficient
\[
 \gamma_b:=\frac{c_G}{(2^b-1)^2},\qquad
 c_G:=\frac{H_{\mathrm f}^2\tr(\bm C_{\mathrm f})
              +H_{\mathrm r}^2\tr(\bm G_{\mathrm r,\mathrm r})}{2P}>0.
\]
\end{lemma}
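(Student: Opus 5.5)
The proof combines three ingredients already in place: the GPTQ reconstruction identity of Lemma~\ref{lem:app-calibration}, the rounding moments of Lemma~\ref{lem:app-grid-moment-bound} (summarized in Eq.~(\ref{eq:app-rounding-moments})), and the lag--noise decomposition of Lemma~\ref{lem:app-exact}. The plan has three steps: split the total error into an unquantized part and a quantization-induced part, evaluate each in expectation, and then rewrite the quantization part in terms of $P s(\bm w)^2$.

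\textbf{Step 1: the unquantized part.} Set $\bm a=\bm\theta_T^{\bm w}-\bm\mu_T$ and $\bm\delta_b=Q_b(\bm\theta_T^{\bm w};s(\bm w))-\bm\theta_T^{\bm w}$. Then
\[
\tfrac12\|\bm a+\bm\delta_b\|_{\bm G}^2=\tfrac12\|\bm a\|_{\bm G}^2+\Bigl[\tfrac12\|\bm a+\bm\delta_b\|_{\bm G}^2-\tfrac12\|\bm a\|_{\bm G}^2\Bigr].
\]
By Lemma~\ref{lem:app-exact}, $\bm a=(\bm\mu_T^{\bm w}-\bm\mu_T)+\bm\epsilon_T^{\bm w}$, where the first term is deterministic given $\mathcal F_0$ and the second has mean zero and covariance $\sigma^2R(\bm w)\bm I_P$ (Assumption~\ref{ass:app-bounded-noise}). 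The cross term therefore vanishes in expectation, and
\[
\E\,\tfrac12\|\bm a\|_{\bm G}^2=\tfrac12\|\bm\mu_T^{\bm w}-\bm\mu_T\|_{\bm G}^2+\tfrac12\tr(\bm G\,\sigma^2R(\bm w)\bm I)=\mathcal P_0(\bm w).
\]

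\textbf{Step 2: the quantization-induced part.} Expand the bracket using Eq.~(\ref{eq:app-gptq-error-change}) into $\mathcal T_1+\dots+\mathcal T_4$, as in Eq.~(\ref{eq:app-expected-rounding-terms}). Assumption~\ref{ass:app-no-clipping} rules out clipping, so Lemma~\ref{lem:app-grid-moment-bound} applies to both stages. Apply it by iterated conditioning.
\begin{itemize}
\item Condition on $\bm\theta$ for $\bm e_{\mathrm f}$.
\item Condition on $(\bm\theta,\bm e_{\mathrm f})$ for $\bm\eta_{\mathrm r}$.
\end{itemize}
This kills every linear term. It matters that in $\bm a_{\mathrm f}^\top\bm G_{\mathrm f,\mathrm r}\bm\eta_{\mathrm r}$ and $\bm a_{\mathrm r}^\top\bm G_{\mathrm r,\mathrm r}\bm\eta_{\mathrm r}$ the coefficient is measurable with respect to the conditioning variables, so these terms also vanish. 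The off-diagonal quadratic terms vanish as well, giving $\mathcal T_1=\mathcal T_2=\mathcal T_4=0$. The remaining term is
\[
\mathcal T_3=\frac{\Delta_{b,\mathrm f}^2}{8}\tr(\bm C_{\mathrm f})+\frac{\Delta_{b,\mathrm r}^2}{8}\tr(\bm G_{\mathrm r,\mathrm r}).
\]
The scale $s(\bm w)$ is deterministic by Assumption~\ref{ass:app-fixed-rms}, so this holds unconditionally as well.

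\textbf{Step 3: rewriting in terms of $P s(\bm w)^2$.} Substitute $\Delta_{b,\mathrm g}=2H_{\mathrm g}s(\bm w)/(2^b-1)$ to get
\[
\mathcal T_3=\frac{s(\bm w)^2\bigl(H_{\mathrm f}^2\tr\bm C_{\mathrm f}+H_{\mathrm r}^2\tr\bm G_{\mathrm r,\mathrm r}\bigr)}{2(2^b-1)^2}=\gamma_b\,P s(\bm w)^2.
\]
Assumption~\ref{ass:app-fixed-rms} and the covariance of $\bm\epsilon_T^{\bm w}$ give
\[
P s(\bm w)^2=\E\|\bm\theta_T^{\bm w}\|_2^2=\|\bm\mu_T^{\bm w}\|_2^2+P\sigma^2R(\bm w)=\mathcal Q(\bm w).
\]

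It remains to show $c_G>0$. By Lemma~\ref{lem:app-calibration}, $\bm C_{\mathrm f}\succ0$, and $\bm G_{\mathrm r,\mathrm r}\succ0$ as a principal block of $\bm G\succ0$. Both traces are therefore positive, and $H_{\mathrm g}>0$.

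The main obstacle is the justification in Step 2, namely the measurability and iterated-expectation bookkeeping. The second-stage input depends on $\bm e_{\mathrm f}$, so one must verify that $\bm\eta_{\mathrm r}$'s conditional moments do not depend on that input; they do not, since Lemma~\ref{lem:app-grid-moment-bound} gives input-independent moments. One must also check that the dither draws are independent of the training noise, so that the conditioning is legitimate.
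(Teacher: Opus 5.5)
Your proposal is correct and follows essentially the same route as the paper. You split off $\mathcal P_0$ via the latent-mean decomposition, and you remove all linear and off-diagonal rounding terms by iterated conditioning with Lemma~\ref{lem:app-grid-moment-bound}. You then evaluate the surviving diagonal term through the Schur-complement identity of Lemma~\ref{lem:app-calibration}, and substituting $\Delta_{b,\mathrm g}=2H_{\mathrm g}s(\bm w)/(2^b-1)$ gives $\gamma_b P s(\bm w)^2=\gamma_b\mathcal Q(\bm w)$.

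The only cosmetic difference is that the paper disposes of the cross term in one stroke via $\E[\bm\delta_b\mid\bm\theta]=\bm 0$, whereas you handle it term by term. Your explicit check that $c_G>0$ (from $\bm C_{\mathrm f}\succ0$ and $\bm G_{\mathrm r,\mathrm r}\succ0$) is a useful detail that the paper's proof leaves implicit.
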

\begin{proof}
The latent-mean decomposition and zero-mean spatial noise give
\[
 \frac12\E\|\bm\theta_T^{\bm w}-\bm\mu_T\|_{\bm G}^2
 =\frac12\|\bm\mu_T^{\bm w}-\bm\mu_T\|_{\bm G}^2
       +\frac{\sigma^2}{2}\tr(\bm G)R(\bm w)
 =\mathcal P_0(\bm w).
\]
Set $\bm a=\bm\theta_T^{\bm w}-\bm\mu_T$ and use the conditional
zero mean of $\bm\delta_b$. Expanding the definition of $\mathcal E_b$
and then applying Eq.~(\ref{eq:app-global-reconstruction-identity}) gives
\[
\begin{aligned}
 \mathcal E_b(\bm w)
 &=\frac12\E\|Q_b(\bm\theta_T^{\bm w};s(\bm w))-\bm\mu_T\|_{\bm G}^2\\
 &=\frac12\E\|\bm a+\bm\delta_b(\bm\theta_T^{\bm w};s(\bm w))\|_{\bm G}^2\\
 &=\mathcal P_0(\bm w)
   +\E\!\left[\bm a^\top\bm G
       \E[\bm\delta_b(\bm\theta_T^{\bm w};s(\bm w))\mid\bm\theta_T^{\bm w}]\right]
   +\frac12\E\|\bm\delta_b(\bm\theta_T^{\bm w};s(\bm w))\|_{\bm G}^2\\
 &=\mathcal P_0(\bm w)
   +\frac18\left[\Delta_{b,\mathrm f}(\bm w)^2\tr(\bm C_{\mathrm f})
                +\Delta_{b,\mathrm r}(\bm w)^2\tr(\bm G_{\mathrm r,\mathrm r})\right]\\
 &=\mathcal P_0(\bm w)
   +\frac{H_{\mathrm f}^2\tr(\bm C_{\mathrm f})
              +H_{\mathrm r}^2\tr(\bm G_{\mathrm r,\mathrm r})}
           {2P(2^b-1)^2}\,Ps(\bm w)^2\\
 &=\mathcal P_0(\bm w)+\gamma_b\mathcal Q(\bm w).
\end{aligned}
\]
This proves the identity for every averaging kernel in the stated model.
\end{proof}

Eq.~(\ref{eq:app-error-expansion}) connects the expected reconstruction
error to the lag-induced displacement of the averaged mean and the
retained spatial noise. To make this dependence explicit along the
\LNWA{} frontier, we use mean age to parameterize temporal lag, derive
the corresponding retained noise factor, and track the averaged latent
mean. These relations determine the transition from tolerance to
degradation.

Recall that the \LNWA{} frontier is the family $\bm w_\lambda$.
The \LAWA{} kernel $\bm w^{\LAWA}$ has mean age
$m_0:=m(\bm w^{\LAWA})=(M-1)/2$. We measure the reduction from LAWA by
\begin{equation*}
 y:=m_0-m(\bm w_\lambda),\qquad 0\leq y\leq m_0.
\end{equation*}
Thus $y=0$ corresponds to \LAWA{}, while $y=m_0$ corresponds to the
most recent saved weight. The frontier coordinates in
Lemma~\ref{lem:app-frontier-coordinates} show that mean age decreases
strictly as $\lambda$ increases. Hence each $y<m_0$ determines a unique
$\lambda(y)$. To simplify notation, we write
$\bm w(y):=\bm w_{\lambda(y)}$ for that kernel.
We also write $m(y):=m(\bm w(y))=m_0-y$ and
$R(y):=R(\bm w(y))$ for its mean age and retained noise factor.
For notational consistency, primes below denote derivatives along this
parameterized frontier. In particular,
\[
 \mathcal E_b'(y):=\frac{\mathrm d}{\mathrm dy}\mathcal E_b(\bm w(y)),
 \qquad
 \mathcal E_b''(y):=\frac{\mathrm d^2}{\mathrm dy^2}\mathcal E_b(\bm w(y)),
\]
and analogously for $\mathcal P_0$ and $\mathcal Q$.
After the later change of variable to $x$, primes correspondingly denote
derivatives with respect to $x$.
Thus mean weight age decreases linearly with $y$; the accompanying change
in spatial noise is determined next from the frontier kernel.

On a segment with $q\geq2$ active saved weights, the positive coefficients
have ages $0,\ldots,q-1$. Uniform averaging over these weights has
mean age $\bar d_q=(q-1)/2$. The reduction relative to this local
baseline is
\[
 z_q:=\bar d_q-m(y)=y-\frac{M-q}{2}.
\]
Lemma~\ref{lem:app-frontier-coordinates} then gives the kernel coefficients
\begin{equation}
 w_d(y)=\begin{cases}
 \displaystyle\frac1q+\frac{z_q(\bar d_q-d)}{D_q},&0\leq d<q,\\[4pt]
 0,&d\geq q,
 \end{cases}
 \label{eq:app-turnover-age-frontier}
\end{equation}
where $D_q:=q(q^2-1)/12$. Squaring and summing the coefficients
therefore gives the spatial noise factor
\[
 R(y)=\sum_{d=0}^{q-1}w_d(y)^2=\frac1q+\frac{z_q^2}{D_q},
\]
whose derivatives are
\[
 R'(y)=\frac{2z_q}{D_q},\qquad R''(y)=\frac2{D_q}.
\]
Finally, the original frontier parameter is recovered from
$z_q=D_q\lambda m(y)$ as
\begin{equation}
 \lambda(y)=\frac{z_q}{D_q(m_0-y)}.
 \label{eq:app-turnover-age-parameter}
\end{equation}

The endpoint is $y=m_0$, with
$\lambda(m_0)=\infty$. Write $\bm\mu(y):=\bm\mu_T^{\bm w(y)}$ for the
latent mean of this averaged state. It is affine within each support
segment because the averaging coefficients are affine there.
Using the latent-mean increments defined in Appendix~\ref{app:setup},
its derivative is

\begin{equation*}
\begin{aligned}
 \bm a_q:=\bm\mu'(y)
 &=\frac1{D_q}\sum_{d=0}^{q-1}(\bar d_q-d)\bm\mu_{T-d}\\
 &=\frac1{D_q}\sum_{d=0}^{q-1}(\bar d_q-d)
   \left(\bm\mu_{T-q+1}+\sum_{j=d+1}^{q-1}\bm v_{T-j+1}\right)\\
 &=\frac1{D_q}\sum_{j=1}^{q-1}
   \left(\sum_{d=0}^{j-1}(\bar d_q-d)\right)\bm v_{T-j+1}\\
 &=\frac1{D_q}\sum_{j=1}^{q-1}
   \left(j\bar d_q-\frac{j(j-1)}2\right)\bm v_{T-j+1}\\
 &=\sum_{j=1}^{q-1}\frac{j(q-j)}{2D_q}\bm v_{T-j+1}.
\end{aligned}
\end{equation*}

The coefficients are nonnegative and satisfy
\[
 \sum_{j=1}^{q-1}\frac{j(q-j)}{2D_q}=1.
\]
Thus $\bm a_q$ is a convex combination of the retained mean increments;
its norm is at most
$\bar v:=\max_{1\leq j<M}\|\bm v_{T-j+1}\|_2$.
We use this increment bound below to control changes in the averaged
mean at support boundaries.

On a fixed-support segment, $\bm\mu'(y)=\bm a_q$ is constant and
$R''(y)=2/D_q$. The two components have second derivatives
\[
\begin{aligned}
 \mathcal P_0''(y)
 &=\bm a_q^\top\bm G\bm a_q+\frac{2N_G}{D_q}>0,\\
 \mathcal Q''(y)
 &=2\|\bm a_q\|_2^2+\frac{2P\sigma^2}{D_q}>0.
\end{aligned}
\]
Consequently,
\begin{equation}
\begin{aligned}
 \mathcal E_b''(y)
 &=\mathcal P_0''(y)+\gamma_b\mathcal Q''(y)\\
 &=\bm a_q^\top\bm G\bm a_q
   +2\gamma_b\|\bm a_q\|_2^2
   +\frac{2N_G+2\gamma_bP\sigma^2}{D_q}>0.
\end{aligned}
 \label{eq:app-turnover-segment-curvature}
\end{equation}
These results describe movement within one support segment.
When the oldest positive averaging coefficient reaches zero, the
active set changes and $\bm\mu'(y)$ can jump. We next relate these
changes to the evolution of the latent mean over the retained window.

Across nearby saved weights in a stable training phase, latent-mean
increments are small and change gradually with normalized training
time. The following regularity condition bounds their magnitudes and
adjacent variation.
\begin{assumption}[Local Regularity of the Latent Mean]
\label{ass:app-window-regularity}
Measure training time as $t/T$. The $M$ consecutive retained weights
span the normalized length $\ell:=(M-1)/T$, with spacing
$h:=1/T=\ell/(M-1)$. The forward latent increments satisfy
\[
 \|\bm v_{T-j+1}\|_2\leq Vh
 \quad(1\leq j<M),\qquad
 \|\bm v_{T-j+1}-\bm v_{T-j+2}\|_2\leq Ah^2
 \quad(2\leq j<M),
\]
where $V,A\geq0$ are uniform over the considered late-training windows
and independent of the spacing $h$.
\end{assumption}

Summing the $d$ increments between $\bm\mu_{T-d}$ and $\bm\mu_T$ gives
\[
 \|\bm\mu_{T-d}-\bm\mu_T\|_2\leq Vdh.
\]
Denote the mean under \LAWA{} by
$\bar{\bm\mu}:=M^{-1}\sum_{d=0}^{M-1}\bm\mu_{T-d}$, and let
$D_\mu:=\max_d\|\bm\mu_{T-d}-\bm\mu_T\|_2$.
Averaging the preceding bound yields
\[
 \|\bar{\bm\mu}-\bm\mu_T\|_2
 \leq\frac1M\sum_{d=0}^{M-1}Vdh
 =\frac{V\ell}{2}.
\]
The second bound in Assumption~\ref{ass:app-window-regularity}
verifies Assumption~\ref{ass:app-progress} with
$\kappa=Ah^2$. The first bound controls $D_\mu$ and the
\LAWA{} mean displacement. Collecting these consequences gives
\begin{equation}
 \kappa=\frac{A\ell^2}{(M-1)^2},
 \qquad
 D_\mu\leq V\ell,
 \qquad
 \|\bar{\bm\mu}-\bm\mu_T\|_2\leq\frac{V\ell}{2}.
 \label{eq:app-turnover-window-rates}
\end{equation}

At a support transition, both the averaging kernel and the averaged latent mean remain continuous, although the mean slope may change. The following lemma bounds that change and the cumulative effect of the transitions.
\begin{lemma}[Slope Changes at Support Boundaries]
\label{lem:app-turnover-support-jumps}
Under Assumption~\ref{ass:app-progress}, let $y_j$,
$1\leq j\leq M-2$, be the internal support boundaries in
increasing order, and write
$\Delta\bm\mu'(y_j):=\bm\mu'(y_j+)-\bm\mu'(y_j-)$.
At every such boundary,
\[
 \|\Delta\bm\mu'(y_j)\|_2\leq\frac\kappa2,
 \qquad R'(y_j-)=R'(y_j+).
\]
For $0<y\leq m_0$, their positions satisfy
\begin{equation}
 \frac{\sum_{y_j\leq y}y_j}{y^2}\leq H_M,
\label{eq:app-turnover-knot-bound}
\end{equation}
where
\[
 H_2=0,\qquad
 H_M=\frac{12(M-2)(M-1)}{(3M-5)^2}\quad(M\geq3).
\]
\end{lemma}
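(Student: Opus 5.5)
The plan is to make everything explicit on the frontier parameterization \eqref{eq:app-turnover-age-frontier}. First I locate the internal support boundaries. As $y$ increases, $\lambda$ increases and the support shrinks from $q$ active weights to $q-1$. The boundary is the point where the oldest active coefficient $w_{q-1}(y)$ in \eqref{eq:app-turnover-age-frontier} vanishes. Since $\bar d_q-(q-1)=-(q-1)/2$, this gives
\[
z_q=\frac{2D_q}{q(q-1)}=\frac{q+1}{6},\qquad
y=\frac{q+1}{6}+\frac{M-q}{2}=\frac M2-\frac q3+\frac16 .
\]
Taking $q=M,M-1,\ldots,3$ yields the $M-2$ internal boundaries $y_k=(M+1)/6+(k-1)/3$ for $k=1,\ldots,M-2$, and for $M=2$ there are none. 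At such a point the $(q-1)$-support variable is $z_{q-1}=z_q-\tfrac12=(q-2)/6$. This matches the knot condition of Lemma~\ref{lem:app-endpoints-support}, which confirms continuity of the kernel.

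\textbf{Continuity of $R'$.} From $R'(y)=2z_q/D_q$, the left value is $\tfrac{2(q+1)/6}{q(q^2-1)/12}=\tfrac{4}{q(q-1)}$. The right value, using $D_{q-1}=(q-1)q(q-2)/12$, is $\tfrac{2(q-2)/6}{D_{q-1}}=\tfrac{4}{q(q-1)}$. So the two one-sided slopes agree.

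\textbf{Slope jump of $\bm\mu'$.} I use the representation $\bm a_q=\sum_{j=1}^{q-1}c^{(q)}_j\bm v_{T-j+1}$ with convex weights $c^{(q)}_j=j(q-j)/(2D_q)$. The jump is $\bm a_{q-1}-\bm a_q$. Because both weight vectors sum to one, I can subtract $\bm v_T$ and telescope
\[
\bm v_{T-j+1}-\bm v_T=\sum_{i=2}^{j}\bm\delta_i,\qquad \bm\delta_i:=\bm v_{T-i+1}-\bm v_{T-i+2}.
\]
This gives
\[
\bm a_q-\bm a_{q-1}=\sum_{i\ge2}\bigl(S^{(q)}_i-S^{(q-1)}_i\bigr)\bm\delta_i,
\qquad S^{(q)}_i:=\sum_{j\ge i}c^{(q)}_j .
\]
The weights $c^{(q)}$ are symmetric about $q/2$, so $\sum_i S^{(q)}_i=\sum_j c^{(q)}_j(j-1)=(q-2)/2$. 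Hence the tail differences sum to exactly $\tfrac12$.

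If every difference $S^{(q)}_i-S^{(q-1)}_i$ is nonnegative, Assumption~\ref{ass:app-progress} ($\|\bm\delta_i\|_2\le\kappa$) immediately yields $\|\Delta\bm\mu'\|_2\le\kappa/2$. So the key step is a first-order stochastic dominance of $c^{(q)}$ over $c^{(q-1)}$. I would verify it through the closed-form cumulative sums
\[
\sum_{j<i}c^{(q)}_j=\frac{(i-1)i\,(3q-2i+1)}{q(q^2-1)}
\]
(or an equivalent cubic), showing that this expression is nonincreasing in $q$ for fixed $i$. \textbf{I expect this polynomial inequality to be the main obstacle.} I would first try writing the difference at $q$ and $q-1$ as a product of manifestly nonnegative factors for $1\le i\le q-1$. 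If that fails, I would fall back on a single-crossing argument: the two pmfs are concave quadratics in $j$ that cross once, which implies dominance.

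\textbf{Knot bound \eqref{eq:app-turnover-knot-bound}.} Between consecutive knots the numerator $\sum_{y_j\le y}y_j$ is constant while $y^2$ grows. The ratio is zero for $y<y_1$, so its supremum is attained at a knot $y=y_k$. There
\[
\sum_{j\le k}y_j=\frac{k(M+1)+k(k-1)}{6},\qquad y_k=\frac{M+2k-1}{6},
\qquad\text{so}\qquad
f(k):=\frac{6k(M+k)}{(M+2k-1)^2}.
\]
I will show that $f$ is increasing on $1\le k\le M-2$. Treating $k$ as continuous, the sign of $f'(k)$ is the sign of
\[
(M+2k)(M+2k-1)-4k(M+k)=M^2-M-2k,
\]
which is positive whenever $k\le M-2$ and $M\ge3$. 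Therefore the maximum is
\[
f(M-2)=\frac{12(M-2)(M-1)}{(3M-5)^2}=H_M .
\]
The case $M=2$ is trivial since there are no knots, giving $H_2=0$.
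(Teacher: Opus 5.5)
Your proposal is correct and is essentially the paper's argument: your Abel-summation weights $S^{(q)}_i-S^{(q-1)}_i$ are exactly the paper's nonnegative second-difference coefficients $K_{q-1,i-1}$. Your symmetry argument for their total $\tfrac12$ replaces the paper's direct sum of $j(j+1)(q-j)$, and your $R'$-continuity and knot-ratio computations coincide with the paper's. The step you flagged as the main obstacle closes immediately: your cumulative-sum formula gives $\sum_{j<i}c^{(q-1)}_j-\sum_{j<i}c^{(q)}_j=\frac{6(i-1)i(q-i)}{(q-2)(q-1)q(q+1)}\geq0$ for $1\leq i\leq q$.
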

\begin{proof}
At a transition from $q+1$ to $q$ active weights, subtracting the two
coefficient derivatives in Eq.~(\ref{eq:app-turnover-age-frontier}) gives

\begin{equation*}
 \Delta w_d'
 =\frac{12(q-1-3d)}{q(q-1)(q+1)(q+2)}\quad(0\leq d<q),
 \qquad
 \Delta w_q'=\frac6{(q+1)(q+2)}.
\end{equation*}

Set
\[
 K_{qj}:=\frac{6j(j+1)(q-j)}{q(q-1)(q+1)(q+2)}
 \quad(1\leq j<q),\qquad K_{q0}=K_{qq}=0.
\]
The corresponding change in the mean slope is
\[
 \Delta\bm\mu'(y_j)
 =\sum_{r=1}^{q-1}K_{qr}
  (\bm\mu_{T-r-1}-2\bm\mu_{T-r}+\bm\mu_{T-r+1}).
\]
Expanding the second differences reproduces the coefficient changes:
the endpoint coefficients are $K_{q1}$ and $K_{q,q-1}$, and an interior
coefficient is $K_{q,j-1}-2K_{qj}+K_{q,j+1}$.
Summing the coefficients gives
\[
 \sum_{j=1}^{q-1}K_{qj}
 =\frac{6}{q(q-1)(q+1)(q+2)}
   \sum_{j=1}^{q-1}j(j+1)(q-j)
 =\frac12.
\]
Assumption~\ref{ass:app-progress} therefore gives
\[
 \|\Delta\bm\mu'(y_j)\|_2
 \leq\kappa\sum_{j=1}^{q-1}K_{qj}=\frac\kappa2.
\]

The oldest coefficient becomes zero at
$z_{q+1}=(q+2)/6$. Substituting this in the definition of $z_{q+1}$
gives $y_j=(M+2j-1)/6$ for $1\leq j\leq M-2$. At the same transition,
$z_q=(q-1)/6$, and hence
\[
 R'(y_j-)=\frac{2z_{q+1}}{D_{q+1}}
 =\frac{4}{q(q+1)}
 =\frac{2z_q}{D_q}=R'(y_j+).
\]

Between consecutive boundaries the numerator in
Eq.~(\ref{eq:app-turnover-knot-bound}) is constant, and its ratio to
$y^2$ decreases.  At the $j$th boundary the ratio is

\begin{equation*}
 \frac{\sum_{i=1}^j y_i}{y_j^2}
 =\frac{6j(M+j)}{(M+2j-1)^2}.
\end{equation*}

This expression increases for $1\leq j\leq M-2$, treating $j$ as a
real variable, the sign of its derivative is the sign of
$M(M-1)-2j>0$.  Its last value is $H_M$.
There are no internal boundaries when $M=2$.

\end{proof}

For each precision $b$, write the curvature on a segment with $q$
active weights as
\[
 a_{q,b}:=\bm a_q^\top\bm G\bm a_q
          +2\gamma_b\|\bm a_q\|_2^2
          +\frac{2N_G+2\gamma_bP\sigma^2}{D_q},
 \qquad 2\leq q\leq M,
\]
as given by Eq.~(\ref{eq:app-turnover-segment-curvature}), and define
\[
 a_{\min,b}:=\min_{2\leq q\leq M}a_{q,b},
 \qquad
 a_{\max,b}:=\max_{2\leq q\leq M}a_{q,b}.
\]
In particular,
\[
 a_{\min,b}\geq\frac{2N_G+2\gamma_bP\sigma^2}{D_M}>0.
\]
Write $A_\mu:=\max_{0\leq d<M}\|\bm\mu_{T-d}\|_2$ and
$g:=\|\bm G\|_2$. The gradient with respect to the averaged mean
satisfies
\[
 \|\nabla_{\bm\mu}\mathcal E_b\|_2
 =\|\bm G(\bm\mu-\bm\mu_T)+2\gamma_b\bm\mu\|_2
 \leq gD_\mu+2\gamma_bA_\mu
 =:G_{\mathcal E,b}.
\]
On each fixed-support segment, differentiation with respect to $y$ gives
\[
 \mathcal E_b'(y)
 =\bigl[\bm G(\bm\mu(y)-\bm\mu_T)
       +2\gamma_b\bm\mu(y)\bigr]^\top\bm\mu'(y)
  +(N_G+\gamma_bP\sigma^2)R'(y).
\]
At a transition point $y_j$, $\bm\mu(y)$ is continuous and
Lemma~\ref{lem:app-turnover-support-jumps} gives
$R'(y_j-)=R'(y_j+)$. Thus the $R'$ term has no jump, and
\begin{equation}
\begin{aligned}
 |\Delta\mathcal E_b'(y_j)|
 &=\left|\bigl[\bm G(\bm\mu(y_j)-\bm\mu_T)
                 +2\gamma_b\bm\mu(y_j)\bigr]^\top
                    \Delta\bm\mu'(y_j)\right|\\
 &\leq\frac{G_{\mathcal E,b}\kappa}{2}.
\end{aligned}
 \label{eq:app-turnover-risk-jump}
\end{equation}
Thus the adjacent-increment bound controls the effect of changing the
active set on the slope of the expected reconstruction error.

We now locate where the expected reconstruction error along the frontier changes
from tolerance to degradation. Define its average change from the
starting kernel by
\[
 F_b(y):=
 \frac{\mathcal E_b(\bm w(y))-\mathcal E_b(\bm w(0))}{y}
 \quad(y>0),
 \qquad
 F_b(0):=\mathcal E_b'(0+).
\]
Thus $F_b(y)<0$ marks a kernel with lower error than the starting
kernel, while $F_b(y)>0$ marks degradation. To determine whether this
sign can change more than once, we bound the slope of $F_b$ across
both support segments and their boundaries.

For a point $t$ away from the support boundaries, its slope equals the
initial right slope plus the curvature accumulated within the support
segments and the slope jumps at all preceding boundaries:
\[
 \mathcal E_b'(t)
 =
 \mathcal E_b'(0+)
 +\int_0^t
    \mathcal E_b''(s)\,\mathrm ds
 +\sum_{y_j<t}\Delta\mathcal E_b'(y_j).
\]
Integrating this slope over $t\in[0,y]$ and exchanging the order of
integration gives
\[
\begin{aligned}
 &\mathcal E_b(\bm w(y))-\mathcal E_b(\bm w(0))\\
 &=
 y\mathcal E_b'(0+)
 +\int_0^y\int_0^t
    \mathcal E_b''(s)
    \,\mathrm ds\,\mathrm dt
 +\int_0^y\sum_{y_j<t}\Delta\mathcal E_b'(y_j)\,\mathrm dt\\
 &=
 y\mathcal E_b'(0+)
 +\int_0^y(y-t)
    \mathcal E_b''(t)\,\mathrm dt
 +\sum_{y_j<y}(y-y_j)\Delta\mathcal E_b'(y_j).
\end{aligned}
\]
Each curvature value at $t$ contributes over the remaining interval
of length $y-t$, while a slope jump at $y_j$ contributes over the
remaining interval of length $y-y_j$.

For $y>0$ away from a support boundary, the quotient rule gives
\[
 y^2F_b'(y)
 =
 y\mathcal E_b'(y)
 -\bigl[\mathcal E_b(\bm w(y))-\mathcal E_b(\bm w(0))\bigr].
\]
Substituting the two preceding identities cancels the initial-slope
terms. The remaining terms simplify as
\[
\begin{aligned}
 y^2F_b'(y)
 &=
 y\int_0^y
    \mathcal E_b''(t)\,\mathrm dt
 -\int_0^y(y-t)
    \mathcal E_b''(t)\,\mathrm dt\\
 &\quad
 +y\sum_{y_j<y}\Delta\mathcal E_b'(y_j)
 -\sum_{y_j<y}(y-y_j)\Delta\mathcal E_b'(y_j)\\
 &=
 \int_0^y t\,
    \mathcal E_b''(t)\,\mathrm dt
 +\sum_{y_j<y}y_j\Delta\mathcal E_b'(y_j).
\end{aligned}
\]
The segment curvatures and Eqs.~(\ref{eq:app-turnover-knot-bound})
and~(\ref{eq:app-turnover-risk-jump}) bound its two terms by
\[
 \frac{a_{\min,b}y^2}{2}
 \leq\int_0^y t\,
      \mathcal E_b''(t)\,\mathrm dt
 \leq\frac{a_{\max,b}y^2}{2},
 \qquad
 \left|\sum_{y_j<y}y_j\Delta\mathcal E_b'(y_j)\right|
 \leq\frac{H_MG_{\mathcal E,b}\kappa y^2}{2}.
\]
Hence, with
\[
 c_b:=a_{\min,b}-H_MG_{\mathcal E,b}\kappa,
 \qquad
 C_b:=a_{\max,b}+H_MG_{\mathcal E,b}\kappa,
\]
we obtain
\begin{equation}
 \frac{c_b}{2}\leq F_b'(y)\leq\frac{C_b}{2}.
 \label{eq:app-turnover-quotient-bounds}
\end{equation}
The bounds extend to one-sided derivatives at support boundaries.
It remains to establish $c_b>0$ and show that the error eventually
exceeds its starting value. Eq.~(\ref{eq:app-turnover-window-rates})
gives $D_\mu\leq V\ell$; the triangle inequality then yields
\[
 A_\mu=\max_{0\leq d<M}\|\bm\mu_{T-d}\|_2
 \leq\|\bm\mu_T\|_2+D_\mu
 \leq A_*:=\|\bm\mu_T\|_2+V\ell.
\]
Substituting these bounds gives
\begin{equation}
\begin{aligned}
 c_b
 &\geq\frac{2N_G+2\gamma_bP\sigma^2}{D_M}
       -H_M(gD_\mu+2\gamma_bA_\mu)\kappa\\
 &\geq\left[\frac{2N_G}{D_M}
       -\frac{H_MgVA}{(M-1)^2}\ell^3\right]
   +2\gamma_b\left[\frac{P\sigma^2}{D_M}
       -\frac{H_MA_*A}{(M-1)^2}\ell^2\right].
\end{aligned}
 \label{eq:app-turnover-main-curvature-bound}
\end{equation}
The possible loss from boundary jumps decreases with the window
length, whereas both positive curvature terms remain bounded below.

The same trajectory bounds also determine the comparison between the
two endpoints. The corresponding means and noise factors are
\[
 (\bm\mu(0),R(0))=(\bar{\bm\mu},1/M),
 \qquad
 (\bm\mu(m_0),R(m_0))=(\bm\mu_T,1).
\]
Here $y=0$ gives \LAWA{}, while $y=m_0$ selects the most recent saved
weight.
Write $c_M:=1-1/M$. The mean-norm difference satisfies
\[
\begin{aligned}
 \left|\|\bm\mu_T\|_2^2-\|\bar{\bm\mu}\|_2^2\right|
 &=\left|(\bm\mu_T+\bar{\bm\mu})^\top
                  (\bm\mu_T-\bar{\bm\mu})\right|\\
 &\leq2A_*\|\bm\mu_T-\bar{\bm\mu}\|_2
 \leq A_*V\ell.
\end{aligned}
\]
Substituting these endpoint means and noise factors into $\mathcal E_b$
therefore yields
\begin{equation}
\begin{aligned}
 &\mathcal E_b(\bm w(m_0))-\mathcal E_b(\bm w(0))\\
 &\quad=N_Gc_M-\frac12\|\bar{\bm\mu}-\bm\mu_T\|_{\bm G}^2
   +\gamma_b\left(\|\bm\mu_T\|_2^2-\|\bar{\bm\mu}\|_2^2
                       +P\sigma^2c_M\right)\\
 &\quad\geq
 \left[N_Gc_M-\frac{gV^2\ell^2}{8}\right]
 +\gamma_b\left[P\sigma^2c_M-A_*V\ell\right].
\end{aligned}
 \label{eq:app-turnover-endpoint-rate}
\end{equation}
We retain a fixed number of consecutive saved weights as $T$ increases.
Their normalized time span therefore tends to zero.

\begin{assumption}[Fixed Retained Count under Time Refinement]
\label{ass:app-short-window}
Consider a family of training trajectories indexed by $T\to\infty$
on the normalized time interval $[0,1]$. The number $M\geq2$ of
consecutive retained weights is fixed across the family.
The model dimension $P$, calibration matrix $\bm G$, and spatial-noise
level $\sigma>0$ are fixed across the family. The constants $V,A$ in
Assumption~\ref{ass:app-window-regularity} are uniform in $T$, and
$\sup_T\|\bm\mu_T\|_2<\infty$. Thus
$\ell=(M-1)/T\to0$.
\end{assumption}

\begin{lemma}[Finite-Window Criterion and Its Consequences]
\label{lem:app-short-window-main}
Under Assumptions~\ref{ass:app-bounded-noise}--\ref{ass:app-window-regularity}, let
$A_*:=\|\bm\mu_T\|_2+V\ell$ and $c_M:=1-1/M$.
If the normalized window length satisfies
\[
\begin{aligned}
 0<\ell\leq\min\biggl\{
 \left(\frac{N_G(M-1)^2}{H_MgVA D_M}\right)^{1/3},
  \left(\frac{P\sigma^2(M-1)^2}{2H_MA_*A D_M}\right)^{1/2}, \frac{2\sqrt{N_Gc_M}}{\sqrt g\,V},
  \frac{P\sigma^2c_M}{2A_*V}
 \biggr\},
\end{aligned}
\]
where a term with zero denominator is $+\infty$, then the expected
reconstruction error obeys
\begin{equation}
\begin{aligned}
 c_b&\geq\frac{N_G+\gamma_bP\sigma^2}{D_M}>0,\\
 \mathcal E_b(\bm w(m_0))-\mathcal E_b(\bm w(0))
 &\geq\frac{(N_G+\gamma_bP\sigma^2)c_M}{2}>0.
\end{aligned}
 \label{eq:app-turnover-window-margins}
\end{equation}
Under Assumption~\ref{ass:app-short-window}, there is a $T_0$,
independent of $b$, such that
the displayed short-window criterion and hence
Eq.~(\ref{eq:app-turnover-window-margins}) hold for every $T\geq T_0$.
\end{lemma}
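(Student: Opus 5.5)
The plan is to plug the four upper bounds on $\ell$ directly into the two lower bounds already derived in Eqs.~(\ref{eq:app-turnover-main-curvature-bound}) and~(\ref{eq:app-turnover-endpoint-rate}). Each bound is built to consume exactly half of one positive term. The $T_0$ statement will then follow from $\ell=(M-1)/T\to0$.

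\textbf{Curvature margin.} Start from Eq.~(\ref{eq:app-turnover-main-curvature-bound}), which already splits $c_b$ into a $\gamma_b$-free bracket and a bracket multiplied by $2\gamma_b$.
\begin{itemize}
\item The first constraint gives $\ell^3\leq N_G(M-1)^2/(H_MgVAD_M)$. Hence $H_MgVA\ell^3/(M-1)^2\leq N_G/D_M$, and the first bracket is at least $N_G/D_M$.
\item The second constraint gives $H_MA_*A\ell^2/(M-1)^2\leq P\sigma^2/(2D_M)$. Hence the second bracket is at least $P\sigma^2/(2D_M)$, and after multiplying by $2\gamma_b$ it contributes at least $\gamma_bP\sigma^2/D_M$.
\item Adding the two gives $c_b\geq(N_G+\gamma_bP\sigma^2)/D_M$. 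This is strictly positive because $N_G=\tfrac{\sigma^2}{2}\tr(\bm G)>0$, using $\bm G\succ0$ and $\sigma>0$.
\item Degenerate cases follow the stated convention: if $H_M$, $A$, $g$, or $V$ vanishes, the subtracted term is zero and the corresponding constraint reads $+\infty$.
\end{itemize}

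\textbf{Endpoint gap.} Use Eq.~(\ref{eq:app-turnover-endpoint-rate}).
\begin{itemize}
\item The third constraint gives $gV^2\ell^2/8\leq N_Gc_M/2$, so the first bracket is at least $N_Gc_M/2$.
\item The fourth constraint gives $A_*V\ell\leq P\sigma^2c_M/2$, so the second bracket is at least $P\sigma^2c_M/2$.
\item Summing gives the gap bound $(N_G+\gamma_bP\sigma^2)c_M/2>0$, since $c_M>0$ for $M\geq2$.
\end{itemize}

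\textbf{Uniformity in $T$ and $b$.} This is the only step needing some care, because $A_*=\|\bm\mu_T\|_2+V\ell$ itself depends on $\ell$ and $T$.
\begin{itemize}
\item Under Assumption~\ref{ass:app-short-window}, bound $A_*\leq\bar A:=\sup_T\|\bm\mu_T\|_2+V(M-1)<\infty$, using $\ell\leq M-1$ when $T\geq1$.
\item Replacing $A_*$ by $\bar A$ in the second and fourth constraints makes them stricter. Each of the four right-hand sides then has a positive lower bound independent of $T$, because $M$, $P$, $\bm G$, $\sigma$, $V$, $A$, $H_M$, $D_M$, and $c_M$ are all fixed.
\item None of the four constraints involves $b$ or $\gamma_b$. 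The $\gamma_b$ terms were absorbed multiplicatively, so the threshold is independent of $b$.
\item Choose $T_0$ so that $(M-1)/T_0$ is at most the minimum of these $T$-uniform lower bounds. Then every $T\geq T_0$ satisfies the criterion, and Eq.~(\ref{eq:app-turnover-window-margins}) holds for all $b\geq b_{\min}$.
\end{itemize}
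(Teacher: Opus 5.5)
Your proposal is correct and matches the paper's proof: you insert the four constraints into Eqs.~(\ref{eq:app-turnover-main-curvature-bound}) and~(\ref{eq:app-turnover-endpoint-rate}), with each constraint absorbing half of the corresponding positive term. You then use $\ell=(M-1)/T\to0$ together with a $T$-uniform bound on $A_*$ to obtain a $b$-independent $T_0$. The only cosmetic difference is that you bound $A_*\leq\sup_T\|\bm\mu_T\|_2+V(M-1)$ for all $T\geq1$, whereas the paper uses $A_*\leq B_\mu+V$ once $T$ is large enough that $\ell\leq1$.
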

\begin{proof}
The displayed condition is equivalent to the four inequalities
\[
\begin{gathered}
 \frac{H_MgVA\ell^3}{(M-1)^2}\leq\frac{N_G}{D_M},
 \qquad
 \frac{H_MA_*A\ell^2}{(M-1)^2}\leq\frac{P\sigma^2}{2D_M},\\
 \frac{gV^2\ell^2}{8}\leq\frac{N_Gc_M}{2},
 \qquad
 A_*V\ell\leq\frac{P\sigma^2c_M}{2}.
\end{gathered}
\]
The first row inserted into Eq.~(\ref{eq:app-turnover-main-curvature-bound})
gives
\[
 c_b\geq\frac{N_G}{D_M}
          +\gamma_b\frac{P\sigma^2}{D_M}
       =\frac{N_G+\gamma_bP\sigma^2}{D_M}>0.
\]
The second row inserted into Eq.~(\ref{eq:app-turnover-endpoint-rate})
gives
\[
 \mathcal E_b(\bm w(m_0))-\mathcal E_b(\bm w(0))
 \geq\frac{N_Gc_M}{2}
       +\gamma_b\frac{P\sigma^2c_M}{2}>0.
\]
These are the two bounds in Eq.~(\ref{eq:app-turnover-window-margins}).

To show that the four inequalities eventually hold, set
$B_\mu:=\sup_T\|\bm\mu_T\|_2<\infty$. Since $M$ is fixed and
$\ell=(M-1)/T\to0$, we have $A_*\leq B_\mu+V$ for all sufficiently
large $T$. The left sides of the four inequalities therefore tend to
zero, while their right sides are fixed and positive because
$N_G=\sigma^2\tr(\bm G)/2>0$, $P\sigma^2>0$, and $c_M>0$.
Hence all four hold for the same $T_0$. They contain no $b$, so this
$T_0$ is independent of precision.
\end{proof}

Eq.~(\ref{eq:app-turnover-quotient-bounds}) bounds the change in
$F_b$ along the frontier. Lemma~\ref{lem:app-short-window-main} makes
its lower slope bound positive and places the final frontier point
above the starting error. These facts locate the transition from
tolerance to degradation. Normalize the age reduction as
$x:=y/m_0=1-m(y)/m_0$, with $x=0$ at the starting kernel and $x=1$
at the most recent saved weight.
\begin{theorem}[Finite Noise Tolerance and Degradation]
\label{thm:app-turnover}
Under Assumptions~\ref{ass:app-bounded-noise}--\ref{ass:app-short-window},
for sufficiently large $T$, consider the
expected reconstruction error $\mathcal E_b=\mathcal P_0+\gamma_b\mathcal Q$
at a fixed precision $b\geq b_{\min}$. Write
$\mathcal E_b(x):=\mathcal E_b(\bm w(m_0x))$ and set
$k_b=m_0^2c_b>0$, $K_b=m_0^2C_b$.
There is a unique threshold $\tau_b\in[0,1)$. It is zero if
$\mathcal E_b'(0+)\geq0$; otherwise, it is the unique positive solution
of $\mathcal E_b(\tau_b)=\mathcal E_b(0)$.
It satisfies the explicit upper bound
\[
 0\leq\tau_b\leq\frac{D_MV}{m_0T}
 \left(\frac{gV\ell}{N_G}+\frac{2A_*}{P\sigma^2}\right).
\]
\textbf{(i) Tolerance:} for $0<x\leq\tau_b$,
\begin{equation}
 -\frac{K_b}{2}x(\tau_b-x)
 \leq\mathcal E_b(x)-\mathcal E_b(0)
 \leq-\frac{k_b}{2}x(\tau_b-x).
 \label{eq:app-turnover-actual-tolerance}
\end{equation}
\textbf{(ii) Degradation:} for $\tau_b<x\leq1$,
\begin{equation}
 \mathcal E_b(x)-\mathcal E_b(0)
 \geq\frac{k_b}{2}x(x-\tau_b)>0.
 \label{eq:app-turnover-actual-degradation}
\end{equation}
Moreover, $\mathcal E_b$ is strictly increasing beyond $\tau_b$.
\end{theorem}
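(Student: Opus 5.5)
The argument rescales the bounds already obtained in the $y$ variable to $x=y/m_0$ and then uses elementary convexity-type reasoning on the quotient $F_b$. Since $\mathcal E_b(x)=\mathcal E_b(\bm w(m_0x))$, define $F_b(x):=(\mathcal E_b(x)-\mathcal E_b(0))/x$. This equals $m_0$ times the $y$-quotient evaluated at $y=m_0x$. Its derivative in $x$ therefore picks up a factor $m_0^2$. Eq.~(\ref{eq:app-turnover-quotient-bounds}) then gives $k_b/2\le F_b'(x)\le K_b/2$ with $k_b=m_0^2c_b$ and $K_b=m_0^2C_b$; these bounds are one-sided at support boundaries, and $F_b$ is continuous because $\mathcal E_b$ is continuous along the frontier. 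Lemma~\ref{lem:app-short-window-main} provides $T_0$, independent of $b$, such that for $T\ge T_0$ we have $c_b>0$ and $F_b(1)=\mathcal E_b(1)-\mathcal E_b(0)>0$. So $F_b$ is strictly increasing on $[0,1]$ and positive at $1$.

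\emph{Threshold.} If $F_b(0)=\mathcal E_b'(0+)\cdot m_0\ge0$, set $\tau_b=0$. Otherwise $F_b(0)<0<F_b(1)$, and strict monotonicity with continuity gives a unique zero $\tau_b\in(0,1)$. Since $F_b$ has the same sign as $\mathcal E_b(x)-\mathcal E_b(0)$ for $x>0$, this $\tau_b$ is the unique positive solution of $\mathcal E_b(\tau_b)=\mathcal E_b(0)$.

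\emph{Bounds (i) and (ii).} Integrating the slope bounds from $\tau_b$ (where $F_b(\tau_b)=0$ when $\tau_b>0$) gives $\frac{k_b}{2}(x-\tau_b)\le F_b(x)\le\frac{K_b}{2}(x-\tau_b)$ for $x\ge\tau_b$, and the reversed inequalities for $x<\tau_b$. Multiplying by $x>0$ yields (\ref{eq:app-turnover-actual-tolerance}). For (\ref{eq:app-turnover-actual-degradation}) with $\tau_b=0$, use $F_b(x)\ge F_b(0)+\frac{k_b}{2}x\ge\frac{k_b}{2}x$. For strict increase beyond $\tau_b$, write $\mathcal E_b(x)-\mathcal E_b(0)=xF_b(x)$, a product of two positive, increasing functions on $(\tau_b,1]$. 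Continuity across knots handles the support boundaries.

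\emph{Explicit upper bound on $\tau_b$.} This is the step I expect to be the main obstacle. The natural route is $0=F_b(\tau_b)\ge F_b(0)+\frac{k_b}{2}\tau_b$, hence $\tau_b\le -2F_b(0)/k_b=-2m_0\mathcal E_b'(0+)/(m_0^2c_b)$, valid when $\tau_b>0$. This requires bounding $-\mathcal E_b'(0+)$ from above. At $y=0$ we have $R'(0+)=0$ (since $z_M=0$), so
\[
\mathcal E_b'(0+)=\bigl[\bm G(\bar{\bm\mu}-\bm\mu_T)+2\gamma_b\bar{\bm\mu}\bigr]^\top\bm a_M .
\]
Using $\|\bm a_M\|_2\le Vh=V/T$, $\|\bar{\bm\mu}-\bm\mu_T\|_2\le V\ell/2$ and $\|\bar{\bm\mu}\|_2\le A_*$ gives $-\mathcal E_b'(0+)\le\frac{V}{T}\bigl(\frac{gV\ell}{2}+2\gamma_bA_*\bigr)$. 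For the denominator, use $c_b\ge(N_G+\gamma_bP\sigma^2)/D_M$ from Lemma~\ref{lem:app-short-window-main}. This gives
\[
\tau_b\le\frac{2D_MV}{m_0T}\cdot\frac{gV\ell/2+2\gamma_bA_*}{N_G+\gamma_bP\sigma^2}.
\]
Bounding the ratio termwise via $\frac{\alpha+\beta}{\gamma+\delta}\le\frac{\alpha}{\gamma}+\frac{\beta}{\delta}$ produces $\frac{gV\ell}{2N_G}+\frac{2A_*}{P\sigma^2}$. After the factor $2$ this matches the stated $gV\ell/N_G$ term but gives $4A_*/(P\sigma^2)$ rather than $2A_*/(P\sigma^2)$. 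The constant bookkeeping needs care here: I would first try to tighten $c_b$ by retaining the full $2N_G+2\gamma_bP\sigma^2$ from $a_{\min,b}$ before subtracting the jump term, or use $\|\bm a_M\|$ more sharply. Failing that, I would accept a constant-factor discrepancy. The case $\tau_b=0$ satisfies the bound trivially.
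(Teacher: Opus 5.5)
Apart from the last step, your argument is the paper's proof written in the $x$ variable: rescaled quotient bounds, strict monotonicity of $F_b$, the threshold, and integration from $\tau_b$. Your product argument $xF_b(x)$ for strict increase is a valid shortcut for the paper's derivative estimate $F_b+yF_b'>0$.

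The gap is the explicit bound on $\tau_b$. As you observe, what you actually prove is
\[
 \tau_b\le\frac{D_MV}{m_0T}\Bigl(\frac{gV\ell}{N_G}+\frac{4A_*}{P\sigma^2}\Bigr),
\]
so the stated inequality is not established. Neither proposed repair closes it:
\begin{itemize}
\item The global constant cannot in general be improved to $c_b\ge2(N_G+\gamma_bP\sigma^2)/D_M$. The knot-jump deduction $H_MG_{\mathcal E,b}\kappa$ is only asymptotically negligible, so a direct refinement still leaves a factor $2/(2-o(1))>1$ on the $A_*$ term.
\item The estimate $\|\bm a_M\|_2\le V/T$ is sharp when the retained increments are equal and aligned, so sharpening it does not help.
\end{itemize}

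The missing idea is to localize $\tau_b$ on the full-support segment and then use the exact curvature there. Your bound is $O(1/T)$ uniformly in $b$. The first support boundary sits at the fixed position $x_{\rm F}=(M+1)/[3(M-1)]$; for $M=2$ there is no boundary at all. Hence for $T$ large, $\tau_b<x_{\rm F}$.

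On $[0,x_{\rm F}]$ no coordinate leaves the support, so there are no slope jumps and $\mathcal E_b(x)-\mathcal E_b(0)$ is an exact quadratic. Hence, on this segment,
\[
 F_b(x)=F_b(0)+\frac{m_0^2a_{M,b}}{2}x,
 \qquad
 \tau_b=\frac{-2F_b(0)}{m_0^2a_{M,b}}
 \quad\text{when }\tau_b>0.
\]
Now $a_{M,b}\ge2(N_G+\gamma_bP\sigma^2)/D_M$ carries the full factor $2$. Lemma~\ref{lem:app-short-window-main} gives up that factor when it lets the jump term absorb half of $a_{\min,b}$ in the lower bound on $c_b$.

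Insert your estimate of $-\mathcal E_b'(0+)$, with $F_b(0)=m_0\mathcal E_b'(0+)$ as in your notation. This gives $\tau_b\le\frac{D_MV}{m_0T}\bigl(\frac{gV\ell}{2N_G}+\frac{2A_*}{P\sigma^2}\bigr)$, which implies the stated bound; the first term is even halved thanks to your sharper $\|\bar{\bm\mu}-\bm\mu_T\|_2\le V\ell/2$. This two-stage argument is exactly what the paper does: a crude global bound to localize $\tau_b$, then the exact first-segment quadratic to sharpen it.
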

\begin{proof}
Recall that $F_b(y)$ is the error change from $\bm w(0)$ divided by $y$,
with $F_b(0)$ equal to the initial right slope in $y$. Since $y=m_0x$,
$\mathcal E_b'(0+)=m_0F_b(0)$, so the two initial slopes have the same
sign.

The frontier kernels
make $\mathcal E_b(\bm w(y))$ continuous and piecewise quadratic, so
$F_b$ is continuous. Eq.~(\ref{eq:app-turnover-quotient-bounds}) and
Lemma~\ref{lem:app-short-window-main} give $F_b'(y)\geq c_b/2>0$
on each support segment, including the one-sided bounds at its
boundaries. Thus $F_b$ is strictly increasing. The endpoint bound in
that lemma gives $F_b(m_0)>0$.

If $F_b(0)<0$, there is a unique zero
$y_b\in(0,m_0)$; set $\tau_b=y_b/m_0$. If $F_b(0)\geq0$, set
$y_b=\tau_b=0$. This establishes the threshold claimed in the theorem.

We next bound its location. At $y=0$, $R'(0)=0$ and
$\bm\mu'(0+)=\bm a_M$, so
\[
 F_b(0)=\bigl[\bm G(\bar{\bm\mu}-\bm\mu_T)
             +2\gamma_b\bar{\bm\mu}\bigr]^\top\bm a_M,
 \qquad
 |F_b(0)|\leq(gD_\mu+2\gamma_bA_\mu)\|\bm a_M\|_2.
\]
The convex-combination representation of $\bm a_M$ gives
$\|\bm a_M\|_2\leq V/T$; also $D_\mu\leq V\ell$ and
$A_\mu\leq A_*$. For $y_b>0$,
\[
 -F_b(0)=F_b(y_b)-F_b(0)
 =\int_0^{y_b}F_b'(t)\,\mathrm dt
 \geq\int_0^{y_b}\frac{c_b}{2}\,\mathrm dt
 =\frac{c_by_b}{2}.
\]
Using $c_b\geq(N_G+\gamma_bP\sigma^2)/D_M$ from
Lemma~\ref{lem:app-short-window-main} then gives
\[
\begin{aligned}
 \tau_b
 &\leq\frac{2|F_b(0)|}{m_0c_b}\\
 &\leq\frac{2D_M\|\bm a_M\|_2}{m_0}
       \frac{gD_\mu+2\gamma_bA_\mu}
            {N_G+\gamma_bP\sigma^2}\\
 &\leq\frac{2D_MV}{m_0T}
       \left(\frac{gV\ell}{N_G}
                    +\frac{2A_*}{P\sigma^2}\right).
\end{aligned}
\]
Assumption~\ref{ass:app-short-window} fixes $M$ and bounds $A_*$,
so for sufficiently large $T$ this preliminary bound is smaller than
$(M+1)/[3(M-1)]$, the end of the first support segment in $x$.
On that segment,
$\mathcal E_b(\bm w(y))-\mathcal E_b(\bm w(0))
=F_b(0)y+a_{M,b}y^2/2$, hence
\[
 0=F_b(y_b)=F_b(0)+\frac{a_{M,b}}{2}y_b.
\]
Using $a_{M,b}\geq2(N_G+\gamma_bP\sigma^2)/D_M$ now gives the upper bound:
\[
\begin{aligned}
 \tau_b
 &=\frac{-2F_b(0)}{m_0a_{M,b}}\\
 &\leq\frac{D_M\|\bm a_M\|_2}{m_0}
       \frac{gD_\mu+2\gamma_bA_\mu}
            {N_G+\gamma_bP\sigma^2}\\
 &\leq\frac{D_MV}{m_0T}
       \left(\frac{gV\ell}{N_G}
                    +\frac{2A_*}{P\sigma^2}\right).
\end{aligned}
\]
When $y_b=0$, the upper bound is immediate.

For $y_b>0$ and $0\leq y\leq y_b$, Eq.~(\ref{eq:app-turnover-quotient-bounds})
and $F_b(y_b)=0$ give
\[
 -\frac{C_b}{2}(y_b-y)
 \leq-\int_y^{y_b}F_b'(t)\,\mathrm dt
 =F_b(y)-F_b(y_b)=F_b(y)
 \leq-\frac{c_b}{2}(y_b-y).
\]
Multiplying by $y$ and setting $y=m_0x$, $y_b=m_0\tau_b$ proves
Eq.~(\ref{eq:app-turnover-actual-tolerance}). If $y_b=0$, that bound
concerns only $x=0$. For $y>y_b$,
\[
 F_b(y)=F_b(y_b)+\int_{y_b}^{y}F_b'(t)\,\mathrm dt
 \geq F_b(y_b)+\frac{c_b}{2}(y-y_b)
 \geq\frac{c_b}{2}(y-y_b);
\]
the last inequality also holds when $y_b=0$, since then $F_b(0)\geq0$.
Therefore
\[
 \mathcal E_b(\bm w(y))-\mathcal E_b(\bm w(0))
 =yF_b(y)\geq\frac{c_b}{2}y(y-y_b),
\]
which proves Eq.~(\ref{eq:app-turnover-actual-degradation}).

Finally, for $y>y_b$ away from support boundaries,
\[
 \frac{\mathrm d}{\mathrm dy}\mathcal E_b(\bm w(y))
 =F_b(y)+yF_b'(y)
 \geq\frac{c_b}{2}(2y-y_b)>0.
\]
The same inequality holds for both one-sided derivatives at a
boundary. Multiplying by $m_0$ yields
$\mathrm d\mathcal E_b(x)/\mathrm dx\geq k_b(2x-\tau_b)/2>0$.
Continuity across the finitely many boundaries then gives strict
increase throughout $(\tau_b,1]$.
\end{proof}

The corresponding threshold in the original frontier parameter is
$\lambda_{{\rm tol},b}=\lambda(m_0\tau_b)$, obtained from
Eq.~(\ref{eq:app-turnover-age-parameter}).
The next subsection characterizes precision-dependent averaging preferences.

\subsection{Precision-dependent averaging preference}
\label{app:bit-selection}

The first main theorem compares the expected reconstruction error
$\mathcal E_b$ with its value at \LAWA{} for a fixed precision.
We now determine how quantization precision shifts the preferred
kernel and identify the precision at which its lag--noise preference
changes.

Recall that $x=1-m/m_0$ increases from uniform averaging toward the
most recent saved weight. Increasing $x$ reduces mean weight age and retains more
spatial noise. Write $\bm\mu(x)=\bm\mu_T^{\bm w(m_0x)}$ and
$R(x)=R(\bm w(m_0x))$. For any frontier function in this subsection,
an argument $x$ means evaluation at $\bm w(m_0x)$; in particular,
$\mathcal E_b(x)=\mathcal E_b(\bm w(m_0x))$.
The exact identity in Lemma~\ref{lem:app-quantization-expansion} separates
the unquantized error and the quantization contribution:
\[
 \mathcal E_b(x)=\mathcal P_0(x)+\gamma_b\mathcal Q(x),
\]
where $\gamma_b:=c_G/(2^b-1)^2$. Its two precision-independent
components are
\[
 \mathcal P_0(x)
 =\frac12\|\bm\mu(x)-\bm\mu_T\|_{\bm G}^2+N_G R(x),
 \qquad
 \mathcal Q(x)
 =\|\bm\mu(x)\|_2^2+P\sigma^2R(x).
\]
Here $\mathcal P_0$ is the unquantized reconstruction error, and
$\gamma_b\mathcal Q$ is the quantization contribution under
the simplified GPTQ model. The coefficients $N_G$ and $c_G>0$ were
defined in the preceding subsection. Precision enters $\mathcal E_b$
through $\gamma_b$, which increases as precision decreases. The effect
of this increase depends on how $\mathcal Q$ varies with $x$.

We assume that, during a stable training phase, latent-mean increments
are aligned with one another and with the latent mean at the start
of the retained window.

\begin{assumption}[Pairwise Alignment of Latent-Mean Increments]
\label{ass:app-increment-alignment}
The latent increments in the retained window satisfy
\begin{equation*}
 \bm v_{T-i+1}^{\top}\bm v_{T-j+1}\geq0
 \qquad(1\leq i,j<M).
\end{equation*}
\end{assumption}

\begin{assumption}[Alignment with the Oldest Retained Latent Mean]
\label{ass:app-mean-alignment}
The latent increments also satisfy
\begin{equation*}
 \bm\mu_{T-M+1}^{\top}\bm v_{T-j+1}\geq0
 \qquad(1\leq j<M).
\end{equation*}
\end{assumption}

Assumption~\ref{ass:app-increment-alignment} says that the progress
increments do not oppose one another; Assumption~\ref{ass:app-mean-alignment}
says that they do not oppose the oldest retained latent mean. Their
magnitudes and directions may differ. The next lemma uses both sets of
inner products to show that favoring more recent saved weights cannot
decrease the latent-mean norm.

\begin{lemma}[Latent-Mean Norm along \LNWA{}]
\label{lem:app-signal-accumulation}
Under Assumptions~\ref{ass:app-increment-alignment} and
\ref{ass:app-mean-alignment}, every open
\LNWA{} support segment satisfies
\[
 \bm\mu(x)^{\top}\bm\mu'(x)\geq0.
\]
\end{lemma}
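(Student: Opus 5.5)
Since $x=y/m_0$, it suffices to show $\bm\mu(y)^\top\bm\mu'(y)\geq0$ on each open support segment in $y$. There $\bm\mu'(y)=\bm a_q$ is constant, and the convex-combination formula derived before Lemma~\ref{lem:app-turnover-support-jumps} gives
\[
 \bm a_q=\sum_{j=1}^{q-1}\frac{j(q-j)}{2D_q}\bm v_{T-j+1},
\]
with nonnegative coefficients. We therefore only need every $\bm v_{T-j+1}$, $1\leq j<M$, to have a nonnegative inner product with $\bm\mu(y)$. Taking the nonnegative combination then gives $\bm\mu(y)^\top\bm a_q\geq0$.

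To obtain this, we write $\bm\mu(y)$ relative to the oldest retained latent mean. Each $\bm\mu_{T-d}$ satisfies
\[
 \bm\mu_{T-d}=\bm\mu_{T-M+1}+\sum_{i=d+1}^{M-1}\bm v_{T-i+1},
\]
which follows by telescoping the forward increments. Since $\bm w(y)\in\mathcal W_M$ has nonnegative coefficients summing to one, we get
\[
 \bm\mu(y)=\bm\mu_{T-M+1}+\sum_{i=1}^{M-1}\Bigl(\sum_{d=0}^{i-1}w_d(y)\Bigr)\bm v_{T-i+1}.
\]
Here every coefficient $1-s_i(\bm w)=\sum_{d<i}w_d\in[0,1]$ is nonnegative. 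Then, for each $j$,
\[
 \bm\mu(y)^\top\bm v_{T-j+1}=\bm\mu_{T-M+1}^\top\bm v_{T-j+1}+\sum_{i}\Bigl(\sum_{d<i}w_d\Bigr)\bm v_{T-i+1}^\top\bm v_{T-j+1}\geq0.
\]
The first term is nonnegative by Assumption~\ref{ass:app-mean-alignment}, and each summand is nonnegative by Assumption~\ref{ass:app-increment-alignment}.

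Combining the two steps, $\bm\mu(y)^\top\bm a_q$ is a double sum of nonnegative coefficients times nonnegative inner products. Multiplying by $\mathrm dy/\mathrm dx=m_0>0$ yields $\bm\mu(x)^\top\bm\mu'(x)\geq0$ on each open segment.

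The argument is bookkeeping, and the step needing the most care is the correct telescoping representation. The expansion must be anchored at $\bm\mu_{T-M+1}$ rather than $\bm\mu_T$, because anchoring at $\bm\mu_T$ would produce negative coefficients on the increments. It must also use only the convexity of the kernel, so the result holds on every support segment regardless of $q$.
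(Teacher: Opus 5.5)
Your proposal is correct and follows essentially the same route as the paper. You anchor the telescoping expansion at $\bm\mu_{T-M+1}$ to obtain nonnegative coefficients $1-s_j(\bm w)$, use the two alignment assumptions to get $\bm\mu^\top\bm v_{T-j+1}\geq0$, and pair this with the nonnegative-coefficient representation $\sum_{j}\frac{j(q-j)}{2D_q}\bm v_{T-j+1}$ of the segment slope; the only cosmetic difference is that you work with $\bm a_q=\bm\mu'(y)$ and rescale by $m_0$, whereas the paper differentiates directly in $x$.
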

\begin{proof}
Express each retained latent mean using the oldest mean and the progress
made since that position:
\[
 \bm\mu_{T-d}
 =\bm\mu_{T-M+1}
   +\sum_{j=d+1}^{M-1}\bm v_{T-j+1}.
\]
Averaging these expressions, with
$s_j(\bm w)=\sum_{d=j}^{M-1}w_d$ as in Appendix~\ref{app:setup}, gives
\begin{equation}
 \bm\mu_T^{\bm w}
 =\bm\mu_{T-M+1}
   +\sum_{j=1}^{M-1}\bigl[1-s_j(\bm w)\bigr]\bm v_{T-j+1}.
\label{eq:app-signal-mean-expansion}
\end{equation}
The coefficient $1-s_j(\bm w)$ is the averaging mass assigned to
saved weights that include the $j$th increment. These coefficients are
nonnegative, so Assumptions~\ref{ass:app-increment-alignment} and
\ref{ass:app-mean-alignment} give
\[
 (\bm\mu_T^{\bm w})^{\top}\bm v_{T-j+1}
 =\bm\mu_{T-M+1}^{\top}\bm v_{T-j+1}
  +\sum_{i=1}^{M-1}\bigl[1-s_i(\bm w)\bigr]
      \bm v_{T-i+1}^{\top}\bm v_{T-j+1}
 \geq0.
\]
On a segment with an active prefix of size $q\geq2$, differentiating the
kernel in Eqs.~(\ref{eq:app-lnwa-kernel})--(\ref{eq:app-lnwa-moment})
with respect to $x=1-m/m_0$ yields
\begin{equation*}
\begin{aligned}
 \bm\mu'(x)
 &=\frac{m_0}{D_q}\sum_{d=0}^{q-1}
       (\bar d_q-d)\bm\mu_{T-d}\\
 &=\frac{m_0}{2D_q}\sum_{j=1}^{q-1}
       j(q-j)\bm v_{T-j+1}.
\end{aligned}
\end{equation*}
The second equality follows from
Eq.~(\ref{eq:app-signal-mean-expansion}) and
$\sum_{d=0}^{j-1}(\bar d_q-d)=j(q-j)/2$.
Each coefficient in this last sum is positive. Taking its inner product
with $\bm\mu(x)$ and applying the preceding inequality proves
$\bm\mu(x)^\top\bm\mu'(x)\geq0$.
\end{proof}

Alignment describes the direction of recent progress. We further
assume that the latent mean continues to change near the end of training.

\begin{assumption}[Nonvanishing Local Progress]
\label{ass:app-nonvanishing-progress}
Let $h=1/T$ be the spacing in Assumption~\ref{ass:app-window-regularity}
and $v_*:=\|\bm v_T\|_2$. For constant $\nu>0$
independent of $h$, the newest increment satisfies, for all
sufficiently small $h$,
\[
 v_*\geq\nu h.
\]
\end{assumption}

Together with Assumptions~\ref{ass:app-window-regularity} and
\ref{ass:app-short-window}, Assumption~\ref{ass:app-nonvanishing-progress}
yields the quantitative local-progress bound in
Lemma~\ref{lem:app-local-progress}.

\begin{lemma}[Local Progress Bound]
\label{lem:app-local-progress}
Under Assumptions~\ref{ass:app-window-regularity},
\ref{ass:app-short-window}, and~\ref{ass:app-nonvanishing-progress},
for sufficiently large $T$, let
$\rho_T:=((M-2)A/\nu)h$. Then
\begin{equation}
 \max_{1\leq j<M}\|\bm v_{T-j+1}-\bm v_T\|_2
 \leq\rho_Tv_*,
 \qquad \rho_T\longrightarrow0.
 \label{eq:app-preference-window}
\end{equation}
\end{lemma}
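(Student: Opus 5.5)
The plan is to telescope the adjacent-variation bound of Assumption~\ref{ass:app-window-regularity} and then convert the resulting additive error into a multiplicative one using the lower bound of Assumption~\ref{ass:app-nonvanishing-progress}.

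\textbf{Telescoping.} For $j=1$ the difference $\bm v_{T-j+1}-\bm v_T$ vanishes, so there is nothing to prove. For $2\leq j<M$, write
\[
\bm v_{T-j+1}-\bm v_T=\sum_{i=2}^{j}\bigl(\bm v_{T-i+1}-\bm v_{T-i+2}\bigr).
\]
The second bound of Assumption~\ref{ass:app-window-regularity} gives $\|\bm v_{T-i+1}-\bm v_{T-i+2}\|_2\leq Ah^2$ for each term. The triangle inequality then yields
\[
\|\bm v_{T-j+1}-\bm v_T\|_2\leq (j-1)Ah^2\leq (M-2)Ah^2 .
\]
This is the same estimate as $\|\bm\delta_j^{\mathrm{traj}}\|_2\leq(j-1)\kappa$ in Lemma~\ref{lem:app-scalar}, with $\kappa=Ah^2$.

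\textbf{Multiplicative form.} By Assumption~\ref{ass:app-short-window}, $h=1/T\to0$, so for all sufficiently large $T$ the hypothesis ``$h$ sufficiently small'' of Assumption~\ref{ass:app-nonvanishing-progress} holds. Hence $v_*\geq\nu h$ with $\nu>0$, i.e.\ $h\leq v_*/\nu$. Substituting one factor of $h$ gives
\[
(M-2)Ah^2\leq \frac{(M-2)A}{\nu}\,h\,v_*=\rho_T v_* .
\]
Taking the maximum over $j$ proves Eq.~(\ref{eq:app-preference-window}).

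\textbf{Vanishing of $\rho_T$.} The constants $M$, $A$, and $\nu$ are fixed independently of $T$ by Assumptions~\ref{ass:app-window-regularity}, \ref{ass:app-short-window}, and~\ref{ass:app-nonvanishing-progress}. Therefore $\rho_T=((M-2)A/\nu)/T\to0$. When $M=2$ the bound is trivial with $\rho_T=0$.

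\textbf{Main point of care.} There is no real obstacle here. The only care needed is to state explicitly that ``sufficiently large $T$'' is exactly what activates the small-$h$ clause of Assumption~\ref{ass:app-nonvanishing-progress}, and that all constants involved are uniform in $T$.
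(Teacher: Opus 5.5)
Your proposal is correct and matches the paper's proof essentially line for line: you telescope the adjacent-increment bound to get $(j-1)\kappa\leq(M-2)Ah^2$ with $\kappa=Ah^2$, use $h\leq v_*/\nu$ from Assumption~\ref{ass:app-nonvanishing-progress} to trade one factor of $h$ for $v_*$, and conclude $\rho_T\to0$ because $M$, $A$, and $\nu$ are fixed. Your explicit remark that large $T$ is what activates the small-$h$ clause of that assumption is a small clarification that the paper leaves implicit.
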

\begin{proof}
Eq.~(\ref{eq:app-turnover-window-rates}) gives $\kappa=Ah^2$.
By the adjacent-increment bound and Assumption~\ref{ass:app-nonvanishing-progress},
\[
 \|\bm v_{T-j+1}-\bm v_T\|_2
 \leq(j-1)\kappa
 \leq(M-2)Ah^2
 \leq\frac{(M-2)A}{\nu}h\,v_*
 =\rho_Tv_*.
\]
Since $M$, $A$, and $\nu$ are fixed and $h\to0$, we have
$\rho_T\to0$, proving Eq.~(\ref{eq:app-preference-window}).
\end{proof}

In the interior of the first support segment, all $M$ averaging
coefficients are positive. Its right endpoint is
\[
 x_{\rm F}:=\frac{M+1}{3(M-1)},\qquad
 Y:=m_0x_{\rm F}=\frac{M+1}{6}.
\]
For $M=2$, this segment extends to $x=1$.
Let $\bm u:=m_0\bm a_M$ denote the change in the averaged mean per
unit increase of $x$ on this segment. The frontier formulas give
\[
 \bm\mu(x)=\bar{\bm\mu}+x\bm u,
 \qquad
 R(x)=\frac1M+\frac{m_0^2x^2}{D_M},
 \qquad 0\leq x\leq x_{\rm F}.
\]
The first derivatives at $x=0$ are
\[
 \mathcal P_0'(0^+)=\bm u^\top\bm G(\bar{\bm\mu}-\bm\mu_T),
 \qquad \mathcal Q'(0^+)=2\bar{\bm\mu}^{\top}\bm u.
\]
Here $R'(0^+)=0$, so spatial noise has no first-order contribution.
To determine the signs of these slopes, write the two vectors in the
first expression as averages of the retained increments:
\[
\begin{aligned}
 \frac{\bm u}{m_0}
 &=\sum_{j=1}^{M-1}\frac{j(M-j)}{2D_M}\bm v_{T-j+1},\\
 \frac{\bm\mu_T-\bar{\bm\mu}}{m_0}
 &=\sum_{j=1}^{M-1}\frac{2(M-j)}{M(M-1)}\bm v_{T-j+1}.
\end{aligned}
\]
Both sets of nonnegative coefficients sum to one. By
Lemma~\ref{lem:app-local-progress}, each average is within
$\rho_Tv_*$ of $\bm v_T$ in Euclidean norm. Thus $\bm G\succ0$ and
$\rho_T\to0$ give, for sufficiently large $T$,
\[
\begin{aligned}
 -\frac{\mathcal P_0'(0^+)}{m_0^2}
 &=\left(\frac{\bm u}{m_0}\right)^\top\bm G
   \left(\frac{\bm\mu_T-\bar{\bm\mu}}{m_0}\right)\\
 &\geq\bigl[\lambda_{\min}(\bm G)
  -\|\bm G\|_2(2\rho_T+\rho_T^2)\bigr]v_*^2>0.
\end{aligned}
\]
For the other slope, Assumptions~\ref{ass:app-increment-alignment} and
\ref{ass:app-mean-alignment} give
$\bar{\bm\mu}^{\top}\bm v_{T-j+1}\geq0$ by
Eq.~(\ref{eq:app-signal-mean-expansion}). Since the first average above
has nonnegative coefficients,
\[
 \mathcal Q'(0^+)
 =\frac{m_0}{D_M}\sum_{j=1}^{M-1}j(M-j)
       \bar{\bm\mu}^{\top}\bm v_{T-j+1}\geq0.
\]
Hence increasing $x$ initially lowers the unquantized error and
changes the quantization contribution at the nonnegative rate
$\gamma_b\mathcal Q'(0^+)$.
Their second derivatives are constant on this segment:
\[
 a:=\mathcal P_0''(0^+)
 =\bm u^\top\bm G\bm u+\frac{2N_Gm_0^2}{D_M}>0,
 \qquad
 c:=\mathcal Q''(0^+)
 =2\|\bm u\|_2^2+\frac{2P\sigma^2m_0^2}{D_M}>0.
\]
Consequently, the following quadratic expansions are exact throughout
the first support segment:
\begin{equation}
\begin{aligned}
 \mathcal P_0(x)-\mathcal P_0(0)
 &=\mathcal P_0'(0^+)x+\frac{\mathcal P_0''(0^+)}{2}x^2,\\
 \mathcal Q(x)-\mathcal Q(0)
 &=\mathcal Q'(0^+)x+\frac{\mathcal Q''(0^+)}{2}x^2.
\end{aligned}
 \label{eq:app-preference-first-segment}
\end{equation}
Under the assumptions of Theorem~\ref{thm:app-turnover}, the upper
bound on $\tau_b$ lies below $x_{\rm F}$ for sufficiently large $T$.
Hence Eq.~(\ref{eq:app-preference-first-segment}) applies throughout
the tolerance interval. We next use these quadratics to locate the
minimizers of $\mathcal P_0$ and $\mathcal E_b$ and show that they are
global.

\begin{lemma}[Optima on the First Support Segment]
\label{lem:app-preference-interior}
Under Assumptions~\ref{ass:app-bounded-noise}--\ref{ass:app-nonvanishing-progress},
for sufficiently large $T$, the unique minimizer of $\mathcal P_0$
on $[0,1]$ is $x_0=\bm u^\top\bm G(\bm\mu_T-\bar{\bm\mu})/a\in(0,x_{\rm F})$.
For every evaluated precision $b\geq b_{\min}$, the unique minimizer
of $\mathcal E_b$ on $[0,1]$ is
\begin{equation}
 x_b=\max\left\{0,
 \frac{\bm u^\top\bm G(\bm\mu_T-\bar{\bm\mu})
 -2\gamma_b\bar{\bm\mu}^\top\bm u}
 {a+c\gamma_b}\right\}\in[0,x_0).
 \label{eq:app-preference-minimizer}
\end{equation}
\end{lemma}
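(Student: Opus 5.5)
I will prove the lemma in three steps. First, minimize $\mathcal P_0$ and $\mathcal E_b$ on the first support segment $[0,x_{\rm F}]$ using the exact quadratics in Eq.~(\ref{eq:app-preference-first-segment}). Second, use the convexity structure from the proof of Theorem~\ref{thm:app-turnover} to show that these segment minimizers are global on $[0,1]$. Third, compare $x_b$ with $x_0$.

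\textbf{Step 1: minimizers on the first segment.} On $[0,x_{\rm F}]$,
\[
\mathcal E_b(x)-\mathcal E_b(0)=\bigl(\mathcal P_0'(0^+)+\gamma_b\mathcal Q'(0^+)\bigr)x+\tfrac12(a+c\gamma_b)x^2,
\]
which is strictly convex. Its unconstrained stationary point is
\[
\frac{-\mathcal P_0'(0^+)-2\gamma_b\bar{\bm\mu}^\top\bm u}{a+c\gamma_b}
=\frac{\bm u^\top\bm G(\bm\mu_T-\bar{\bm\mu})-2\gamma_b\bar{\bm\mu}^\top\bm u}{a+c\gamma_b}.
\]
Projecting onto $x\geq0$ gives Eq.~(\ref{eq:app-preference-minimizer}). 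Setting $\gamma_b=0$ gives $x_0$. We already have $-\mathcal P_0'(0^+)>0$, so $x_0>0$.

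To place $x_0$ below $x_{\rm F}$, I will estimate orders in $h=1/T$. By Lemma~\ref{lem:app-local-progress}, the numerator $-\mathcal P_0'(0^+)$ is at most $m_0^2\|\bm G\|_2(1+\rho_T)^2v_*^2=O(h^2)$. The denominator satisfies $a\geq 2N_Gm_0^2/D_M$, which is a fixed positive constant. Hence $x_0=O(h^2)\to0<x_{\rm F}$. The same bound gives $x_b\leq x_0<x_{\rm F}$, because its numerator is no larger and its denominator is no smaller.

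\textbf{Step 2: globality on $[0,1]$.} I will reuse the slope analysis from the proof of Theorem~\ref{thm:app-turnover}. For $y$ beyond the first segment,
\[
\mathcal E_b'(y)=\mathcal E_b'(0^+)+\int_0^y\mathcal E_b''+\sum_{y_j<y}\Delta\mathcal E_b'(y_j).
\]
The curvature satisfies $\mathcal E_b''\geq a_{\min,b}$. Each jump is bounded by Eq.~(\ref{eq:app-turnover-risk-jump}) by $G_{\mathcal E,b}\kappa/2=O(h^3)$, and there are at most $M-2$ of them. For $y\geq Y=(M+1)/6$, a fixed positive number, this gives
\[
\mathcal E_b'(y)\geq -O(h^2)+a_{\min,b}Y-O(h^3)>0
\]
for large $T$. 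The bounds should be made uniform in $b$ by using $a_{\min,b}\geq 2N_G/D_M$ and $\gamma_b\leq c_G$. So $\mathcal E_b$ is increasing on $[x_{\rm F},1]$. Together with strict convexity on the first segment, the segment minimizer is the unique global minimizer. The case $\gamma_b=0$ handles $\mathcal P_0$ identically. For $M=2$ the first segment is all of $[0,1]$, so nothing more is needed.

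\textbf{Step 3: $x_b<x_0$.} If $x_b=0$, this is immediate since $x_0>0$. Otherwise, $x_b$'s numerator is at most that of $x_0$, because $\gamma_b>0$ and $\bar{\bm\mu}^\top\bm u\geq0$ (shown before the lemma). Its denominator $a+c\gamma_b$ is strictly larger than $a$, since $c>0$. A positive numerator over a strictly larger denominator gives $x_b<x_0$ strictly.

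\textbf{Main obstacle.} The delicate part is Step 2: keeping the jump and initial-slope terms uniformly small relative to the accumulated curvature, with $T_0$ chosen independent of $b$. The uniform bounds $0<\gamma_b\leq c_G$ together with the $T$-rates in Eq.~(\ref{eq:app-turnover-window-rates}) should settle this.
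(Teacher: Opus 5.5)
Your proof is correct. Its skeleton matches the paper's: the exact quadratic on the first segment, then monotonicity beyond it. The globality step and the comparison $x_b<x_0$ are done differently, though.

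\textbf{How the paper argues.} The paper does not use the knot-jump machinery here.
\begin{itemize}
\item For $\mathcal P_0$ it bounds the slope pointwise on every segment with $y\geq Y$. The lag term satisfies $\bm a_q^\top\bm G(\bm\mu(y)-\bm\mu_T)\geq-gm_0\bar v^2=-O(h^2)$. The noise term satisfies $N_GR'(y)\geq4N_G/[M(M-1)]$, because $R'$ is continuous and increasing across knots. The same bound, applied at $x_{\rm F}$, places $x_0$ inside the first segment.
\item For $\mathcal E_b$ it splits on the sign of $\mathcal E_b'(0^+)$. When the sign is nonnegative it uses Theorem~\ref{thm:app-turnover}. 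When it is negative it uses Lemma~\ref{lem:app-signal-accumulation}, which gives $\mathcal Q'>0$ on every segment. That yields $\mathcal E_b'(x_0)=\gamma_b\mathcal Q'(x_0)>0$, hence $x_b<x_0$, and also $\mathcal E_b'>0$ beyond the first segment.
\end{itemize}

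\textbf{How your route differs.} You reuse the representation $\mathcal E_b'(y)=\mathcal E_b'(0^+)+\int_0^y\mathcal E_b''(s)\,\mathrm ds+\sum_{y_j<y}\Delta\mathcal E_b'(y_j)$ from the proof of Theorem~\ref{thm:app-turnover}.
\begin{itemize}
\item It handles $\mathcal P_0$ (as $\gamma_b=0$) and every $\mathcal E_b$ in one argument.
\item It never needs the sign of $\mathcal Q'$ on later segments. The alignment assumptions enter only through $\bar{\bm\mu}^\top\bm u\geq0$, which you need for $x_b\leq x_0$.
\item Your explicit rate $x_0=O(h^2)$ and your direct numerator/denominator comparison for $x_b<x_0$ are more transparent than the paper's sign arguments.
\end{itemize}
In exchange, the paper's pointwise route gives an explicit, knot-free slope floor $2N_G/[M(M-1)]$ for $\mathcal P_0$. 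It holds under the concrete window condition $\sqrt g\,V\ell\leq2\sqrt{N_G/M}$, with no jump bookkeeping.

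\textbf{One small correction.} For $\gamma_b>0$ the jump bound $G_{\mathcal E,b}\kappa/2$ is $O(h^2)$, not $O(h^3)$, because $G_{\mathcal E,b}$ contains $2\gamma_bA_\mu=O(1)$. This does not affect your conclusion. The accumulated curvature satisfies $a_{\min,b}Y\geq2N_GY/D_M$, a fixed positive constant. Also $\gamma_b\leq c_G/9$ for $b\geq2$, so all your bounds are uniform in $b$.
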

\begin{proof}
The preceding estimates give $\mathcal P_0'(0^+)<0$ and
$\mathcal Q'(0^+)\geq0$. We next show that the unquantized error
increases beyond the first
segment. Set $\bar v:=\max_{1\leq j<M}\|\bm v_{T-j+1}\|_2$.
Assumptions~\ref{ass:app-window-regularity} and
\ref{ass:app-short-window} give $\bar v\leq Vh$ and, for sufficiently
large $T$,
\[
 \sqrt g\,(M-1)\bar v
 \leq\sqrt g\,V\ell
 \leq2\sqrt{N_G/M}.
\]
In the mean-age coordinate $y$, the increment bound and the convex coefficients in
$\bm a_q$ imply
\[
 \|\bm\mu(y)-\bm\mu_T\|_2
 \leq\bar v\,m(y)\leq m_0\bar v,
 \qquad \|\bm a_q\|_2\leq\bar v.
\]
The derivative $R'(y)$ is continuous and increasing across the support
segments. For $y\geq Y$, it is at least
$R'(Y)=2Y/D_M=4/[M(M-1)]$. Consequently, on every such segment,
\[
\begin{aligned}
 \frac{\mathrm d}{\mathrm dy}\mathcal P_0(\bm w(y))
 &=\bm a_q^\top\bm G(\bm\mu(y)-\bm\mu_T)+N_GR'(y)\\
 &\geq-gm_0\bar v^2
        +\frac{4N_G}{M(M-1)}\\
 &\geq\frac{2N_G}{M(M-1)}>0.
\end{aligned}
\]
The final inequality uses the displayed bound on $\bar v$ and
$m_0=(M-1)/2$. The same estimates apply to
the one-sided derivatives at support boundaries. For $M=2$, they
apply to the left derivative at $Y=m_0$.

On the first segment, $\mathcal P_0'(0^+)<0$ and
$\mathcal P_0''=a>0$. Its derivative is positive at $x_{\rm F}$
by the preceding bound, so it vanishes once at
$x_0=\bm u^\top\bm G(\bm\mu_T-\bar{\bm\mu})/a\in(0,x_{\rm F})$.
Thereafter $\mathcal P_0$ increases on every segment. Continuity at
their boundaries makes $x_0$ its unique global minimum.

For $\mathcal E_b$, the initial slope is
$\bm u^\top\bm G(\bar{\bm\mu}-\bm\mu_T)
+2\gamma_b\bar{\bm\mu}^\top\bm u$.
If it is nonnegative, Theorem~\ref{thm:app-turnover} gives
$\tau_b=0$ and strict increase for $x>0$, so $x_b=0$.
If it is negative, Lemma~\ref{lem:app-signal-accumulation} gives
$\mathcal Q'(x)=2\bm\mu(x)^\top\bm\mu'(x)+P\sigma^2R'(x)>0$
for $x>0$ within each segment. In particular,
$\mathcal E_b'(x_0)=\gamma_b\mathcal Q'(x_0)>0$.
Solving the affine derivative of
Eq.~(\ref{eq:app-preference-first-segment}) therefore gives the unique
zero in $(0,x_0)$ stated in Eq.~(\ref{eq:app-preference-minimizer}).
Beyond the first segment, both $\mathcal P_0'$ and $\mathcal Q'$
are positive. Hence this zero is the unique global minimizer of
$\mathcal E_b$.
\end{proof}

Lemma~\ref{lem:app-preference-interior} gives the global minimizer at
each evaluated precision. In Eq.~(\ref{eq:app-preference-minimizer}),
precision enters only through $\gamma_b$: lower precision increases
this coefficient and gives more weight to the quantization cost, while
higher precision reduces it. This changes the optimal trade-off between
denoising and reducing temporal lag. The following theorem identifies
the precision at which that preference changes and proves how the
optimum responds.

\begin{theorem}[Precision-Dependent Averaging Preference]
\label{thm:app-bit-ordering}
Under Assumptions~\ref{ass:app-bounded-noise}--\ref{ass:app-nonvanishing-progress},
for sufficiently large $T$, define
\begin{equation}
 b_{\mathrm{crit}}
 :=\log_2\!\left(
 1+\sqrt{\frac{2c_G\bar{\bm\mu}^\top\bm u}
{\bm u^\top\bm G(\bm\mu_T-\bar{\bm\mu})}}
 \right).
 \label{eq:app-critical-precision}
\end{equation}
For every evaluated precision $b\geq b_{\min}$,
\[
\begin{aligned}
 b\leq b_{\mathrm{crit}}
 &\quad\Longrightarrow\quad x_b=0,\quad\tau_b=0
 \qquad\text{\textbf{(less spatial noise)}},\\
 b>b_{\mathrm{crit}}
 &\quad\Longrightarrow\quad x_b>0,\quad\tau_b>0
 \qquad\text{\textbf{(less temporal lag)}}.
\end{aligned}
\]
The optimal position $x_b$ is nondecreasing in $b$ and strictly
increasing in the regime $b>b_{\mathrm{crit}}$.
If $b_{\mathrm{crit}}<b_{\min}$, all evaluated precisions lie in the
second regime.
\end{theorem}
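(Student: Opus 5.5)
The plan is to read everything off the closed-form minimizer in Lemma~\ref{lem:app-preference-interior}. Precision enters Eq.~(\ref{eq:app-preference-minimizer}) only through $\gamma_b=c_G/(2^b-1)^2$. Write
\[
 p:=\bm u^\top\bm G(\bm\mu_T-\bar{\bm\mu})=-\mathcal P_0'(0^+)>0,
 \qquad q:=\bar{\bm\mu}^\top\bm u=\tfrac12\mathcal Q'(0^+)\geq0.
\]
Both signs were established before that lemma for sufficiently large $T$. The numerator of $x_b$ is then $N(\gamma_b):=p-2\gamma_b q$, and the denominator $a+c\gamma_b$ is positive. So $x_b>0$ if and only if $N(\gamma_b)>0$, that is, $\gamma_b<p/(2q)$, with the convention that this holds for every $b$ when $q=0$.

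Next I translate this into a condition on $b$. The inequality $c_G/(2^b-1)^2<p/(2q)$ is equivalent to $2^b-1>\sqrt{2c_Gq/p}$, i.e.\ $b>b_{\mathrm{crit}}$ as defined in Eq.~(\ref{eq:app-critical-precision}). Symmetrically, $b\leq b_{\mathrm{crit}}$ gives $N\leq 0$ and hence $x_b=0$. When $q=0$ we get $b_{\mathrm{crit}}=0<b_{\min}$, so every evaluated precision lies in the second regime; this is also how I would dispatch the final sentence of the statement in general, since $b\geq b_{\min}>b_{\mathrm{crit}}$ forces the second case.

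To link $x_b$ with $\tau_b$, I use the initial slope $\mathcal E_b'(0^+)=-p+2\gamma_b q=-N(\gamma_b)$. Theorem~\ref{thm:app-turnover} sets $\tau_b=0$ exactly when $\mathcal E_b'(0^+)\geq0$, i.e.\ when $N\leq0$. Otherwise $\tau_b$ is the positive root of $\mathcal E_b(\tau_b)=\mathcal E_b(0)$, so $\tau_b>0$. Hence $\tau_b>0\iff x_b>0\iff b>b_{\mathrm{crit}}$, which gives the two displayed implications. One caveat: Lemma~\ref{lem:app-preference-interior} and Theorem~\ref{thm:app-turnover} each hold only for sufficiently large $T$. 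I would take the maximum of their thresholds, noting that the $T_0$ of Lemma~\ref{lem:app-short-window-main} is independent of $b$, and I would need to check the same uniformity in $b$ for the first-segment estimates, since $\gamma_b\leq c_G/9$ is bounded.

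For monotonicity, define $f(\gamma):=(p-2q\gamma)/(a+c\gamma)$. Then
\[
 f'(\gamma)=-\frac{2qa+pc}{(a+c\gamma)^2}<0,
\]
because $pc>0$. So $f$ is strictly decreasing in $\gamma$, and $\gamma_b$ is strictly decreasing in $b$. Therefore $x_b=\max\{0,f(\gamma_b)\}$ is nondecreasing in $b$, and it is strictly increasing on the regime $b>b_{\mathrm{crit}}$, where the maximum equals $f(\gamma_b)$. I expect no real obstacle. The only delicate points are the degenerate case $q=0$, the boundary case $b=b_{\mathrm{crit}}$ (where $N=0$ gives $x_b=0$ and $\tau_b=0$, consistent with ``$\leq$''), and the uniformity of the large-$T$ threshold across $b$.
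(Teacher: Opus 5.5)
Your proposal is correct and follows essentially the same route as the paper. It reads both regimes off the sign of $\mathcal E_b'(0^+)=-\bm u^\top\bm G(\bm\mu_T-\bar{\bm\mu})+2\gamma_b\bar{\bm\mu}^\top\bm u$ using Lemma~\ref{lem:app-preference-interior} and Theorem~\ref{thm:app-turnover}, and gets monotonicity from the same derivative $-(2a\bar{\bm\mu}^\top\bm u+c\,\bm u^\top\bm G(\bm\mu_T-\bar{\bm\mu}))/(a+c\gamma)^2<0$ together with $\gamma_b$ strictly decreasing in $b$; your explicit handling of the case $\bar{\bm\mu}^\top\bm u=0$, the boundary $b=b_{\mathrm{crit}}$, and the $b$-independence of the large-$T$ threshold is consistent with, and slightly more careful than, the paper's argument.
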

\begin{proof}
By Lemma~\ref{lem:app-preference-interior}, $x_b$ is determined by
the sign of $\mathcal E_b'(0^+)$. Since
\[
 \mathcal E_b'(0^+)
 =\bm u^\top\bm G(\bar{\bm\mu}-\bm\mu_T)
  +\frac{2c_G\bar{\bm\mu}^\top\bm u}{(2^b-1)^2},
\]
this slope is nonnegative exactly when $b\leq b_{\mathrm{crit}}$.
The equivalence also holds when $\bar{\bm\mu}^\top\bm u=0$, in which case
$b_{\mathrm{crit}}=0$ and every evaluated precision has a negative
initial slope. Theorem~\ref{thm:app-turnover} then gives
$\tau_b=0$ in the nonnegative-slope regime and $\tau_b>0$ in the
negative-slope regime.

Where the maximum in Eq.~(\ref{eq:app-preference-minimizer}) is
positive, differentiation with respect to $\gamma_b$ gives
\[
 \frac{\mathrm d}{\mathrm d\gamma_b}
 \left(
 \frac{\bm u^\top\bm G(\bm\mu_T-\bar{\bm\mu})
 -2\gamma_b\bar{\bm\mu}^\top\bm u}
 {a+c\gamma_b}
 \right)
 =-\frac{2a\bar{\bm\mu}^\top\bm u
 +c\bm u^\top\bm G(\bm\mu_T-\bar{\bm\mu})}
 {(a+c\gamma_b)^2}<0.
\]
As $\gamma_b$ strictly decreases with $b$, $x_b$ is nondecreasing
overall and strictly increasing above the threshold.
\end{proof}

Together, these results characterize the optimal kernel's shift in
lag--noise preference: lower precision favors denoising, whereas higher
precision favors reducing temporal lag. Eq.~(\ref{eq:app-critical-precision})
gives the explicit precision threshold for this transition.

\section{Key Assumptions and Limitation}
\subsection{Empirical Assessment of Key Assumptions}
\label{app:empirical-assumptions}

Our assumptions concern the latent mean, which is defined as a conditional
expectation over training randomness. The law of large numbers gives, at each
fixed training step $t$,
\[
 \widehat{\bm\mu}_t^{(n)}
 :=\frac1n\sum_{r=1}^{n}\bm\theta_t^{(r)}
 \xrightarrow[n\to\infty]{\mathrm{a.s.}}
 \E[\bm\theta_t\mid\mathcal F_0]=\bm\mu_t.
\]
Repeated runs let us estimate the latent mean and spatial noise.
Since rerunning LLM pretraining many times is costly, we use a small
neural network to compare trajectories across runs and empirically assess
our key assumptions.

\paragraph{Experimental setup.}
We train a two-layer ReLU multilayer perceptron (MLP) on MNIST dataset. Images are
average-pooled to $14\times14$ and standardized, and the network has
dimensions $196\to64\to10$. We train the full network with SGD and
analyze its 650 output-layer weights and biases, denoted by
$\bm\theta_t$. To reflect the single-epoch data consumption of LLM pretraining, we sample
without replacement throughout training. We split MNIST into 58,000
training and 2,000 validation images, train a shared prefix on 16,000
images, and then branch into 16 runs that traverse the remaining data in
different orders. Across
the 16 branches, we estimate the latent mean and spatial noise by
\[
 \widehat{\bm\mu}_t^{(16)}=\frac1{16}\sum_{r=1}^{16}\bm\theta_t^{(r)},
 \qquad
 \widehat{\bm\epsilon}_t^{(r)}
 =\bm\theta_t^{(r)}-\widehat{\bm\mu}_t^{(16)}.
\]

The shared prefix uses 500 updates at batch size 32 and learning rate
$0.03$. Each branch then runs for 1,000 updates. We vary the continuation
learning rate over $\{0.01,0.02,0.03\}$ at batch size 32 and the batch
size over $\{24,32,40\}$ at learning rate $0.03$.
\begin{figure}
  \centering
  \includegraphics[width=\textwidth]{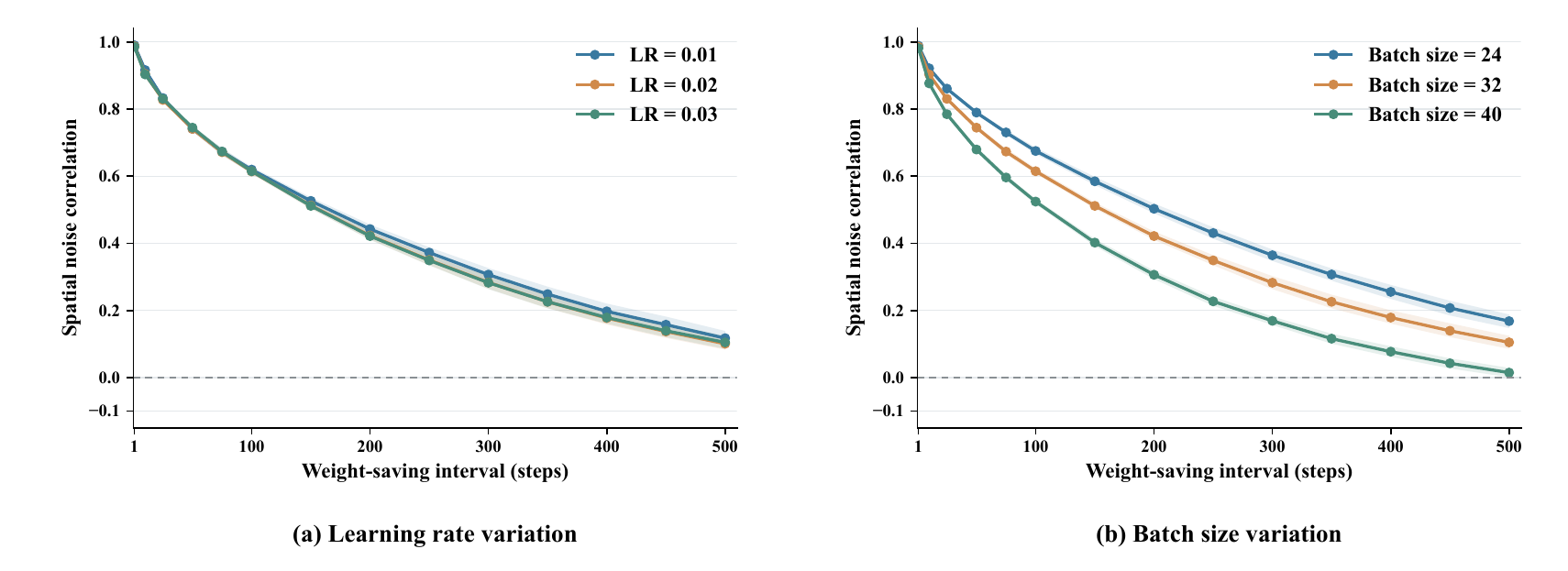}
  \caption{Spatial-noise correlation as the interval between saved
  weights increases. Left: learning-rate variation at batch size 32.
  Right: batch-size variation at learning rate $0.03$.}
  \label{fig:assumption-temporal}
\end{figure}

\paragraph{Spatial noise decorrelation.}
Figure~\ref{fig:assumption-temporal} measures the correlation between
spatial noise at different training positions. Let
$\mathcal T:=\{500,\ldots,1000\}$ denote the updates in the
analysis window. For a saving interval $h$, let
$\mathcal T_h:=\{t\in\mathcal T:t-h\geq0\}$ denote the endpoints for which both \(\widehat{\bm\epsilon}_{t-h}\) and \(\widehat{\bm\epsilon}_{t}\) are available. We evaluate the correlation as
\[
 \widehat\rho_{\epsilon}(h)
 :=
 \frac{\displaystyle
   \sum_{t\in\mathcal T_h}\sum_{r=1}^{16}
   \bigl\langle\widehat{\bm\epsilon}_{t-h}^{(r)},
                   \widehat{\bm\epsilon}_{t}^{(r)}\bigr\rangle}
 {\displaystyle
  \sqrt{\sum_{t\in\mathcal T_h}\sum_{r=1}^{16}
             \|\widehat{\bm\epsilon}_{t-h}^{(r)}\|_2^2}
  \sqrt{\sum_{t\in\mathcal T_h}\sum_{r=1}^{16}
             \|\widehat{\bm\epsilon}_{t}^{(r)}\|_2^2}},
\]
The correlation decreases steadily with $h$. At $h=500$, it is
$0.101$--$0.117$ across learning rates and
decreases from $0.168$ to $0.015$ as the batch size increases from 24 to
40. This supports the weak temporal dependence used in
Assumption~\ref{ass:app-noise} for sparsely saved weights.

\begin{figure}
  \centering
  \includegraphics[width=\textwidth]{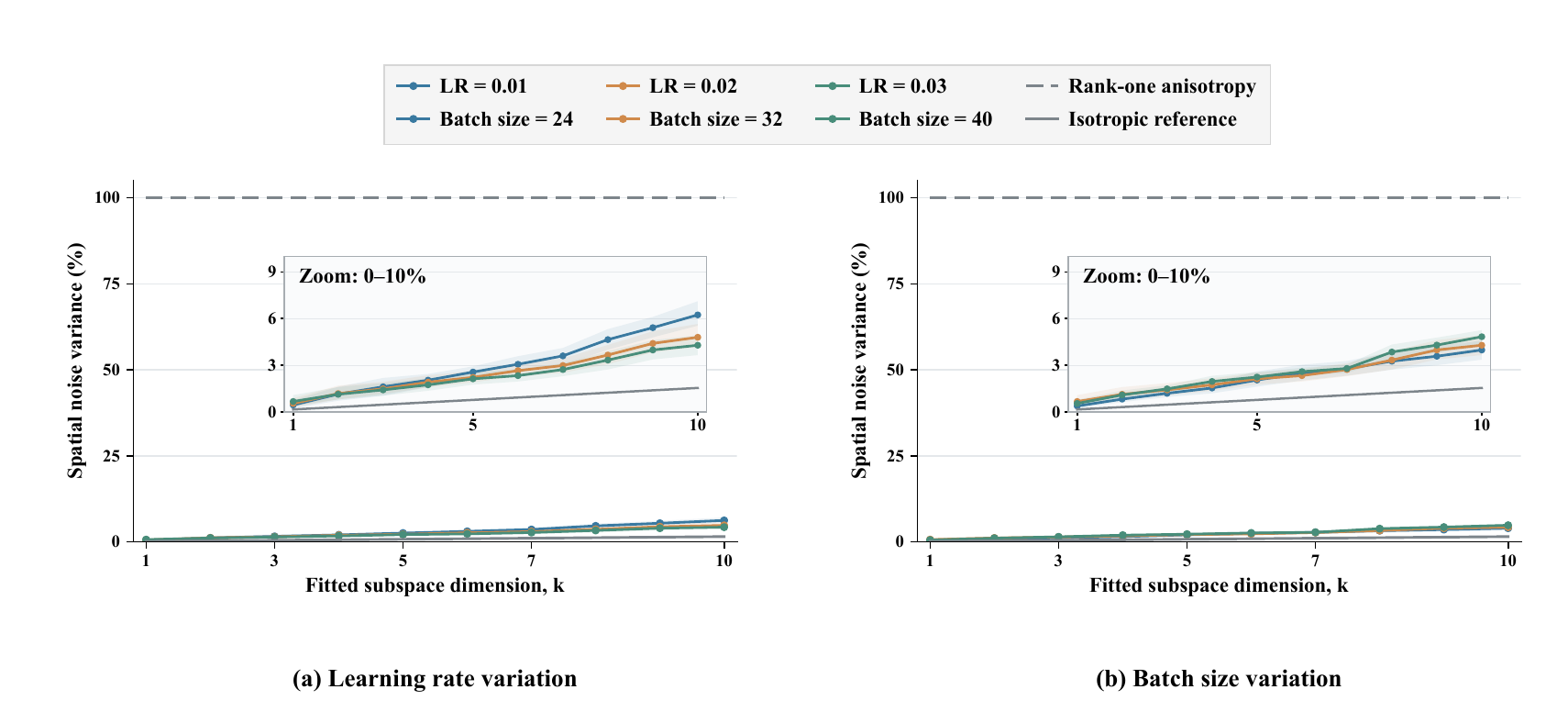}
  \caption{Spatial-noise variance captured by the first
  $k$ fitted directions. Left: learning-rate variation at batch size 32.
  Right: batch-size variation at learning rate $0.03$.}
  \label{fig:assumption-spatial}
\end{figure}

\paragraph{Spatial noise dispersion.}
Figure~\ref{fig:assumption-spatial} uses eight branches to identify the
principal directions of spatial noise and the remaining eight to evaluate
them. Let $\mathcal R_{\mathrm{fit}}$ and $\mathcal R_{\mathrm{eval}}$
denote these two sets. We recompute the latent-mean estimate within each
set and obtain the corresponding residuals
$\widehat{\bm\epsilon}_{t,\mathrm{fit}}^{(r)}$ and
$\widehat{\bm\epsilon}_{t,\mathrm{eval}}^{(r)}$. From the fitting set, we
form the empirical noise covariance
\[
 \widehat{\bm\Sigma}_{\mathrm{fit}}
 :=\frac{1}{|\mathcal T|\,|\mathcal R_{\mathrm{fit}}|}
 \sum_{t\in\mathcal T}\sum_{r\in\mathcal R_{\mathrm{fit}}}
 \widehat{\bm\epsilon}_{t,\mathrm{fit}}^{(r)}
 \bigl(\widehat{\bm\epsilon}_{t,\mathrm{fit}}^{(r)}\bigr)^\top.
\]
Let $\bm U_k\in\mathbb R^{650\times k}$ contain the orthonormal
eigenvectors associated with its $k$ largest eigenvalues. We then measure
how much spatial-noise variance these directions capture on the evaluation
branches:
\[
 \widehat V_{\epsilon}(k)
 :=100\times
 \frac{\displaystyle
   \sum_{t\in\mathcal T}\sum_{r\in\mathcal R_{\mathrm{eval}}}
   \|\bm U_k^\top\widehat{\bm\epsilon}_{t,\mathrm{eval}}^{(r)}\|_2^2}
 {\displaystyle
   \sum_{t\in\mathcal T}\sum_{r\in\mathcal R_{\mathrm{eval}}}
   \|\widehat{\bm\epsilon}_{t,\mathrm{eval}}^{(r)}\|_2^2}\,\%.
\]
The parameter space has 650 dimensions. If the noise were concentrated in
only a few directions, the first ten directions would explain a large
fraction of its variance; rank-one noise, for example, would be captured
entirely by the first direction. Instead, the first ten directions explain
only $3.98\%$--$6.24\%$ of the variance. Thus the spatial noise is not
concentrated in a small number of directions but dispersed across many
parameter directions. For analytical simplicity, Assumption~\ref{ass:app-bounded-noise}
models this dispersed spatial noise with an isotropic covariance.

\begin{figure}
  \centering
  \includegraphics[width=\textwidth]{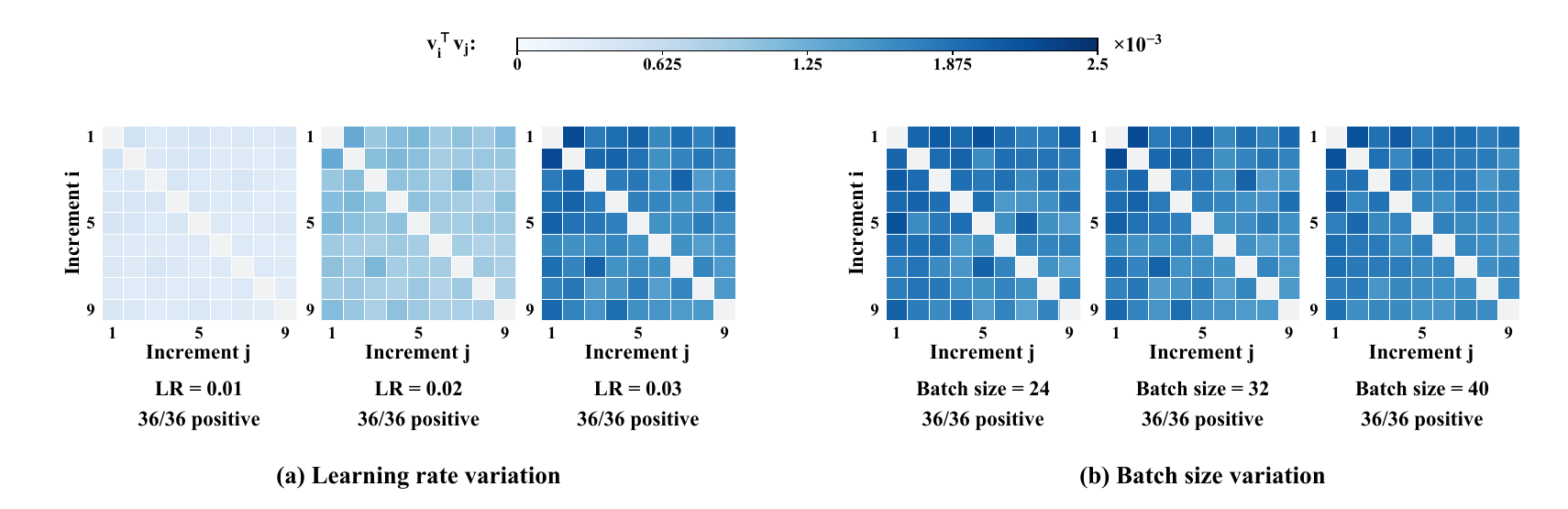}
  \caption{Cross-run estimates of the pairwise inner products between
  nine consecutive latent-mean progress increments. Left: learning-rate
  variation. Right: batch-size variation.}
  \label{fig:assumption-increments}
\end{figure}

\paragraph{Alignment between progress increments.}
Let $t_0<\cdots<t_9$ be the ten selected positions and define the
run-specific increments
$\Delta\bm\theta_j^{(r)}:=\bm\theta_{t_j}^{(r)}-
\bm\theta_{t_{j-1}}^{(r)}$. Figure~\ref{fig:assumption-increments}
estimates the inner product between latent-mean increments using distinct
runs,
\[
 \widehat A_{ij}
 :=\frac{1}{16\cdot15}\sum_{r\ne s}
 \bigl\langle\Delta\bm\theta_i^{(r)},
                 \Delta\bm\theta_j^{(s)}\bigr\rangle,
 \qquad 1\leq i<j\leq9.
\]
Each increment is first computed within one branch. In
$\widehat A_{ij}$, the increment at position $i$ from branch
$r$ is paired with the increment at position $j$ from a different branch
$s\ne r$, and the result is averaged over all $16\cdot15$ ordered branch
pairs. Because the two branches have independent zero-mean fluctuations,
their noise terms vanish in expectation. Hence $\widehat A_{ij}$ estimates
the inner product between the corresponding latent-mean increments.
All 36 pairs are positive in every setting, showing that recent progress
tends to remain aligned and supporting
Assumption~\ref{ass:app-increment-alignment}.

\begin{figure}
  \centering
  \includegraphics[width=\textwidth]{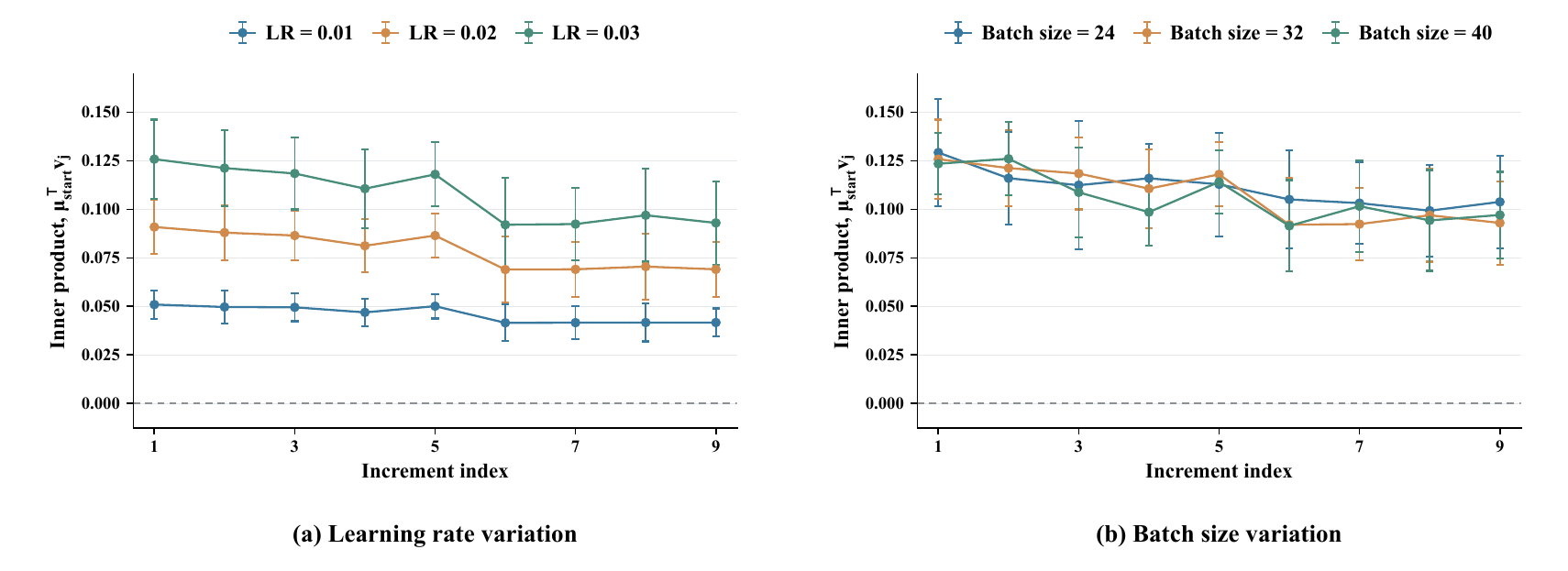}
  \caption{Cross-run estimates of the inner products between the
  window-start latent mean and nine subsequent progress increments.
  Left: learning-rate variation. Right: batch-size variation.}
  \label{fig:assumption-window-start}
\end{figure}

\paragraph{Alignment with the window-start mean.}
Figure~\ref{fig:assumption-window-start} measures the inner products
between the latent mean at the start of the analysis window and the nine
subsequent progress increments. It uses the similar cross-run estimate
\[
 \widehat B_j
 :=\frac{1}{16\cdot15}\sum_{r\ne s}
 \bigl\langle\bm\theta_{t_0}^{(r)},
                 \Delta\bm\theta_j^{(s)}\bigr\rangle,
 \qquad 1\leq j\leq9.
\]
All nine values are positive in every setting, supporting
Assumption~\ref{ass:app-mean-alignment}. 

\subsection{Limitation}
\label{app:limitations}

The main limitation of our work is that the theoretical guarantees are
established under an idealized setting. However, bridging tractable theory and practical
quantization pipelines is a broader challenge for the community rather than our uniqueness. In particular, our training analysis makes
assumptions about the latent mean and spatial noise. We provide an empirical
assessment of the key assumptions in Section
\ref{app:empirical-assumptions}, which offers support for the
theoretical setting. Our quantization analysis adopts a simplified GPTQ model: we use randomized
dither rounding rather than deterministic nearest rounding, which has well-established statistical properties and has
been used in quantization theory \citep{823976, chen2023quantized, dirksen2025subspace}; we also use a global scale
rather than group-wise scales and consider only a two-stage quantization and
compensation process. Such simplifications are common and widely accepted in
the community \citep{banner2018aciq, nagel2020up, chee2023quip}.
Future work will further develop the theory toward more practical quantization
settings.

\begin{figure}
  \centering
  \includegraphics[width=0.98\textwidth]{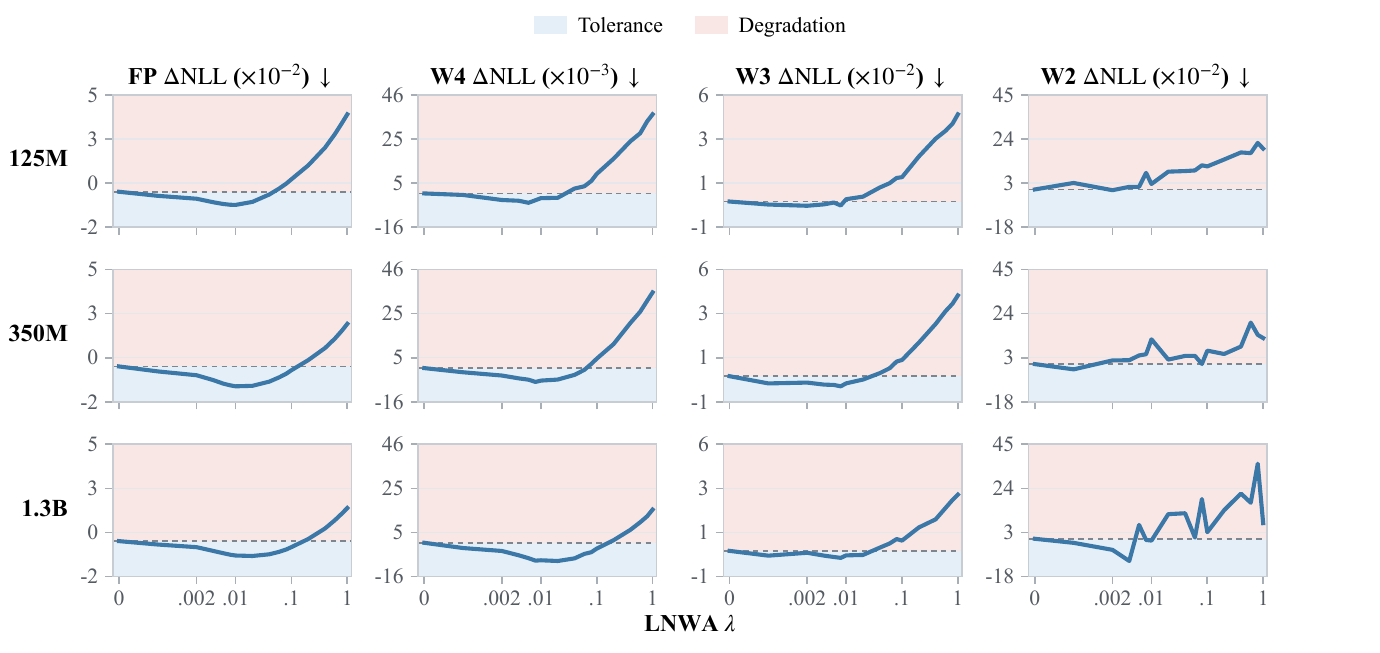}
  \caption{Comparison of $\Delta$NLL across \LNWA{} coefficients for OPT-125M, OPT-350M, and OPT-1.3B under GPTQ at 4-bit (W4), 3-bit (W3), 2-bit (W2), and full precision (FP).}
  \label{fig:lnwa-turnover-opt-full}
\end{figure}

\section{Additional Experimental Results}
\label{app:experiments}
\subsection{Experimental Settings}
\label{app:experimental-settings}

\paragraph{Pretraining settings.}
We pretrain OPT-series models \citep{zhang2022opt} (125M, 350M, and 1.3B) from scratch on FineWeb-Edu \citep{penedo2024fineweb}, with the model-specific configurations summarized in Table~\ref{tab:pretraining-settings}. Each training example is a sequence of length 512, and the global batch size denotes the number of such sequences processed per optimizer update. We use AdamW \citep{loshchilov2017decoupled} with $(\beta_1,\beta_2)=(0.9,0.95)$ and $\epsilon=10^{-8}$ throughout training. The LR is linearly warmed up to its peak value during the first $1\%$ of training and kept constant for the following $89\%$. Over the final $10\%$, we either continue training at the peak LR or transition to a cosine decay schedule, yielding the stable-LR and LR decay baselines, respectively. For weight averaging, we save the last $M=10$ weights from each stable-LR trajectory at intervals of 500 optimizer steps \citep{catalan2026training}. All experiments are run on 4 NVIDIA RTX PRO 6000 GPUs.

\begin{table}[!htbp]
  \centering
  \caption{Pretraining settings used in the experiments. Global batch size is measured in
  sequences per optimizer update, and gradient clipping is by global norm.}
  \label{tab:pretraining-settings}
  \scriptsize
  \setlength{\tabcolsep}{2.7pt}
  \renewcommand{\arraystretch}{1.12}
  \resizebox{0.6\textwidth}{!}{%
  \begin{tabular}{lccc}
    \toprule
    Model size & 125M & 350M & 1.3B \\
    \midrule
    Training tokens
      & 2B & 4B & 20B \\
    Batch size
      & 256 & 512 & 2560 \\
    Sequence length
      & 512 & 512 & 512 \\
    Peak learning rate
      & $1.0{\times}10^{-3}$ & $1.0{\times}10^{-3}$
      & $1.0{\times}10^{-3}$ \\
    LR warmup ratio
      & $1\%$ & $1\%$ & $1\%$ \\
    LR decay ratio
      & $10\%$ & $10\%$ & $10\%$ \\
    Weight decay
      & $0.1$ & $0.1$ & $0.1$ \\
    Gradient clipping
      & $0.5$ & $0.5$ & $0.5$ \\
    \bottomrule
  \end{tabular}%
  }
\end{table}

\paragraph{Post-training quantization settings.}
For both GPTQ \citep{frantar2022gptq} and AWQ \citep{lin2024awq}, we use the same calibration set, consisting of 128 length-512 sequences deterministically sampled from a fixed English C4 training split \citep{raffel2020exploring}. Both methods use a group size of 128 and are evaluated at 4-bit (W4), 3-bit (W3), and 2-bit (W2) precision. For GPTQ, we use $1\%$ Hessian damping, disable activation ordering, and set the column block size to 128. For AWQ, we use asymmetric affine quantization with zero points, with
scaling exponent $\alpha=0.5$ and clipping ratio $c=0.8$.
We evaluate every quantized endpoint on 1,024 blocks from a fixed held-out RefinedWeb split. All remaining quantization settings are kept identical across model sizes.

\begin{figure}
  \centering
  \includegraphics[width=0.98\textwidth]{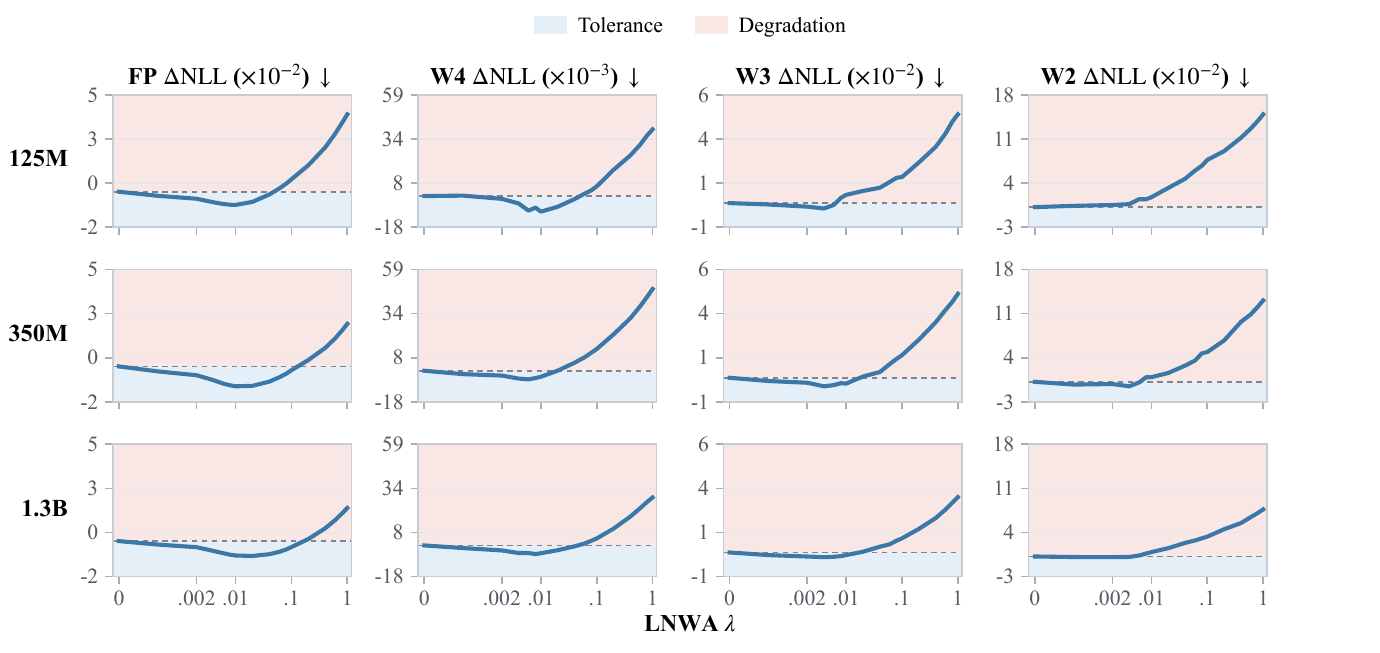}
  \caption{Comparison of $\Delta$NLL across \LNWA{} coefficients for OPT-125M, OPT-350M, and OPT-1.3B under AWQ at 4-bit (W4), 3-bit (W3), 2-bit (W2), and full precision (FP).}
  \label{fig:lnwa-turnover-opt-awq-full}
\end{figure}

\paragraph{Averaging kernels and coefficient grids.}
\label{app:averaging-settings}
The tolerance-to-degradation experiment evaluates the same coefficients $\lambda\in[0, 1]$ at every model size and precision. The precision-preference comparison uses
\LAWA{}, \WMA{}, and \LNWA{} with
$\lambda\in\{0.002, 0.004, 0.006, 0.008\}$. The
\LNWA-equivalent coefficients of the fixed baselines are $0$ for
\LAWA{} and approximately $0.006061$ for \WMA{}. Appendix~\ref{app:classical} derives these
relations.

\begin{figure}
  \centering
  \includegraphics[width=0.98\textwidth]{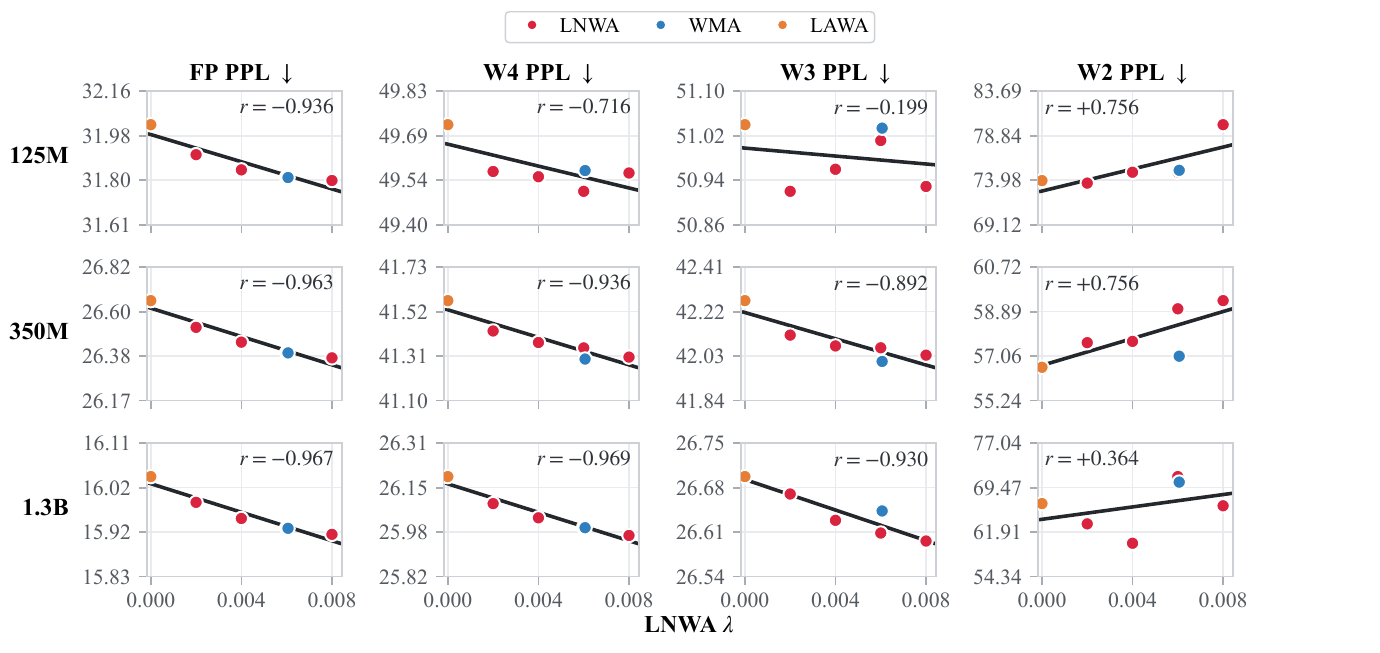}
  \caption{Comparison of PPL across \LNWA{}-equivalent averaging coefficients for OPT-125M, OPT-350M, and OPT-1.3B under GPTQ at 4-bit (W4), 3-bit (W3), 2-bit (W2), and full precision (FP).}
  \label{fig:lambda-regression-opt-full}
\end{figure}

For each model--precision panel, we form $n=6$ pairs $(x_i,y_i)$ from
\LAWA{}, \WMA{}, and \LNWA{} with
$\lambda\in\{0.002, 0.004, 0.006, 0.008\}$, where $x_i$ is the endpoint's
\LNWA-equivalent coefficient and $y_i$ is its measured PPL.  The black line
is the ordinary least-squares fit
\begin{equation*}
  (\hat a,\hat b)
  =\arg\min_{a,b}\sum_{i=1}^{n}\bigl[y_i-(a+b x_i)\bigr]^2,
\end{equation*}
and each panel reports the Pearson correlation
\begin{equation*}
  r=\frac{\sum_{i=1}^{n}(x_i-\bar x)(y_i-\bar y)}
  {\sqrt{\sum_{i=1}^{n}(x_i-\bar x)^2}
   \sqrt{\sum_{i=1}^{n}(y_i-\bar y)^2}}\in[-1,1].
\end{equation*}
Here $\bar x$ and $\bar y$ are the sample means of the $n$ coefficient
and PPL values, respectively. Lower PPL indicates better performance.
A negative correlation associates
larger $\lambda$, and hence less lag, with lower PPL. A positive
correlation associates smaller $\lambda$, and hence less noise, with lower
PPL. The magnitude $|r|$ measures the strength of the linear association
across the six endpoints.

\begin{figure}
  \centering
  \includegraphics[width=0.98\textwidth]{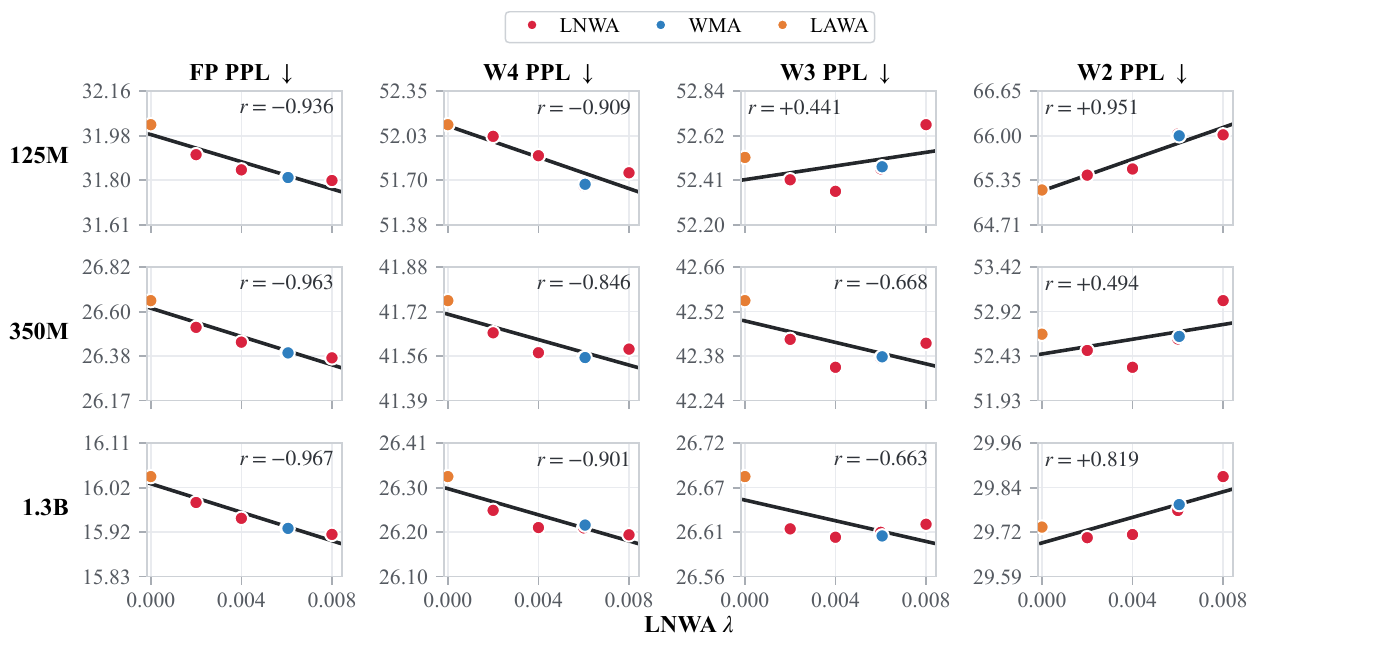}
  \caption{Comparison of PPL across \LNWA{}-equivalent averaging coefficients for OPT-125M, OPT-350M, and OPT-1.3B under AWQ at 4-bit (W4), 3-bit (W3), 2-bit (W2), and full precision (FP).}
  \label{fig:lambda-regression-opt-awq-full}
\end{figure}
\begin{figure}
  \centering
  \includegraphics[width=0.72\textwidth]{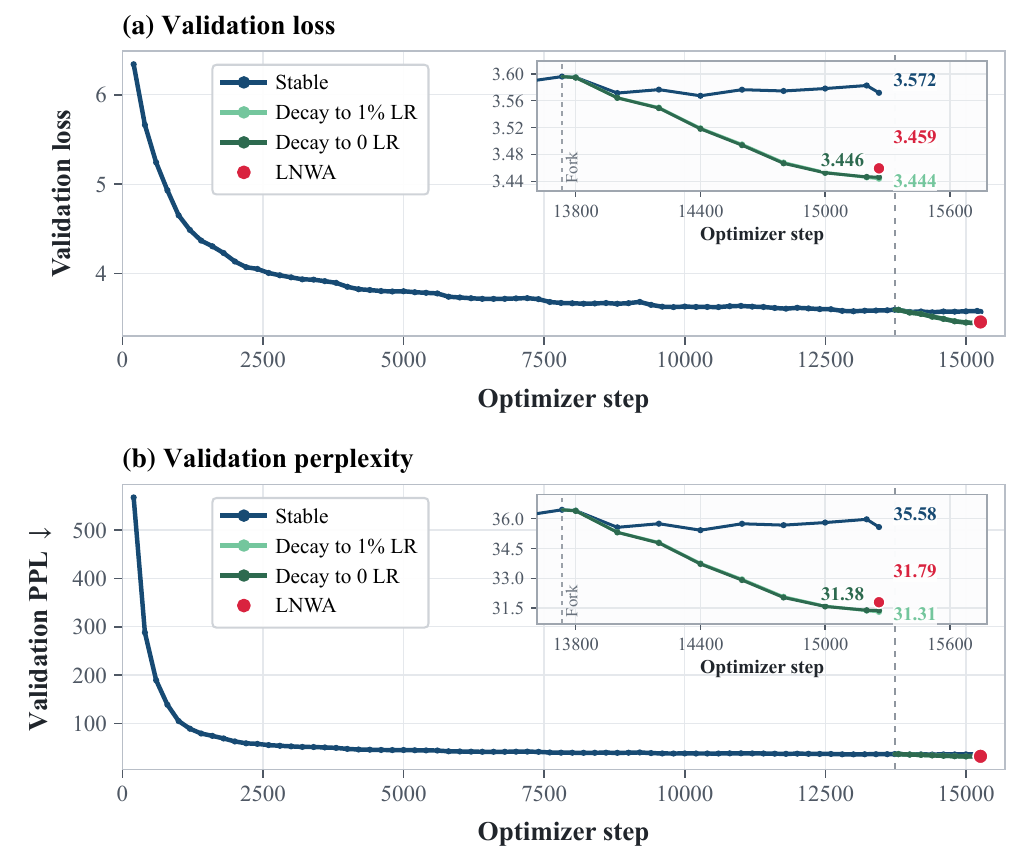}
  \caption{Comparison of validation loss and perplexity under Stable LR, LR decay, and \LNWA{} for OPT-125M.}
  \label{fig:wa-decay-validation}
\end{figure}
\subsection{Experimental Results}
\label{app:complete-results}

\paragraph{From noise tolerance to degradation.}
Figures~\ref{fig:lnwa-turnover-opt-full} and
\ref{fig:lnwa-turnover-opt-awq-full} extend
Figure~\ref{fig:lnwa-turnover-main} to OPT-125M, OPT-350M, and OPT-1.3B,
including full precision (FP) and 4-bit (W4), 3-bit (W3), and 2-bit (W2)
quantization under GPTQ and AWQ, respectively.
FP is evaluated on the original
FineWeb-Edu validation set, whereas W4/W3/W2 are evaluated on the fixed
held-out RefinedWeb validation set.

At small $\lambda$, NLL remains close to or below the \LAWA{} baseline;
at larger $\lambda$, it rises above the baseline. Reducing temporal lag
can initially offset or outweigh the cost of retaining additional
spatial noise. Further movement along the frontier retains more noise
and degrades performance. These results across model scales and
quantization methods further support the tolerance-to-degradation
transition predicted by Theorem~\ref{thm:ptq-turnover}.

\begin{figure}
  \centering
  \includegraphics[width=0.72\textwidth]{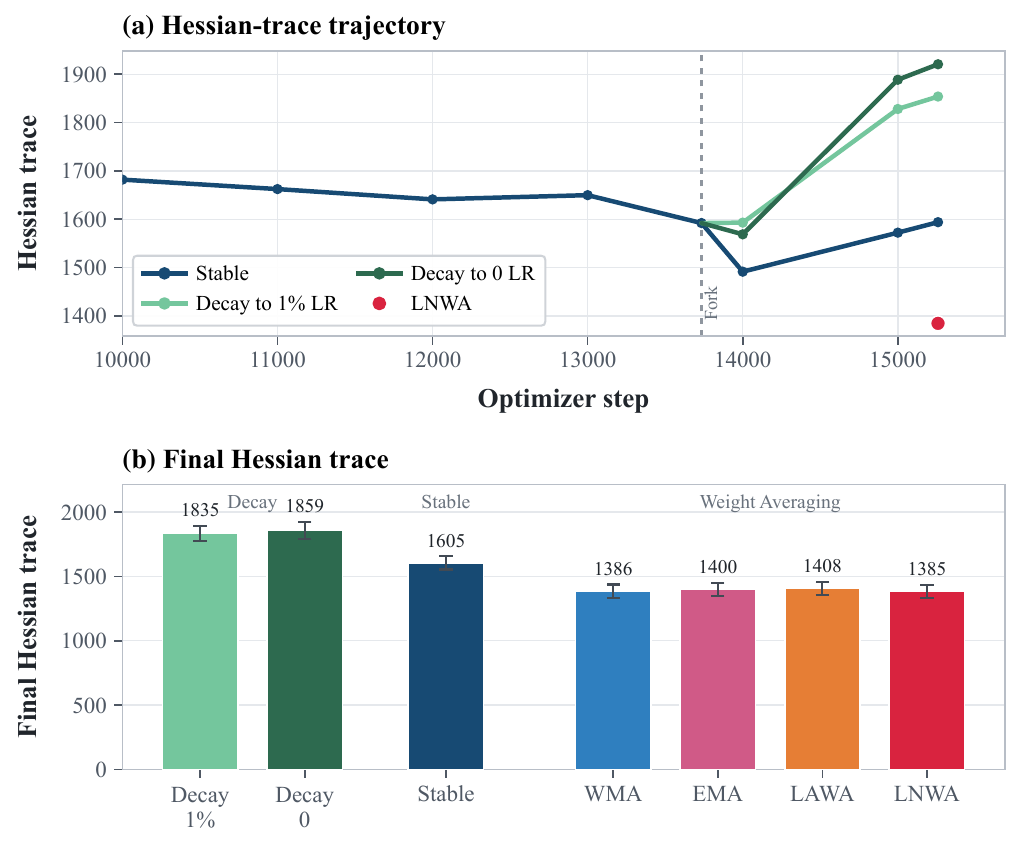}
  \caption{Comparison of Hessian-trace trajectories and final Hessian trace between LR decay, Stable LR, and weight averaging for OPT-125M.}
  \label{fig:wa-decay-hessian}
\end{figure}

\paragraph{Precision changes the preferred averaging kernel.}
Figures~\ref{fig:lambda-regression-opt-full} and
\ref{fig:lambda-regression-opt-awq-full} extend
Figure~\ref{fig:lambda-precision-reversal} to OPT-125M, OPT-350M, and OPT-1.3B,
including FP, W4, W3, and W2, with GPTQ and AWQ, respectively.
At every model scale, PPL is positively correlated with the averaging
coefficient at W2 and negatively correlated at W4 under both methods.
Since smaller coefficients retain less spatial noise and larger
coefficients reduce temporal lag, the reversal indicates a preference
for less noise at lower precision and less lag at higher precision.
This further supports the precision-dependent preference predicted by
Theorem~\ref{thm:bit-ordering}.

\paragraph{Comparison with learning-rate decay.}
We compare continued stable-learning-rate training, cosine decay to zero,
cosine decay to $1\%$ of the maximum learning rate, and \LNWA{} on
OPT-125M. 
Validation loss and perplexity are recorded every 200 optimizer steps,
including the shared starting weight and the final weights.
The \LNWA{} endpoint uses $\lambda=0.008$ and is shown at the
final step.

Figure~\ref{fig:wa-decay-validation} shows that \LNWA{} reaches a
validation PPL of 31.79, compared with 35.58 for continued stable-LR
training and 31.31 and 31.38 for decay to $1\%$ LR and zero,
respectively.

Figure~\ref{fig:wa-decay-hessian} shows that the final
Hessian trace is 1,385 for \LNWA{}, compared with 1,605 for stable LR
and 1,835 and 1,859 for the two decay schedules. Weight averaging
therefore achieves validation performance close to LR decay while
maintaining lower local curvature.

\end{document}

%% file: math_commands.tex
\usepackage{amsmath,amsfonts,bm}

\def\eqref#1{equation~\ref{#1}}

\def\1{\bm{1}}

\DeclareMathAlphabet{\mathsfit}{\encodingdefault}{\sfdefault}{m}{sl}
\SetMathAlphabet{\mathsfit}{bold}{\encodingdefault}{\sfdefault}{bx}{n}

\newcommand{\E}{\mathbb{E}}

\newcommand{\Cov}{\mathrm{Cov}}


%% file: iclr2027_conference.bib
@article{zhang2022opt,
  title={Opt: Open pre-trained transformer language models},
  author={Zhang, Susan and Roller, Stephen and Goyal, Naman and Artetxe, Mikel and Chen, Moya and Chen, Shuohui and Dewan, Christopher and Diab, Mona and Li, Xian and Lin, Xi Victoria and others},
  journal={arXiv preprint arXiv:2205.01068},
  year={2022}
}

@article{workshop2022bloom,
  title={Bloom: A 176b-parameter open-access multilingual language model},
  author={Workshop, BigScience and Scao, Teven Le and Fan, Angela and Akiki, Christopher and Pavlick, Ellie and Ili{\'c}, Suzana and Hesslow, Daniel and Castagn{\'e}, Roman and Luccioni, Alexandra Sasha and Yvon, Fran{\c{c}}ois and others},
  journal={arXiv preprint arXiv:2211.05100},
  year={2022}
}

@article{team2025kimi,
  title={Kimi k2: Open agentic intelligence},
  author={Team, Kimi and Bai, Yifan and Bao, Yiping and Charles, Y and Chen, Cheng and Chen, Guanduo and Chen, Haiting and Chen, Huarong and Chen, Jiahao and Chen, Ningxin and others},
  journal={arXiv preprint arXiv:2507.20534},
  year={2025}
}

@article{team2025gemma,
  title={Gemma 3 technical report},
  author={Team, Gemma and Kamath, Aishwarya and Ferret, Johan and Pathak, Shreya and Vieillard, Nino and Merhej, Ramona and Perrin, Sarah and Matejovicova, Tatiana and Ram{\'e}, Alexandre and Rivi{\`e}re, Morgane and others},
  journal={arXiv preprint arXiv:2503.19786},
  year={2025}
}

@inproceedings{xiao2023smoothquant,
  title={Smoothquant: Accurate and efficient post-training quantization for large language models},
  author={Xiao, Guangxuan and Lin, Ji and Seznec, Mickael and Wu, Hao and Demouth, Julien and Han, Song},
  booktitle={International Conference on Machine Learning},
  pages={38087--38099},
  year={2023},
  organization={PMLR}
}

@article{sun2024flatquant,
  title={Flatquant: Flatness matters for LLM quantization},
  author={Sun, Yuxuan and Liu, Ruikang and Bai, Haoli and Bao, Han and Zhao, Kang and Li, Yuening and Hu, Jiaxin and Yu, Xianzhi and Hou, Lu and Yuan, Chun and others},
  journal={arXiv preprint arXiv:2410.09426},
  year={2024}
}

@inproceedings{liu2025spinquant,
  title={Spinquant: LLM quantization with learned rotations},
  author={Liu, Zechun and Zhao, Changsheng and Fedorov, Igor and Soran, Bilge and Choudhary, Dhruv and Krishnamoorthi, Raghuraman and Chandra, Vikas and Tian, Yuandong and Blankevoort, Tijmen},
  booktitle={International Conference on Learning Representations},
  volume={2025},
  pages={92009--92032},
  year={2025}
}

@inproceedings{shao2024omniquant,
  title={Omniquant: Omnidirectionally calibrated quantization for large language models},
  author={Shao, Wenqi and Chen, Mengzhao and Zhang, Zhaoyang and Xu, Peng and Zhao, Lirui and Li, Zhiqian and Zhang, Kaipeng and Peng, Gao and Qiao, Yu and Luo, Ping},
  booktitle={International Conference on Learning Representations},
  volume={2024},
  pages={45472--45496},
  year={2024}
}

@article{tseng2024quip,
  title={Quip\#: Even better llm quantization with hadamard incoherence and lattice codebooks},
  author={Tseng, Albert and Chee, Jerry and Sun, Qingyao and Kuleshov, Volodymyr and De Sa, Christopher},
  journal={Proceedings of Machine Learning Research},
  volume={235},
  pages={48630},
  year={2024}
}

@article{zhang2024lqer,
  title={Lqer: Low-rank quantization error reconstruction for LLMs},
  author={Zhang, Cheng and Cheng, Jianyi and Constantinides, George A and Zhao, Yiren},
  journal={arXiv preprint arXiv:2402.02446},
  year={2024}
}

@article{shao2026dartquant,
  title={Dartquant: efficient rotational distribution calibration for LLM quantization},
  author={Shao, Yuantian and Chen, Yuanteng and Wang, Peisong and Yu, Jianlin and Lin, Jing and Wei, Zhihui and Cheng, Jian and others},
  journal={Advances in Neural Information Processing Systems},
  volume={38},
  pages={143936--143970},
  year={2026}
}

@article{li2025gptaq,
  title={Gptaq: Efficient finetuning-free quantization for asymmetric calibration},
  author={Li, Yuhang and Yin, Ruokai and Lee, Donghyun and Xiao, Shiting and Panda, Priyadarshini},
  journal={arXiv preprint arXiv:2504.02692},
  year={2025}
}

@article{frantar2022gptq,
  title= {GPTQ: Accurate post-training quantization for generative pre-trained transformers},
  author={Frantar, Elias and Ashkboos, Saleh and Hoefler, Torsten and Alistarh, Dan},
  journal={arXiv preprint arXiv:2210.17323},
  year={2022}
}

@article{lin2024awq,
  title={Awq: Activation-aware weight quantization for on-device LLM compression and acceleration},
  author={Lin, Ji and Tang, Jiaming and Tang, Haotian and Yang, Shang and Chen, Wei-Ming and Wang, Wei-Chen and Xiao, Guangxuan and Dang, Xingyu and Gan, Chuang and Han, Song},
  journal={Proceedings of Machine Learning and Systems},
  volume={6},
  pages={87--100},
  year={2024}
}

@inproceedings{ouyang2025low,
  title={Low-bit quantization favors undertrained LLMs},
  author={Ouyang, Xu and Ge, Tao and Hartvigsen, Thomas and Zhang, Zhisong and Mi, Haitao and Yu, Dong},
  booktitle={Proceedings of the 63rd Annual Meeting of the Association for Computational Linguistics (Volume 1: Long Papers)},
  pages={32338--32348},
  year={2025}
}

@inproceedings{kumar2025scaling,
  title={Scaling laws for precision},
  author={Kumar, Tanishq and Ankner, Zachary and Spector, Benjamin and Bordelon, Blake and Muennighoff, Niklas and Paul, Mansheej and Pehlevan, Cengiz and R{\'e}, Christopher and Raghunathan, Aditi},
  booktitle={International Conference on Learning Representations},
  volume={2025},
  pages={71833--71864},
  year={2025}
}

@inproceedings{catalan2026training,
  title={Training dynamics impact post-training quantization robustness},
  author={Catalan-Tatjer, Albert and Ajroldi, Niccolo and Geiping, Jonas},
  booktitle={International Conference on Learning Representations},
  volume={2026},
  pages={64294--64319},
  year={2026}
}

@article{yano2026pre,
  title={Pre-training LLM without learning rate decay enhances supervised fine-tuning},
  author={Yano, Kazuki and Kiyono, Shun and Kobayashi, Sosuke and Takase, Sho and Suzuki, Jun},
  journal={arXiv preprint arXiv:2603.16127},
  year={2026}
}

@misc{microsoft-ai-team-mai-thinking-1-building-a-hill-climbing-machine-2024,
  doi = {},
  url = {},
  author = {Microsoft AI Team, The},
  title = {MAI-Thinking-1: Building a Hill-Climbing Machine},
  publisher = {},
  year = {2026}
}

@article{watts2026sharpness,
  title={Sharpness-aware pretraining mitigates catastrophic forgetting},
  author={Watts, Ishaan and Li, Catherine and Goyal, Sachin and Springer, Jacob Mitchell and Raghunathan, Aditi},
  journal={arXiv preprint arXiv:2605.02105},
  year={2026}
}

@article{wang2025learning,
  title={Learning dynamics in continual pre-training for large language models},
  author={Wang, Xingjin and Tissue, Howe and Wang, Lu and Li, Linjing and Zeng, Daniel Dajun},
  journal={arXiv preprint arXiv:2505.07796},
  year={2025}
}

@article{li2026model,
  title={Model merging in pre-training of large language models},
  author={Li, Yunshui and Ma, Yiyuan and Yan, Shen and Zhang, Chaoyi and Liu, Jing and Lu, Jianqiao and Xu, Ziwen and Chen, Mengzhao and Wang, Minrui and Zhan, Shiyi and others},
  journal={Advances in Neural Information Processing Systems},
  volume={38},
  pages={133668--133691},
  year={2026}
}

@article{sanyal2023early,
  title={Early weight averaging meets high learning rates for llm pre-training},
  author={Sanyal, Sunny and Neerkaje, Atula and Kaddour, Jean and Kumar, Abhishek and Sanghavi, Sujay},
  journal={arXiv preprint arXiv:2306.03241},
  year={2023}
}

@inproceedings{tian2026wsm,
  title={WSM: decay-free learning rate schedule via checkpoint merging for LLM pre-training},
  author={Tian, Changxin and Wang, Jiapeng and Zhao, Qian and Chen, Kunlong and Liu, Jia and Liu, Ziqi and Mao, Jiaxin and Zhao, Xin and Zhang, Zhiqiang and Zhou, Jun},
  booktitle={International Conference on Learning Representations},
  volume={2026},
  pages={134767--134795},
  year={2026}
}

@article{morales2024exponential,
  title={Exponential moving average of weights in deep learning: Dynamics and benefits},
  author={Morales-Brotons, Daniel and Vogels, Thijs and Hendrikx, Hadrien},
  journal={arXiv preprint arXiv:2411.18704},
  year={2024}
}

@article{ajroldi2025and,
  title={When, where and why to average weights?},
  author={Ajroldi, Niccol{\`o} and Orvieto, Antonio and Geiping, Jonas},
  journal={arXiv preprint arXiv:2502.06761},
  year={2025}
}

@article{sandler2023training,
  title={Training trajectories, mini-batch losses and the curious role of the learning rate},
  author={Sandler, Mark and Zhmoginov, Andrey and Vladymyrov, Max and Miller, Nolan},
  journal={arXiv preprint arXiv:2301.02312},
  year={2023}
}

@article{block2025ema,
  title={EMA Without the Lag: Bias-Corrected Iterate Averaging Schemes},
  author={Block, Adam and Zhang, Cyril},
  journal={arXiv preprint arXiv:2508.00180},
  year={2025}
}

@inproceedings{luo2026learning,
  title={How learning rate decay wastes your best data in curriculum-based llm pretraining},
  author={Luo, Kairong and Sun, Zhenbo and Wen, Haodong and Shi, Xinyu and Cui, Jiarui and Dang, Chenyi and Lyu, Kaifeng and Chen, Wenguang},
  booktitle={International Conference on Learning Representations},
  volume={2026},
  pages={147266--147297},
  year={2026}
}

@inproceedings{chen2026coverage,
  title={The coverage principle: How pre-training enables post-training},
  author={Chen, Fan and Huang, Audrey and Golowich, Noah and Malladi, Sadhika and Block, Adam and Ash, Jordan and Krishnamurthy, Akshay and Foster, Dylan},
  booktitle={International Conference on Learning Representations},
  volume={2026},
  pages={112225--112293},
  year={2026}
}

@article{hu2024minicpm,
  title={Minicpm: Unveiling the potential of small language models with scalable training strategies},
  author={Hu, Shengding and Tu, Yuge and Han, Xu and He, Chaoqun and Cui, Ganqu and Long, Xiang and Zheng, Zhi and Fang, Yewei and Huang, Yuxiang and Zhao, Weilin and others},
  journal={arXiv preprint arXiv:2404.06395},
  year={2024}
}

@article{hagele2024scaling,
  title={Scaling laws and compute-optimal training beyond fixed training durations},
  author={H{\"a}gele, Alexander and Bakouch, Elie and Kosson, Atli and Allal, Loubna B and Von Werra, Leandro and Jaggi, Martin},
  journal={Advances in Neural Information Processing Systems},
  volume={37},
  pages={76232--76264},
  year={2024}
}

@inproceedings{wen2025understanding,
  title={Understanding warmup-stable-decay learning rates: A river valley loss landscape view},
  author={Wen, Kaiyue and Li, Zhiyuan and Wang, Jason and Hall, David and Liang, Percy and Ma, Tengyu},
  booktitle={International Conference on Learning Representations},
  volume={2025},
  pages={42840--42885},
  year={2025}
}

@article{li2026functional,
  title={Functional scaling laws in kernel regression: Loss dynamics and learning rate schedules},
  author={Li, Binghui and Chen, Fengling and Huang, Zixun and Wang, Lean and Wu, Lei},
  journal={Advances in Neural Information Processing Systems},
  volume={38},
  pages={101211--101269},
  year={2026}
}

@article{li2026optimal,
  title={Optimal learning-rate schedules under functional scaling laws: Power decay and warmup-stable-decay},
  author={Li, Binghui and Wang, Zilin and Chen, Fengling and Zhao, Shiyang and Zheng, Ruiheng and Wu, Lei},
  journal={arXiv preprint arXiv:2602.06797},
  year={2026}
}

@article{liu2026effective,
  title={Effective Learning Rate Governs Loss Dynamics in Language Model Pretraining},
  author={Liu, Zihan and Zheng, Ruiheng and Zhang, Shaobo and Tian, Changxin and Chen, Kunlong and Zhang, Zhiqiang and Wu, Lei},
  journal={arXiv preprint arXiv:2608.24814},
  year={2026}
}

@article{alvarez2025introducing,
  title={Introducing NVFP4 for efficient and accurate low-precision inference},
  author={Alvarez, Eduardo and Almog, Omri and Chung, Eric and Layton, Simon and Stosic, Dusan and Krashinsky, Ronny and Aubrey, Kyle},
  journal={NVIDIA Technical Blog},
  volume={24},
  year={2025}
}

@inproceedings{yao2024exploring,
  title={Exploring post-training quantization in llms from comprehensive study to low rank compensation},
  author={Yao, Zhewei and Wu, Xiaoxia and Li, Cheng and Youn, Stephen and He, Yuxiong},
  booktitle={Proceedings of the AAAI Conference on Artificial Intelligence},
  volume={38},
  number={17},
  pages={19377--19385},
  year={2024}
}

@inproceedings{zhang2025qera,
  title={Qera: an analytical framework for quantization error reconstruction},
  author={Zhang, Cheng and Wong, Jeffrey TH and Xiao, Can and Constantinides, George and Zhao, Yiren},
  booktitle={International Conference on Learning Representations},
  volume={2025},
  pages={12531--12560},
  year={2025}
}

@article{izmailov2018averaging,
  title={Averaging weights leads to wider optima and better generalization},
  author={Izmailov, Pavel and Podoprikhin, Dmitrii and Garipov, Timur and Vetrov, Dmitry and Wilson, Andrew Gordon},
  journal={arXiv preprint arXiv:1803.05407},
  year={2018}
}

@article{polyak1992acceleration,
  title={Acceleration of stochastic approximation by averaging},
  author={Polyak, Boris T and Juditsky, Anatoli B},
  journal={SIAM Journal on Control and Optimization},
  volume={30},
  number={4},
  pages={838--855},
  year={1992},
  publisher={SIAM}
}

@article{penedo2024fineweb,
  title={The fineweb datasets: Decanting the web for the finest text data at scale},
  author={Penedo, Guilherme and Kydl{\'\i}{\v{c}}ek, Hynek and Lozhkov, Anton and Mitchell, Margaret and Raffel, Colin and Von Werra, Leandro and Wolf, Thomas and others},
  journal={Advances in Neural Information Processing Systems},
  volume={37},
  pages={30811--30849},
  year={2024}
}

@article{penedo2023refinedweb,
  title={The refinedweb dataset for falcon llm: Outperforming curated corpora with web data only},
  author={Penedo, Guilherme and Malartic, Quentin and Hesslow, Daniel and Cojocaru, Ruxandra and Alobeidli, Hamza and Cappelli, Alessandro and Pannier, Baptiste and Almazrouei, Ebtesam and Launay, Julien},
  journal={Advances in Neural Information Processing Systems},
  volume={36},
  pages={79155--79172},
  year={2023}
}

@article{loshchilov2017decoupled,
  title={Decoupled weight decay regularization},
  author={Loshchilov, Ilya and Hutter, Frank},
  journal={arXiv preprint arXiv:1711.05101},
  year={2017}
}

@article{raffel2020exploring,
  title={Exploring the limits of transfer learning with a unified text-to-text transformer},
  author={Raffel, Colin and Shazeer, Noam and Roberts, Adam and Lee, Katherine and Narang, Sharan and Matena, Michael and Zhou, Yanqi and Li, Wei and Liu, Peter J},
  journal={Journal of machine learning research},
  volume={21},
  number={140},
  pages={1--67},
  year={2020}
}

@article{ahmadian2023intriguing,
  title={Intriguing properties of quantization at scale},
  author={Ahmadian, Arash and Dash, Saurabh and Chen, Hongyu and Venkitesh, Bharat and Gou, Zhen Stephen and Blunsom, Phil and {\"U}st{\"u}n, Ahmet and Hooker, Sara},
  journal={Advances in Neural Information Processing Systems},
  volume={36},
  pages={34278--34294},
  year={2023}
}

@ARTICLE{823976,
  author={Wannamaker, R.A. and Lipshitz, S.P. and Vanderkooy, J. and Wright, J.N.},
  journal={IEEE Transactions on Signal Processing}, 
  title={A theory of nonsubtractive dither}, 
  year={2000},
  volume={48},
  number={2},
  pages={499-516},
  doi={10.1109/78.823976}}

@inproceedings{nagel2020up,
  title={Up or down? adaptive rounding for post-training quantization},
  author={Nagel, Markus and Amjad, Rana Ali and Van Baalen, Mart and Louizos, Christos and Blankevoort, Tijmen},
  booktitle={International conference on machine learning},
  pages={7197--7206},
  year={2020},
  organization={PMLR}
}

@article{chee2023quip,
  title={Quip: 2-bit quantization of large language models with guarantees},
  author={Chee, Jerry and Cai, Yaohui and Kuleshov, Volodymyr and De Sa, Christopher M},
  journal={Advances in neural information processing systems},
  volume={36},
  pages={4396--4429},
  year={2023}
}

@article{zhang2026spfq,
  title={Spfq: A stochastic algorithm and its error analysis for neural network quantization},
  author={Zhang, Jinjie and Saab, Rayan},
  journal={Information and Inference: A Journal of the IMA},
  volume={15},
  number={2},
  pages={iaag011},
  year={2026},
  publisher={Oxford University Press}
}

@article{ashkboos2024quarot,
  title={Quarot: Outlier-free 4-bit inference in rotated llms},
  author={Ashkboos, Saleh and Mohtashami, Amirkeivan and Croci, Maximilian L and Li, Bo and Cameron, Pashmina and Jaggi, Martin and Alistarh, Dan and Hoefler, Torsten and Hensman, James},
  journal={Advances in Neural Information Processing Systems},
  volume={37},
  pages={100213--100240},
  year={2024}
}

@article{dettmers2022gpt3,
  title={Gpt3. int8 (): 8-bit matrix multiplication for transformers at scale},
  author={Dettmers, Tim and Lewis, Mike and Belkada, Younes and Zettlemoyer, Luke},
  journal={Advances in neural information processing systems},
  volume={35},
  pages={30318--30332},
  year={2022}
}

@article{egiazarian2024extreme,
  title={Extreme compression of large language models via additive quantization},
  author={Egiazarian, Vage and Panferov, Andrei and Kuznedelev, Denis and Frantar, Elias and Babenko, Artem and Alistarh, Dan},
  journal={arXiv preprint arXiv:2401.06118},
  year={2024}
}

@article{chen2023quantized,
  title={Quantized low-rank multivariate regression with random dithering},
  author={Chen, Junren and Wang, Yueqi and Ng, Michael K},
  journal={IEEE Transactions on Signal Processing},
  volume={71},
  pages={3913--3928},
  year={2023},
  publisher={IEEE}
}

@article{dirksen2025subspace,
  title={Subspace and DOA estimation under coarse quantization},
  author={Dirksen, Sjoerd and Li, Weilin and Maly, Johannes},
  journal={IEEE Transactions on Information Theory},
  year={2025},
  publisher={IEEE}
}

@article{banner2018aciq,
  title={Aciq: Analytical clipping for integer quantization of neural networks},
  author={Banner, Ron and Nahshan, Yury and Hoffer, Elad and Soudry, Daniel},
  year={2018}
}
